%% file: manuscript.v5.TEX
\documentclass[12pt]{article}
\usepackage{natbib}
\usepackage{graphicx,multirow,
setspace,amsmath,amssymb, mathrsfs,subfigure,url,color,algorithm,algcompatible}
\newenvironment{frcseries}{\fontfamily{frc}\selectfont}{}

\usepackage{epstopdf}
\usepackage{booktabs}
\input self-define-Heng.tex

\input self-define-HL.tex
\renewcommand{\log}{{\rm log}}

\newtheorem{thm}{Theorem}
\newtheorem{lem}{Lemma}

\newtheorem{rmk}{Remark}

\newcommand{\lan}{\left\langle}
\newcommand{\ran}{\right\rangle}

\newtheorem{assumption}{Assumption}

\DeclareMathOperator*{\argmin}{arg\,min}
\begin{document}

\title{Renewable Lasso without Batch-Number Constraints: A Gradient-Enhanced Approach}

\author{
Junzhuo Gao$^{a}$, Ling Peng$^{b,c}$, Xu Guo$^{d}$ and Heng Lian$^{e,a}$
\\
\begin{tabular}{l} 
{\small\it$^a$ Department of Mathematics, City University of Hong Kong, Hong Kong, China}\\
{\small\it$^b$ School of Statistics and Data Science, Jiangxi University of Finance and Economics,}\\
{\small\it $\;\; $ Nanchang, Jiangxi, China}\\
{\small\it$^c$ Philosophy and Social Sciences Laboratory of Data Science in Finance and Economics}\\
{\small\it $\;\; $ at the Ministry of Education, Jiangxi University of Finance and Economics,}\\
{\small\it $\;\; $ Nanchang, Jiangxi, China}\\
{\small\it$^d$ School of Statistics, Beijing Normal University, Beijing, China}\\
{\small\it$^e$ CityUHK Shenzhen Research Institute, Shenzhen, China}\\
\end{tabular}
}

\date{}          
\maketitle
\begin{abstract} We study online estimation for high-dimensional generalized linear models with streaming data.
First, for the non-distributed setting, we propose a gradient-enhanced surrogate loss that approximates the cumulative loss using only historical summaries, which modifies and improves upon the existing renewable estimation approach for the same model in the high-dimensional setting, and removes the batch-number constraint in previous studies. We then extend the method to distributed streaming data under the master-client architecture, where batches are partitioned across sites and only summaries (gradient vectors) are exchanged. Instead of directing applying the popular method of \cite{jordan2019communication} to the surrogate quadratic loss, our adjusted approach does not require the clients to compute the full surrogate loss. We derive non-asymptotic error bounds under the high-dimensional scaling, without the stringent constraint on the number of batches in the previous studies. Simulation results under linear and logistic models, together with a real-data application, show improved accuracy over existing renewable estimators.

\noindent\textbf{Keywords:}  communication-efficient learning; distributed streaming data; generalized linear models; online estimation; surrogate loss.
\end{abstract}

\section{Introduction}\label{sec:intro}

\subsection{Background and literature review}

Streaming data, which arrive continuously in time and are often processed in batches, are now ubiquitous in modern applications such as online platforms, financial transactions, sensor networks, and medical monitoring systems.
In these settings, data volume grows rapidly and decisions must be updated sequentially as new batches arrive, while previous batches may already have been removed from storage.

These challenges naturally motivate online estimation methods, whose goal is to update model parameters sequentially when a new batch arrives, while avoiding storage and repeated access to the full historical raw data \citep{schifano2016online,wang2018online}.
Compared with classical offline methods, online procedures are designed to balance statistical accuracy, memory usage, and update latency.
  Stochastic gradient descent \citep[SGD]{robbins1951stochastic} is a computationally efficient and scalable optimization method, and is widely used in large-scale online learning.
A rich family of variants has since been developed, using momentum  \citep{polyak1964some}, Nesterov's accelerated gradient \citep{nesterov1983method}, and adaptive learning rate (AdaGrad \citep{duchi2011adaptive}; Adam \citep{kingma2014adam}).
However, SGD-type methods can be sensitive to step-size specification, and convergence may become unstable when the learning rate is misspecified.

To deal with the above-mentioned challenges, \cite{LuoSong2020renewable} developed an online estimation framework for generalized linear models, which they call renewable estimation.
Within this framework, likelihood-based estimators are updated using current-batch data and summaries of previous batches, without revisiting historical raw data.
Subsequent extensions to renewable estimation include  \citep{pan2024renewable,luo2023statistical,luo2023real}.
In particular, high-dimensional renewable estimation has been investigated \citep{luo2023online,  
xie2025statistical,rao2025estimation}.

Beyond temporal streaming, modern data pipelines are often distributed: data are generated and stored across hospitals, branch networks, mobile devices, and edge servers, making full centralization of raw records often impractical.
This architecture calls for collaborative learning procedures that update model parameters across sites while respecting locality constraints on data storage and computation.
Communication-efficient estimation has therefore received substantial attention for estimation and inference tasks  \citep{jordan2019communication, zhao2020debiasing, wang2020communication, fan2021communication, TanBatteyZhou2022commqr,duan2022heterogeneity}.

\subsection{Motivation and contributions}

In this paper, we study high-dimensional generalized linear models under an online, possibly also distributed, setting, where observations arrive sequentially in batches across multiple sites and only some summary statistics can be transmitted.
In many practical multi-site systems, privacy regulations, storage costs, and network bandwidth constraints prevent direct transfer of raw data to a central server.
Accordingly, the central methodological challenge is to design update rules that rely only on historical summaries while keeping statistical performance close to that of an ideal centralized estimator.

The contributions of this paper are twofold.
(1) First, for high-dimensional streaming generalized linear models, we revisit the formulation in \cite{luo2023online} and improve its approximation to the cumulative loss.
Methodologically, we introduce a gradient-enhanced surrogate loss that increases approximation accuracy with only a modest increase in stored summary statistics.
This estimator can be viewed as a simple refinement of \cite{luo2023online}. Using a direct Taylor expansion, we obtain a more accurate surrogate approximation to the cumulative loss. 
Theoretically, this makes it possible to remove the stringent restriction on the number of batches (in the high-dimensional setting) in the existing studies.
Empirically, simulation results show that the proposed estimator achieves higher statistical accuracy.
(2) Second, we extend this framework to distributed online settings, where data are partitioned across sites and only summary statistics can be exchanged.
Methodologically, we develop a communication-efficient distributed online estimator based on an approximation to the cumulative loss, requiring transmission of gradients only. In the distributed setting, our approach is closely related to but different from the framework of \cite{jordan2019communication}, which is designed for a fixed distributed sample. Our method is tailored to renewable estimation using explicit plug-in Hessian matrices on the master machine, which is natural in the online setting since Hessian matrices are already required for loss approximation in the non-distributed case. It is worth noting that the proposed method requires computing and storing the Hessian matrices on the master machine only. 


\subsection{Notations and organization}

For a vector $\va:=(a_1,\ldots,a_p)\trans\in\mathbb R^p$, we take $\|\va\|_r:=(|a_1|^r+\ldots+|a_p|^r)^{1/r}$ for $1\le r<\infty$ ($\|\va\|_2$ is simply denoted as $\|\va\|$), $\|\va\|_0:=\sum_{j=1}^p\mathbf1\{a_j\neq0\}$, and $\|\va\|_{\infty}:=\max_{1\le j\le p}|a_j|$. For a matrix $\vA\in\mathbb R^{p\times p}$ with entries $a_{ij}$, $\|\vA\|_{\max}:=\max_{i,j\le p}|a_{ij}|$.
The sub-Gaussian norm of a random variable $X$ is defined as $\|X\|_{\psi_2}:=\inf\{t>0:\mathbb E[\exp\{X^2/t^2\}]\le2\}$ and the sub-Gaussian norm of a random vector $\vX$ is defined as $\|\vX\|_{\psi_2}:=\sup_{\|\vu\|_2=1}\|\lan\vu, \vX\ran\|_{\psi_2}$.
For two positive sequences $\{a_n\}$ and $\{b_n\}$, we write $a_n\lesssim b_n$ or, equivalently, $b_n\gtrsim a_n$ to mean that there exists a constant $ C>0$ such that $a_n/b_n\le C$ for all $n$, and $a_n\asymp b_n$ means $a_n\lesssim b_n$ and $b_n\lesssim a_n$ at the same time. Finally, $C$ denotes a generic positive constant that can take different values even on the same line.

This paper is organized as follows. 
Section~\ref{sec:online} introduces the gradient-enhanced renewable estimator and establishes its convergence rates, emphasizing the difference from the previous study on the same model. 
Section~\ref{sec:dist-online} extends the method to distributed online settings, demonstrating that the same rates as the non-distributed case can be achieved under mild assumptions. 
Sections~\ref{sec:num} and \ref{sec:dataapp} report some simulation studies and a real data application, respectively. 
Section~\ref{sec:ext} concludes the paper with some discussions. 
All technical proofs are relegated to the appendix.

\section{Gradient-enhanced renewable estimation}\label{sec:online}

We assume the conditional distribution of the response $y$ comes from an exponential family with density $f(y;\theta,\phi)=b(y,\phi)\exp\{(y\theta-g(\theta))/\phi\}$, with $\theta=\vx\trans\vbeta_0$ and $\vbeta_0$ denotes the true parameter in the generalized linear model (GLM). $b$ and $g$ are known functions and we assume the nuisance parameter $\phi=1$ without loss of much generality since our focus is on estimating $\vbeta_0$. We note that $g(\theta+t)-g(\theta)$ (as a function of $t$) is actually the cumulant generating function for the distribution. 
Let $(\vx_i,y_i)\in\mathbb R^p\times\mathbb R$ be i.i.d. copies of  $(\vx,y)$.
The (per-sample) loss is constructed as the negative log-likelihood $\ell_i(\vbeta)=g(\vx_i\trans\vbeta)-y_i\vx_i\trans\vbeta$.

In a single-machine streaming data setting, observations arrive sequentially in batches. Let $\calD^{(b)}$ denote the $b$-th batch data with size $n^{(b)}:=|\calD^{(b)}|$ and let $N^{(b)}:=\sum_{j=1}^b n^{(j)}$ be the cumulative sample size after observing $b$ batches.
The cumulative loss up to the $b$-th time point is given by $\bar\calL^{(b)}(\vbeta)=\sum_{j=1}^b\calL^{(j)}(\vbeta)/N^{(b)}$, where $\calL^{(j)}(\vbeta):=\sum_{i\in\calD^{(j)}}\ell_i(\vbeta)$.
We consider a high-dimensional regime where $p$ may be comparable to, or larger than, the cumulative sample size $N^{(b)}$. 
We also assume the true parameter $\vbeta_0\in\mathbb R^p$ has sparsity $\|\vbeta_0\|_0=s^*$, and $\ell_1$-regularization is used to stabilize estimation and exploit the sparse structure.
The standard \emph{offline} lasso estimator is
\be\label{eqn:offline}
\hat\vbeta^{(b)}=\argmin_{\boldsymbol\beta\in\mathbb{R}^p}\{\bar\calL^{(b)}(\vbeta)+\lambda^{(b)}\|\vbeta\|_1\},
\ee 
where $\lambda^{(b)}>0$ is the regularization parameter. 

With streaming data for which raw observations from earlier batches are unavailable due to system constraints, historical loss terms cannot be recomputed directly and thus \eqref{eqn:offline} is infeasible.
An online estimation method aims to update an estimator when $\calD^{(b)}$ arrives without revisiting the raw data in $\calD^{(1)},\ldots,\calD^{(b-1)}$.
For the first batch $b=1$, 
\be\label{eqn:b1}
\breve\vbeta^{(1)}=\argmin_{\boldsymbol\beta\in\mathbb{R}^p}\{\bar\calL^{(1)}(\vbeta)+\lambda^{(1)}\|\vbeta\|_1\}
\ee
is obtained using the standard lasso, which is the same as \eqref{eqn:offline}.
For $b\ge 2$, the cumulative loss can be decomposed as $\bar\calL^{(b)}(\vbeta)=\calL^{(b)}(\vbeta)/N^{(b)}+\sum_{j=1}^{b-1}\calL^{(j)}(\vbeta)/N^{(b)}$, where the first term is directly available, while the rest must be approximated using historical batch-level summaries (e.g., gradient vectors and Hessian matrices).

We use a straightforward quadratic approximation based on Taylor's expansion to synthesize historical information as 
\begin{align*}
    \sum_{j=1}^{b-1}\calL^{(j)}(\vbeta)\approx&\ \sum_{j=1}^{b-1}\Big\{\calL^{(j)}(\breve\vbeta^{(j)})+(\vbeta-\breve\vbeta^{(j)})\trans\nabla\calL^{(j)}(\breve\vbeta^{(j)})\notag\\
    &\ \ \ \ \ \ \ \ +\frac{1}{2}(\vbeta-\breve\vbeta^{(j)})\trans\nabla^2\calL^{(j)}(\breve\vbeta^{(j)})(\vbeta-\breve\vbeta^{(j)})\Big\},
\end{align*}
where $\nabla\calL^{(j)}(\vbeta)=\sum_{i\in\calD^{(j)}}\{g'(\vx_i\trans\vbeta)-y_i\}\vx_i$ and $\nabla^2\calL^{(j)}(\vbeta)=\sum_{i\in\calD^{(j)}}g''(\vx_i\trans\vbeta)\vx_i\vx_i\trans$.
This leads to the surrogate loss 
\be\label{eq:online-l1}
\breve\calL^{(b)}(\vbeta)&=&\ \frac{1}{N^{(b)}}\Big\{\calL^{(b)}(\vbeta)+\vbeta\trans\sum_{j=1}^{b-1}\nabla\calL^{(j)}(\breve\vbeta^{(j)})\nonumber\\
	&&\ \ \ \ \ \ \ \ \ +\frac{1}{2}\sum_{j=1}^{b-1}(\vbeta-\breve\vbeta^{(j)})\trans\nabla^2\calL^{(j)}(\breve\vbeta^{(j)})(\vbeta-\breve\vbeta^{(j)})\Big\},
\ee 
and then we define 
\bse\breve\vbeta^{(b)}=\argmin_{\boldsymbol\beta\in\mathbb{R}^p}\{\breve\calL^{(b)}(\vbeta)+\lambda^{(b)}\|\vbeta\|_1\}.
\ese
The expression \eqref{eq:online-l1} above can be written in a slightly different form as
\be\label{eq:online-l2}
\breve\calL^{(b)}(\vbeta)&=&    \frac{1}{N^{(b)}}\Big\{\calL^{(b)}(\vbeta)+\vbeta\trans\sum_{j=1}^{b-1}\left\{ \nabla\calL^{(j)}(\breve\vbeta^{(j)})+\nabla^2\calL^{(j)}(\breve\vbeta^{(j)})(\breve\vbeta^{(b-1)}-\breve\vbeta^{(j)})\right\}\nonumber\\
	&&\ \ \ \ \ \ \ \ \ +\frac{1}{2}\sum_{j=1}^{b-1}(\vbeta-\breve\vbeta^{(b-1)})\trans\nabla^2\calL^{(j)}(\breve\vbeta^{(j)})(\vbeta-\breve\vbeta^{(b-1)})\Big\}+{\rm const},
\ee 
where $`{\rm const}'$ indicates a term that does not involve $\vbeta$.

Using \eqref{eq:online-l1} with $b$ replaced by $b-1$, it is easy to see that $\breve\vbeta^{(b-1)}$ satisfies the first-order optimality condition
\be\label{eqn:kkt}
 \sum_{j=1}^{b-1}\left\{ \nabla\calL^{(j)}(\breve\vbeta^{(j)})+\nabla^2\calL^{(j)}(\breve\vbeta^{(j)})(\breve\vbeta^{(b-1)}-\breve\vbeta^{(j)})\right\}+\lambda^{(b-1)} N^{(b-1)}\vxi=0,
\ee
with some $\vxi$ satisfying $\|\vxi\|_\infty\le 1$.
Thus, for the unpenalized case ($\lambda^{(b-1)}=0$ in \eqref{eqn:kkt}), which is reasonable only for low-dimensional models,  by plugging \eqref{eqn:kkt} into \eqref{eq:online-l2}, one can equivalently use the surrogate loss
\be\label{eq:online-l3}
\breve\calL^{(b)}(\vbeta)&=&    \frac{1}{N^{(b)}}\Big\{\calL^{(b)}(\vbeta)+\frac{1}{2}\sum_{j=1}^{b-1}(\vbeta-\breve\vbeta^{(b-1)})\trans\nabla^2\calL^{(j)}(\breve\vbeta^{(j)})(\vbeta-\breve\vbeta^{(b-1)})\Big\},
\ee 

For the penalized GLM, \cite{luo2023online} argued that  $$\|\sum_{j=1}^{b-1}\left\{ \nabla\calL^{(j)}(\breve\vbeta^{(j)})+\nabla^2\calL^{(j)}(\breve\vbeta^{(j)})(\breve\vbeta^{(b-1)}-\breve\vbeta^{(j)})\right\}\|_\infty=O(\lambda^{(b-1)} N^{(b-1)})$$ is small and can be ignored, and proposed to still use \eqref{eq:online-l3} in the penalized case. This strategy is followed by all subsequent studies on high-dimensional renewable estimation \citep{HanXieLiuSunHuangJiangKong2024,xie2025statistical,rao2025estimation}. However, upon reexamining how \eqref{eq:online-l3} was derived as we reviewed above, such an argument for ignoring the gradient term looks ungrounded. Technically, we will argue that use of \eqref{eq:online-l3} leads to slower rates which in turn leads to the unreasonable requirement that the number of batches considered should satisfy $b=o(\log N^{(b)})$ in all the above-mentioned studies in order to achieve (nearly) optimal rates, although the number of batches can be quite large in their simulation studies. We will demonstrate that, by using \eqref{eq:online-l1}, such a constraint on the number of batches can be removed theoretically, and our simulation results also show numerical improvements empirically. 

In implementation, using \eqref{eq:online-l3} as in \cite{luo2023online} requires storing $\breve\vbeta^{(b-1)}$ and $\sum_{j=1}^{b-1}\nabla^2\calL^{(j)}(\breve\vbeta^{(j)})$, our method based on \eqref{eq:online-l1} requires storing $\sum_{j=1}^{b-1}\nabla\calL^{(j)}(\breve\vbeta^{(j)})$, $\sum_{j=1}^{b-1}\nabla^2\calL^{(j)}(\breve\vbeta^{(j)})\breve\vbeta^{(j)}$, and $\sum_{j=1}^{b-1}\nabla^2\calL^{(j)}(\breve\vbeta^{(j)})$.
This implies modest extra storage requirement since the dominating storage cost is associated with the matrix $\sum_{j=1}^{b-1}\nabla^2\calL^{(j)}(\breve\vbeta^{(j)})$.
We refer to the estimator $\breve\vbeta^{(b)}$ in \eqref{eq:online-l1} as the \emph{gradient-enhanced} online estimator.

To obtain $\breve\vbeta^{(b)}$ numerically, we adopt the iterative shrinkage and thresholding algorithm (ISTA), which updates the estimates iteratively using
\begin{align}
    \breve\vbeta^{(b)}\leftarrow&\ \breve\vbeta^{(b)}-\frac{\eta}{N^{(b)}}\nabla\breve\calL^{(b)}(\breve\vbeta^{(b)})\label{eq:update_11}\\ \breve\beta_l^{(b)}\leftarrow&\ \mathcal{S}(\breve\beta_l^{(b)};\eta\lambda^{(b)}), l=1,\ldots,p,\label{eq:update_12}
\end{align}
where $\eta$ is the step size and $\calS(\beta;a):=\operatorname{sgn}(\beta)\max(|\beta|-a,0)$ is the soft-thresholding operator.
In the above, \eqref{eq:update_11} is a simple gradient descent step, and  \eqref{eq:update_12} uses soft-thresholding to produce a sparse solution.
In practice, the tuning parameter $\lambda^{(b)}$ is selected by a rolling-origin recalibration scheme as in \cite{luo2023online}. 
More specifically, as shown in Algorithm~\ref{alg:1}, upon the arrival of batch $b$, we use all previous batches to find the estimator and select the tuning parameter by computing the test error on the current batch. 

\begin{algorithm}[htbp]
    \setstretch{0.95}
    \caption{Gradient-enhanced online estimation}\label{alg:1}
    \begin{algorithmic}
        \STATE\textbf{Input:} batches $\calD^{(j)}$, $j=1,\ldots,B$; candidate grid $S_\lambda$ for the regularization parameter.
        \STATE\textbf{Output:} online estimators $\breve\vbeta^{(j)}$, $j=1,\ldots,B$.
        \STATE Compute $\breve\vbeta_{\lambda}^{(1)}$ for all $\lambda\in S_\lambda$, and set  $\breve\vbeta^{(1)}=\breve\vbeta_{\lambda^{(1)}}^{(1)}$ with $\lambda^{(1)}$ found using cross-validation.
        \FOR{$b=2,\ldots,B$}
    

                \STATE Compute $\breve\vbeta_{\lambda}^{(b)}$ for all $\lambda\in S_\lambda$.
                \STATE Compute $\mathrm{MSPE}(\lambda)=\sum_{i\in\calD^{(b)}}\{y_i-g'(\vx_i\trans\breve\vbeta_{\lambda}^{(b-1)})\}^2$ for all $\lambda\in S_\lambda$. 
                \STATE Set  $\breve\vbeta^{(b)}=\breve\vbeta_{\lambda^{(b)}}^{(b)}$ with $\lambda^{(b)}=\arg\min_{\lambda\in S_\lambda}\mathrm{MSPE}(\lambda)$.
         \ENDFOR
    \end{algorithmic}
\end{algorithm}

We derive non-asymptotic error bounds for the proposed renewable estimator using the following assumptions. 

\begin{assumption}\label{ass:design-noise}
    The covariate vector $\vx\in\mathbb R^p$ is sub-Gaussian with parameter $\sigma_x$.
    Let $\vSigma:=\mathbb E(\vx\vx\trans)$ and assume $\vSigma\succeq k_1 \vI_p$ for some $k_1>0$.
    Here $\sigma_x$ and $k_1$ are assumed to be constants.
\end{assumption}

\begin{assumption}\label{ass:smooth}
    The functions $g''$ and $g'''$ are bounded by a constant $L_g>0$.
    Moreover, on any fixed compact interval, $g''$ is bounded away from zero.
\end{assumption}

Assumption~\ref{ass:design-noise} specifies that the covariate vector is sub-Gaussian, while Assumption~\ref{ass:smooth} imposes smoothness conditions on the link function. Assumption~\ref{ass:smooth} does not hold for all generalized linear models, but it automatically holds for least squares regression and logistic regression. Boundedness of the derivatives of $g$, while may be relaxed, can greatly simplify the proofs and thus frequently adopted. 
These conditions are standard in the high-dimensional generalized linear model literature \citep{negahban2009unified,ning2017general,tian2023transfer}.

\begin{thm}\label{thm:online-lasso}
    Suppose Assumptions~\ref{ass:design-noise}--\ref{ass:smooth} hold.
    Let $s^*:=\|\vbeta_0\|_0$ and $\breve\vbeta^{(b)}, b\le B$ be defined in \eqref{eq:online-l1} with tuning parameter $\lambda^{(b)}=C\sqrt{\log(p\vee N^{(b)})/N^{(b)}}$ for a sufficiently large constant $C>0$.
    Assume that batch sizes satisfy $n^{(b)}\gtrsim s^*\log(p)$ for all $b\le B$, and that
    \begin{equation}\label{eq:scale1}
        \max_{2\le b\le B}\frac{s^*\log(p\vee N^{(b)})(1+\log(N^{(b-1)}/N^{(1)}))}{\sqrt{N^{(b)}}}=o(1).
    \end{equation}
	Then with probability at least 
    $1-\sum_{b=1}^B\left( \exp\{-C\log(p\vee N^{(b)})\}+\exp\{-Cn^{(b)}\}\right)$
	\begin{equation*}
	   \|\breve\vbeta^{(b)}-\vbeta_0\|_2\le C\sqrt{s^*}\,\lambda^{(b)},
	   \quad\|\breve\vbeta^{(b)}-\vbeta_0\|_1\le Cs^*\lambda^{(b)}, \quad\forall\,b\le B.
	\end{equation*}
\end{thm}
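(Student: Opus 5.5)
We argue by induction on $b$, using the standard lasso analysis for $M$-estimators with a restricted-strong-convexity (RSC) condition. For $b=1$ the estimator $\breve\vbeta^{(1)}$ is the ordinary $\ell_1$-penalized GLM estimator. The usual argument gives the stated bounds with $\lambda^{(1)}\asymp\sqrt{\log(p\vee N^{(1)})/N^{(1)}}$. It relies on two ingredients: a sub-Gaussian maximal inequality for $\|\nabla\bar\calL^{(1)}(\vbeta_0)\|_\infty$, and RSC of $\calL^{(1)}$ on the cone, valid because $n^{(1)}\gtrsim s^*\log p$ and $g''$ is bounded away from zero on compacts. These hold with probability $1-\exp\{-C\log(p\vee N^{(1)})\}-\exp\{-Cn^{(1)}\}$. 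For the induction step, assume $\|\breve\vbeta^{(j)}-\vbeta_0\|_1\le Cs^*\lambda^{(j)}$ and $\|\breve\vbeta^{(j)}-\vbeta_0\|_2\le C\sqrt{s^*}\lambda^{(j)}$ for all $j<b$. Set $\vDelta=\breve\vbeta^{(b)}-\vbeta_0$. By the basic inequality (or the KKT condition) for the convex surrogate $\breve\calL^{(b)}$, it suffices to show two things:
\begin{itemize}
\item[(i)] $\|\nabla\breve\calL^{(b)}(\vbeta_0)\|_\infty\le\lambda^{(b)}/2$;
\item[(ii)] $\breve\calL^{(b)}$ satisfies RSC on the cone $\{\|\vDelta_{S^c}\|_1\le3\|\vDelta_S\|_1\}$ with a constant curvature.
\end{itemize}
Together these give $\|\vDelta\|_2\lesssim\sqrt{s^*}\lambda^{(b)}$ and $\|\vDelta\|_1\lesssim s^*\lambda^{(b)}$ with a constant $C$ that does not grow with $b$. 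Keeping the constant from growing is the whole point of the argument.

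For (ii), the surrogate Hessian is $N^{(b)-1}\{\nabla^2\calL^{(b)}(\vbeta)+\sum_{j<b}\nabla^2\calL^{(j)}(\breve\vbeta^{(j)})\}$, a sum of per-batch Hessians. Each is evaluated at a point within $O(\sqrt{s^*}\lambda^{(j)})$ of $\vbeta_0$. We apply per-batch RSC at $\vbeta_0$, which uses $n^{(j)}\gtrsim s^*\log p$ and costs $\exp\{-Cn^{(j)}\}$. We then perturb it: since $|g'''|\le L_g$, we have $|g''(\vx_i\trans\breve\vbeta^{(j)})-g''(\vx_i\trans\vbeta_0)|\le L_g|\vx_i\trans(\breve\vbeta^{(j)}-\vbeta_0)|$. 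Alternatively, RSC can be applied directly in a neighborhood by truncation. This yields curvature $\gtrsim k_1$ uniformly.

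For (i), Taylor-expand each historical gradient around $\vbeta_0$:
\[
\nabla\calL^{(j)}(\breve\vbeta^{(j)})+\nabla^2\calL^{(j)}(\breve\vbeta^{(j)})(\vbeta_0-\breve\vbeta^{(j)})=\nabla\calL^{(j)}(\vbeta_0)+\vr^{(j)},
\]
where, by the integral form of the remainder and $|g'''|\le L_g$,
\[
\|\vr^{(j)}\|_\infty\lesssim \max_{i,k}|x_{ik}|\sum_{i\in\calD^{(j)}}\{\vx_i\trans(\breve\vbeta^{(j)}-\vbeta_0)\}^2 .
\]
Summing over $j<b$ together with $\nabla\calL^{(b)}(\vbeta_0)$ gives
\[
N^{(b)}\nabla\breve\calL^{(b)}(\vbeta_0)=\nabla\big(N^{(b)}\bar\calL^{(b)}\big)(\vbeta_0)+\sum_{j<b}\vr^{(j)} .
\]
The leading term is a sum of $N^{(b)}$ i.i.d. mean-zero sub-exponential terms, so it is $O(\sqrt{N^{(b)}\log(p\vee N^{(b)})})$ with probability $1-\exp\{-C\log(p\vee N^{(b)})\}$. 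This is exactly the step where dropping the gradient, as in \eqref{eq:online-l3}, would leave an $O(\lambda^{(b-1)}N^{(b-1)})$ term that accumulates. For the remainders we use $\max|x_{ik}|\lesssim\sqrt{\log(p\vee N^{(b)})}$ and the restricted upper eigenvalue bound
\[
\sum_{i\in\calD^{(j)}}(\vx_i\trans\vv)^2\lesssim n^{(j)}\big(\|\vv\|_2^2+\|\vv\|_1^2\log p/n^{(j)}\big).
\]
Together with the induction hypothesis this gives $\|\vr^{(j)}\|_\infty\lesssim\sqrt{\log(p\vee N)}\,n^{(j)}s^*\log(p\vee N)/N^{(j)}$. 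Summing,
\[
\sum_{j<b}\frac{n^{(j)}}{N^{(j)}}\le 1+\log\big(N^{(b-1)}/N^{(1)}\big),
\]
since $n^{(j)}/N^{(j)}\le\log(N^{(j)}/N^{(j-1)})$. Hence
\[
\frac{\|\sum_{j<b}\vr^{(j)}\|_\infty}{N^{(b)}\lambda^{(b)}}\lesssim\frac{s^*\log(p\vee N^{(b)})\,\{1+\log(N^{(b-1)}/N^{(1)})\}}{\sqrt{N^{(b)}}},
\]
which is $o(1)$ by \eqref{eq:scale1}. Condition (i) therefore holds for large $C$, and the induction closes with the same constant.

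The main obstacle is making the constant non-inflating across the induction. The remainder bound at step $b$ uses the constant $C$ from earlier steps, and we must check that \eqref{eq:scale1} absorbs $C^2$ times the $o(1)$ term uniformly over $b\le B$. A second delicate point is handling the $\max|x_{ik}|$ factor sharply: a cruder bound would add an extra $\sqrt{\log}$, which we would avoid by bounding $\sum_i |x_{ik}|(\vx_i\trans\vv)^2$ directly via sub-Gaussian concentration. The probability statement then follows by a union bound over $b\le B$ of the events above.
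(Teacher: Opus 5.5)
Your proposal is correct and follows essentially the paper's proof: induction over $b$, the cone condition, RSC for the historical Hessians $\nabla^2\calL^{(j)}(\breve\vbeta^{(j)})$ via the induction hypothesis, and the split of $N^{(b)}\nabla\breve\calL^{(b)}(\vbeta_0)$ into the full-data score plus Taylor remainders quadratic in $\breve\vbeta^{(j)}-\vbeta_0$, summed with $\sum_{j<b}n^{(j)}/N^{(j)}\le 1+\log(N^{(b-1)}/N^{(1)})$. Two small corrections. First, the current-batch Hessian $\nabla^2\calL^{(b)}(\vbeta_0+t\vDelta^{(b)})$ is not known a priori to be evaluated near $\vbeta_0$, so the $g'''$-perturbation argument is circular for that term; you genuinely need the localization the paper uses, namely rescaling $\vDelta^{(b)}$ by $\zeta_b=\min\{1,\tau'/\|\vDelta^{(b)}\|_2\}$ and using that $t\mapsto\langle\nabla\breve\calL^{(b)}(\vbeta_0+t\vDelta^{(b)}),\vDelta^{(b)}\rangle$ is nondecreasing. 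Second, the crude bound $\max_{i,k}|x_{ik}|\lesssim\sqrt{\log(p\vee N^{(b)})}$ already reproduces \eqref{eq:scale1} exactly in your own calculation, so no sharper treatment of that factor is needed.
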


\begin{rmk}
    As mentioned before, an important difference between our theoretical result and Theorem 1 of \cite{luo2023online} is that our obtained rate is $C\sqrt{s^*}\,\lambda^{(b)}$, for example for the $\ell_2$ error, with a constant $C$. In particular, here $C$ is independent of batch number $b$. In \cite{luo2023online}, the rate is $C^b\sqrt{s^*}\,\lambda^{(b)}$ where the factor $C^b$ ($C$ raised to the power $b$) increases exponentially fast. Thus to put $C^b$ under control, $b$ can only be no larger than a constant (\cite{luo2023online} mentioned allowing $b=o(\log N^{(b)})$ so that $C^b$ increase slower than $(N^{(b)})^{\ep}$, $\forall \ep>0$). 

    Technically, the reason for this improvement is due to the use of direct quadratic approximation without ignoring the gradient as in \cite{luo2023online}. More specifically, the proof of Theorem \ref{thm:online-lasso} proceeds by considering $b=1,\ldots, B$ sequentially and tracking the effect of estimation error of $\breve\vbeta^{(j)}$, $j\le b-1$ on the error of $\breve\vbeta^{(b)}$.
    In Step 3 of the proof of Theorem \ref{thm:online-lasso}, due to the direct use of Taylor's expansion, it is easy to see the term $\vR^{(j)}$ defined in \eqref{eqn:R}, which shows the main effect of  $\breve\vbeta^{(j)}$, is proportional to $\|\breve\vbeta^{(j)}-\vbeta_0\|^2$. Under suitable assumptions, this would become a higher order term compared to the main stochastic error term. On the other hand, when some gradient terms are ignored, the effect of $\breve\vbeta^{(b-1)}$ would enter the bound as $\|\breve\vbeta^{(b-1)}-\vbeta_0\|$ (not squared) and such a term would make $\|\breve\vbeta^{(b)}-\vbeta_0\|$ proportional to $\|\breve\vbeta^{(b-1)}-\vbeta_0\|$ in the theoretical analysis, and the multiplicative factor in front of $\|\breve\vbeta^{(b-1)}-\vbeta_0\|$ makes the error bound increase exponentially fast.
    
\end{rmk}

\begin{rmk}
    Suppose all batches have the same size, so $n^{(b)}\equiv n^{(1)}$ and $N^{(b)}/N^{(1)}=b$.
    Then \eqref{eq:scale1} reduces to $\max_{2\le b\le B} s^*\max\{\log(b),\log(p),\log(n^{(1)})\}\,\log(b)/\sqrt{bn^{(1)}}=o(1)$, which is implied by $s^*\log(p\vee n^{(1)})/\sqrt{n^{(1)}}=o(1)$.
\end{rmk}

\section{Distributed online estimation}\label{sec:dist-online}
 We consider a distributed architecture with a total of $K$ machines, where the first machine is designated as the master (or called server), which can communicate with all other machines, called workers (or called clients). 
 The raw data should remain on local machines due to communication and storage constraints. The master updates the global estimator using only summary statistics transmitted from other machines (e.g., gradient vectors), rather than aggregating all raw observations.

At each time point $b$, the $k$-th machine receives a new local batch $\calD_k^{(b)}$ of size $n_k^{(b)}:=|\calD_k^{(b)}|$.
Define $N_K^{(b)}:=\sum_{k=1}^K\sum_{j=1}^b n_k^{(j)}$ and $\calL_k^{(b)}(\vbeta):=\sum_{i\in\mathcal D_k^{(b)}}\ell_i(\vbeta)$.
Then, reusing some notations in the non-distributed setting,  the cumulative loss is $\bar\calL^{(b)}(\vbeta)=\sum_{j=1}^b\sum_{k=1}^K\calL_k^{(j)}(\vbeta)/N_K^{(b)}$.
The corresponding centralized offline lasso estimator is $\hat\vbeta^{(b)}=\argmin_{\boldsymbol\beta\in\mathbb{R}^p}\{\bar\calL^{(b)}(\vbeta)+\lambda^{(b)}\|\vbeta\|_1\}$, where $\lambda^{(b)}>0$ is the regularization parameter.

A naive baseline method is the one-shot averaging estimator at each time point.
Specifically, each machine computes $\breve\vbeta_k^{(b)}$ as in Section~\ref{sec:online} and sends it to the master, which then forms $\bar\vbeta^{(b)}=\sum_{k=1}^K\sum_{j=1}^bn_k^{(j)}\breve\vbeta_k^{(b)}/N_K^{(b)}$. Although simple to implement, this one-shot estimator can be notably less efficient than the offline lasso estimator when the number of machines is large and the parameter is high-dimensional.

To overcome this issue, we propose a {\it communication-efficient} distributed online estimator.
 For $b=1$, given an initial estimator $\breve\vbeta^{(0)}$ (for concreteness, this would be the estimator in \eqref{eqn:b1} using data on the master machine), we construct a surrogate for the global loss as follows.
Specifically, we write
\begin{equation*}
    \bar\calL^{(1)}(\vbeta)=\frac{1}{N_K^{(1)}}\calL_1^{(1)}(\vbeta)+\frac{1}{N_K^{(1)}}\sum_{k=2}^K\calL_k^{(1)}(\vbeta),
\end{equation*}
and apply a second-order Taylor expansion to the non-master part $\sum_{k=2}^K\calL_k^{(1)}(\vbeta)$ around $\breve\vbeta^{(0)}$:
\begin{equation*}
    \sum_{k=2}^K\Big\{\calL_k^{(1)}(\breve\vbeta^{(0)})+(\vbeta-\breve\vbeta^{(0)})\trans\nabla\calL_k^{(1)}(\breve\vbeta^{(0)})+\frac{1}{2}(\vbeta-\breve\vbeta^{(0)})\trans\nabla^2\calL_k^{(1)}(\breve\vbeta^{(0)})(\vbeta-\breve\vbeta^{(0)})\Big\}.
\end{equation*}
Because transmitting $p\times p$ matrices $\nabla^2\calL_k^{(1)}(\breve\vbeta^{(0)})$, $k=1,\ldots,K$ from all machines to the master is costly, we use the approximation $\nabla^2\calL_k^{(1)}(\breve\vbeta^{(0)})\approx n_k^{(1)}\nabla^2\calL_1^{(1)}(\breve\vbeta^{(0)})/n_1^{(1)}$.
After discarding the terms independent of $\vbeta$, we define the surrogate loss
\begin{align}
	\breve\calL^{(1)}(\vbeta):=&\,\frac{1}{N_K^{(1)}}\Big\{\calL_1^{(1)}(\vbeta)+\vbeta\trans\sum_{k=2}^K\nabla\calL_k^{(1)}(\breve\vbeta^{(0)})\notag\\
	&\ \ \ \ \ \ \ +\frac{\sum_{k=2}^Kn_k^{(1)}}{2n_1^{(1)}}(\vbeta-\breve\vbeta^{(0)})\trans\nabla^2\calL_1^{(1)}(\breve\vbeta^{(0)})(\vbeta-\breve\vbeta^{(0)})\Big\},\label{eq:surrogate_b1}
\end{align}
and define $\breve\vbeta^{(1)}=\argmin_{\boldsymbol\beta\in\mathbb{R}^p}\{\breve\calL^{(1)}(\vbeta)+\lambda^{(1)}\|\vbeta\|_1\}$. Note that to construct the surrogate loss, the gradients $\nabla\calL_k^{(1)}(\breve\vbeta^{(0)})$ should be transmitted to the master for aggregation.

\begin{rmk}
	The surrogate loss above is closely related to but different from that proposed by \cite{jordan2019communication}.
	Indeed, we directly use $\nabla^2\calL_1^{(1)}(\breve\vbeta^{(0)})$ which is computed on the master, while this matrix can be regarded as implicitly computed using gradient difference in \cite{jordan2019communication}.
    To see the two approaches are closely related,
    we can use the approximation
	\begin{equation*}
		\frac{1}{2}(\vbeta-\breve\vbeta^{(0)})\trans\nabla^2\calL_1^{(1)}(\breve\vbeta^{(0)})(\vbeta-\breve\vbeta^{(0)})\approx\calL_1^{(1)}(\vbeta)-\calL_1^{(1)}(\breve\vbeta^{(0)})-(\vbeta-\breve\vbeta^{(0)})\trans\nabla\calL_1^{(1)}(\breve\vbeta^{(0)}),
	\end{equation*}
	and substitute this into \eqref{eq:surrogate_b1} to get
	\begin{equation}\label{eqn:jordanloss}
		\frac{1}{n_1^{(1)}}\calL_1^{(1)}(\vbeta)-\vbeta\trans\left\{\frac{1}{n_1^{(1)}}\nabla\calL_1^{(1)}(\breve\vbeta^{(0)})-\frac{1}{N_K^{(1)}}\sum_{k=1}^K\nabla\calL_k^{(1)}(\breve\vbeta^{(0)})\right\},
	\end{equation}
	which is exactly  the surrogate loss in \cite{jordan2019communication}.
	Compared to \eqref{eqn:jordanloss}, 
    the proposed \eqref{eq:surrogate_b1} appears more natural in the online setting, since direct quadratic approximation is already required even in the non-distributed setting.
\end{rmk}

For $b\ge 2$, we still approximate $\bar\calL^{(b)}(\vbeta)$ via a surrogate loss construction.
We decompose $\bar\calL^{(b)}(\vbeta)$ into three components as follows:
\begin{equation}\label{eq:three}
    \bar\calL^{(b)}(\vbeta)=\frac{1}{N_K^{(b)}}\calL_1^{(b)}(\vbeta)+\frac{1}{N_K^{(b)}}\sum_{k=2}^K\calL_k^{(b)}(\vbeta)+\frac{1}{N_K^{(b)}}\sum_{k=1}^K\sum_{j=1}^{b-1}\calL_k^{(j)}(\vbeta),
\end{equation}
where $\calL_1^{(b)}(\vbeta)$ corresponds to the new data at the master, $\sum_{k=2}^K\calL_k^{(b)}(\vbeta)$ corresponds to new data on non-master machines, and $\sum_{k=1}^K\sum_{j=1}^{b-1}\calL_k^{(j)}(\vbeta)$ aggregates historical data up to batch $b-1$ from all machines.
Let $\breve\vbeta^{(j)}, j=1,\ldots,b-1$ denote the proposed estimators up to batch $b-1$. 
Similarly, for $1\le j\le b-1$, with $\breve\vbeta^{(j)}$ available, we first approximate $\calL_k^{(j)}(\vbeta)$ in \eqref{eq:three} by
\begin{equation*}
    \calL_k^{(j)}(\breve\vbeta^{(j)})+(\vbeta-\breve\vbeta^{(j)})\trans\nabla\calL_k^{(j)}(\breve\vbeta^{(j)})+\frac{1}{2}(\vbeta-\breve\vbeta^{(j)})\trans\nabla^2\calL_k^{(j)}(\breve\vbeta^{(j)})(\vbeta-\breve\vbeta^{(j)}).
\end{equation*}
For $\calL_k^{(b)}(\vbeta)$ with $2\le k\le K$ in \eqref{eq:three}, we expand it at $\breve\vbeta^{(b-1)}$ as
\begin{equation*}
    \calL_k^{(b)}(\breve\vbeta^{(b-1)})+(\vbeta-\breve\vbeta^{(b-1)})\trans\nabla\calL_k^{(b)}(\breve\vbeta^{(b-1)})+\frac{1}{2}(\vbeta-\breve\vbeta^{(b-1)})\trans\nabla^2\calL_k^{(b)}(\breve\vbeta^{(b-1)})(\vbeta-\breve\vbeta^{(b-1)}).
\end{equation*}
Using the above approximations directly to construct the surrogate loss would require, for example, computing and transmitting $\nabla^2\calL_k^{(j)}(\vbeta)$ on all non-master machines. 
Thus, to further simplify the procedure, we approximate the Hessian matrices by the ones that can be computed on the master:
\begin{equation*}
    \nabla^2\calL_k^{(j)}(\breve\vbeta^{(j)})\approx\frac{n_k^{(j)}}{n_1^{(j)}}\nabla^2\calL_1^{(j)}(\breve\vbeta^{(j)}), \quad\nabla^2\calL_k^{(b)}(\breve\vbeta^{(b-1)})\approx\frac{n_k^{(b)}}{n_1^{(b)}}\nabla^2\calL_1^{(b)}(\breve\vbeta^{(b-1)}).
\end{equation*}
Based on these approximations, we finally can define the following surrogate loss function
\begin{align}\label{eqn:distsurr}
    \breve\calL^{(b)}(\vbeta):=&{}\frac{1}{N_K^{(b)}}\Big[\calL_1^{(b)}(\vbeta)+\vbeta\trans\Big\{\sum_{k=2}^K\nabla\calL_k^{(b)}(\breve\vbeta^{(b-1)})+\sum_{k=1}^K\sum_{j=1}^{b-1}\nabla\calL_k^{(j)}(\breve\vbeta^{(j)})\Big\}\nonumber\\
    &\ \ \ \ \ \ \ \ +\frac{\sum_{k=2}^Kn_k^{(b)}}{2n_1^{(b)}}(\vbeta-\breve\vbeta^{(b-1)})\trans\nabla^2\calL_1^{(b)}(\breve\vbeta^{(b-1)})(\vbeta-\breve\vbeta^{(b-1)})\nonumber\\
    &\ \ \ \ \ \ \ \ +\sum_{j=1}^{b-1}\frac{\sum_{k=1}^Kn_k^{(j)}}{2n_1^{(j)}}(\vbeta-\breve\vbeta^{(j)})\trans\nabla^2\calL_1^{(j)}(\breve\vbeta^{(j)})(\vbeta-\breve\vbeta^{(j)})\Big].
\end{align}

To compute $\breve\vbeta^{(b)}=\argmin_{\boldsymbol\beta\in\mathbb{R}^p}\{\breve\calL^{(b)}(\vbeta)+\lambda^{(b)}\|\vbeta\|_1\}$ for $b\ge2$, we again use ISTA with the updates
\begin{equation*}
    \breve\vbeta^{(b)}\leftarrow\breve\vbeta^{(b)}-\eta\nabla\breve\calL^{(b)}(\breve\vbeta^{(b)}),~\breve\beta_l^{(b)}\leftarrow\mathcal{S}(\breve\beta_l^{(b)};\eta\lambda^{(b)}), l=1,\ldots,p,
\end{equation*}
where
\begin{align*}
    &\ \nabla\breve\calL^{(b)}(\vbeta)=\frac{1}{N_K^{(b)}}\Big\{\nabla\calL_1^{(b)}(\vbeta)+\sum_{k=2}^K\nabla\calL_k^{(b)}(\breve\vbeta^{(b-1)})+\sum_{k=1}^K\sum_{j=1}^{b-1}\nabla\calL_k^{(j)}(\breve\vbeta^{(j)})\\
    &\ +\frac{\sum_{k=2}^Kn_k^{(b)}}{n_1^{(b)}}\nabla^2\calL_1^{(b)}(\breve\vbeta^{(b-1)})(\vbeta-\breve\vbeta^{(b-1)})+\sum_{j=1}^{b-1}\frac{\sum_{k=1}^Kn_k^{(j)}}{n_1^{(j)}}\nabla^2\calL_1^{(j)}(\breve\vbeta^{(j)})(\vbeta-\breve\vbeta^{(j)})\Big\}.
\end{align*}
\begin{rmk}
For implementation, the master needs to store $\sum_{j=1}^{b-1}\sum_{k=1}^K\nabla\calL_k^{(j)}(\breve\vbeta^{(j)})$, $\sum_{j=1}^{b-1}(\sum_{k=1}^Kn_k^{(j)}/n_1^{(j)})\nabla^2\calL_1^{(j)}(\breve\vbeta^{(j)})$, and $\sum_{j=1}^{b-1}(\sum_{k=1}^Kn_k^{(j)}/n_1^{(j)})\nabla^2\calL_1^{(j)}(\breve\vbeta^{(j)})\breve\vbeta^{(j)}$, all of which can obviously be updated sequentially. 
In terms of communication, the master needs to obtain $\nabla\calL_k^{(b)}(\breve\vbeta^{(b-1)})$ and $n_k^{(b)}$ from other machines, without the need for Hessian matrices from other machines.
Consequently, the resulting algorithm is deemed communication-efficient.
\end{rmk}
\begin{rmk}The tuning parameter $\lambda^{(b)}$ can also be determined by the ``rolling-origin recalibration'' procedure.
As an extension to the non-distributed case, the master also needs to aggregate test errors received from other machines.
We summarize the distributed online estimation procedure in Algorithm~\ref{alg:2}.
\end{rmk}
\begin{rmk}
Alternative to the surrogate loss defined in \eqref{eqn:distsurr}, one can also directly apply the approach of \cite{jordan2019communication}, after approximating the loss on each machine by the surrogate loss \eqref{eq:online-l1}. However, this would force each non-master machine to compute and store $\nabla^2\calL_k^{(j)}(\breve\vbeta^{(j)})$, which is not required in our approach when only Hessian matrices based on the data on the master are utilized. 
\end{rmk}

\begin{algorithm}[htbp]
    \caption{Distributed online estimation}\label{alg:2}
    \begin{algorithmic}
        \STATE\textbf{Input:} distributed batches $\calD_k^{(j)}$ ($k=1,\ldots,K$, $j=1,\ldots,B$); candidate grid $S_\lambda$.
        \STATE\textbf{Output:} estimators $\breve\vbeta^{(j)}$, $j=1,\ldots,B$.
                \STATE The master computes $\breve\vbeta^{(0)}$ based on data $\calD_1^{(1)}$ and broadcasts it to all machines.
                \STATE Each machine evaluates $\nabla\calL_k^{(1)}(\breve\vbeta^{(0)})$ and sends it to the master.
                \STATE The master computes $\breve\vbeta_{\lambda}^{(1)}$ for all $\lambda\in S_\lambda$.
                \STATE The master chooses $\lambda^{(1)}$ that  minimizes cross-validation error and  set $\breve\vbeta^{(1)}=\breve\vbeta_{\lambda^{(1)}}^{(1)}$.
                \STATE The master broadcasts $\breve\vbeta^{(1)}$ to all machines and receives $\nabla\calL_k^{(1)}(\breve\vbeta^{(1)})$ from them.
           \FOR{$b=2,\ldots,B$}
                \STATE Each machine evaluates $\nabla\calL_k^{(b)}(\breve\vbeta^{(b-1)})$ and sends it to the master.
                \STATE The master computes $\breve\vbeta_{\lambda}^{(b)}$ for all $\lambda\in S_\lambda$, based on the surrogate loss.
                \STATE The master aggregates the test error (requiring additional communication): $$\mathrm{MSPE}(\lambda)=\sum_{k=1}^K\sum_{i\in\calD_k^{(b)}}\{y_i-g'(\vx_i\trans\breve\vbeta_{\lambda_l}^{(b-1)})\}^2.$$
                \STATE The master sets $\lambda^{(b)}=\arg\min_{\lambda\in S_\lambda}\mathrm{MSPE}(\lambda)$ and $\breve\vbeta^{(b)}=\vbeta_{\lambda^{(b)}}^{(b)}$. 
                \STATE The master broadcasts $\breve\vbeta^{(b)}$ to all others and receives $\nabla\calL_k^{(b)}(\breve\vbeta^{(b)})$ from them.            
        \ENDFOR
    \end{algorithmic}
\end{algorithm}

Finally, non-asymptotic error bounds for the distributed renewable lasso estimator are presented in Theorem \ref{thm:dist-batch}.
\begin{thm}\label{thm:dist-batch}
    Suppose Assumptions~\ref{ass:design-noise}--\ref{ass:smooth} hold. Let $\breve\vbeta^{(b)}$, $1\le b\le B$, be defined by the distributed surrogate program with tuning parameter $\lambda^{(b)}=C\sqrt{\log(p\vee N_K^{(b)})/N_K^{(b)}}$ for a sufficiently large constant $C>0$ and initial estimator $\breve\vbeta^{(0)}$ (obtained using data $\calD_1^{(1)}$ with tuning parameter $\lambda^{(0)}=C\sqrt{\log (p\vee n_1^{(1)})/n_1^{(1)}}$). Assume further that $n_k^{(b)}\gtrsim s^*\log(p)$ for all $1\le b\le B$ and $1\le k\le K$, and also that $s^*\log(p\vee n_1^{(1)})(N_K^{(1)}-n_1^{(1)})/(n_1^{(1)}\sqrt{N_K^{(1)}})=o(1)$, 
    \begin{equation}\label{eq:scale2}
        \max_{2\le b\le B}\frac{s^*\log(p\vee N_K^{(b)})}{\sqrt{N_K^{(b)}}}\left\{\frac{N_K^{(b)}-N_K^{(b-1)}-n_1^{(b)}}{N_K^{(b-1)}}+1+\log\Big(\frac{N_K^{(b-1)}}{N_K^{(1)}}\Big)\right\}=o(1).
    \end{equation}
    Then, with probability at least 
	$1-\sum_{b=1}^B\left(\exp\{-C\log(p\vee n_1^{(b)})\}+\exp\{-C\log(p\vee \sum_{k=2}^Kn_k^{(b)})\}\right),$
	the following bounds hold simultaneously for all $1\le b\le B$:
    \begin{equation*}
        \|\breve\vbeta^{(b)}-\vbeta_0\|_2\le C\sqrt{s^*}\,\lambda^{(b)},
        \quad\|\breve\vbeta^{(b)}-\vbeta_0\|_1\le Cs^*\lambda^{(b)}.
    \end{equation*}
\end{thm}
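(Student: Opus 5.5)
We argue by induction on $b$, in parallel with the proof of Theorem~\ref{thm:online-lasso}. The base case is the initial estimator $\breve\vbeta^{(0)}$. It is a standard lasso on $\calD_1^{(1)}$ with $n_1^{(1)}\gtrsim s^*\log p$, so the usual restricted strong convexity argument gives $\|\breve\vbeta^{(0)}-\vbeta_0\|_2\lesssim \sqrt{s^*\log(p\vee n_1^{(1)})/n_1^{(1)}}$ and the corresponding $\ell_1$ bound.

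For each $b\ge1$ we use the deterministic lasso argument. Write $\breve\calL^{(b)}$ for the surrogate in \eqref{eq:surrogate_b1} or \eqref{eqn:distsurr}. It suffices to establish two facts:
\begin{itemize}
\item[(i)] $\|\nabla\breve\calL^{(b)}(\vbeta_0)\|_\infty\le\lambda^{(b)}/2$;
\item[(ii)] $\breve\calL^{(b)}$ satisfies restricted strong convexity on the cone $\{\|\Delta_{S^c}\|_1\le3\|\Delta_S\|_1\}$ with a constant curvature $\kappa$, possibly only in a local neighbourhood.
\end{itemize}
Together these give $\|\breve\vbeta^{(b)}-\vbeta_0\|_2\lesssim\sqrt{s^*}\lambda^{(b)}$ and $\|\breve\vbeta^{(b)}-\vbeta_0\|_1\lesssim s^*\lambda^{(b)}$.

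For (ii), the quadratic part of $\breve\calL^{(b)}$ is positive semidefinite. The term $\calL_1^{(b)}(\vbeta)/N_K^{(b)}$, together with the master-Hessian quadratic terms, has curvature proportional to the total weight $N_K^{(b)}$. Sub-Gaussian concentration of $\nabla^2\calL_1^{(j)}/n_1^{(j)}$ on sparse cones, with $n_1^{(j)}\gtrsim s^*\log p$, gives curvature of order $k_1\inf g''$ on a compact neighbourhood. This uses Assumption~\ref{ass:smooth} and the fact that $\breve\vbeta^{(j)}$ is already close to $\vbeta_0$ by the induction hypothesis.

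For (i), we compute $\nabla\breve\calL^{(b)}(\vbeta_0)$ and add and subtract the exact full gradient $\nabla\bar\calL^{(b)}(\vbeta_0)$. We then Taylor expand each $\nabla\calL_k^{(j)}(\breve\vbeta^{(j)})$ around $\vbeta_0$, using $\nabla^2\calL_k^{(j)}(\vbeta_0)$ plus a third-order remainder. This yields
\[
N_K^{(b)}\nabla\breve\calL^{(b)}(\vbeta_0)=\sum_{j,k}\nabla\calL_k^{(j)}(\vbeta_0)+\vR_1+\vR_2+\vR_3 .
\]
The first term is controlled by sub-Gaussian maximal inequalities at order $\sqrt{N_K^{(b)}\log(p\vee N_K^{(b)})}$. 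The remainders are:
\begin{itemize}
\item \emph{Curvature remainders} $\vR_1$. These are $\|\cdot\|_\infty$-bounded by $L_g\max_i\|\vx_i\|_\infty^2\,n_k^{(j)}\|\breve\vbeta^{(j)}-\vbeta_0\|_1^2$, which is of order $n^{(j)}_k\log(p\vee N)\,s^{*2}\lambda^{(j)2}$. Summing over $j$ with $\lambda^{(j)2}\asymp\log/N^{(j)}$ gives $\sum_j n^{(j)}/N^{(j)}\lesssim 1+\log(N_K^{(b-1)}/N_K^{(1)})$. This is exactly where the logarithmic factor in \eqref{eq:scale2} arises.
\item \emph{Hessian-substitution errors} $\vR_2$. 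These come from replacing $\nabla^2\calL_k^{(j)}(\breve\vbeta^{(j)})$ by $(n_k^{(j)}/n_1^{(j)})\nabla^2\calL_1^{(j)}(\breve\vbeta^{(j)})$, and take the form
\[
\Big(\tfrac{1}{n_k}\nabla^2\calL_k-\tfrac{1}{n_1}\nabla^2\calL_1\Big)(\breve\vbeta^{(j)}-\vbeta_0)\cdot n_k .
\]
The Hessian difference is bounded in $\|\cdot\|_{\max}$ by $\sqrt{\log p/n_1^{(j)}}$. Combined with the $\ell_1$ error, this gives roughly $N^{(j)}_K\sqrt{\log p/n_1^{(j)}}\,s^*\lambda^{(j)}$. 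Here the ratio of master size to total size matters, through the term $(N_K^{(b)}-N_K^{(b-1)}-n_1^{(b)})/N_K^{(b-1)}$ for the new batch expanded at $\breve\vbeta^{(b-1)}$, and through the condition at $b=1$ for the expansion at $\breve\vbeta^{(0)}$.
\item \emph{New-batch expansion error} $\vR_3$. This arises from expanding the workers' new batches at $\breve\vbeta^{(b-1)}$ instead of $\breve\vbeta^{(j)}$. Its size is $(N_K^{(b)}-N_K^{(b-1)}-n_1^{(b)})\,s^{*2}\lambda^{(b-1)2}\log$, normalized accordingly.
\end{itemize}
Dividing by $N_K^{(b)}$ and requiring each remainder to be $o(\lambda^{(b)})$ reproduces the scaling conditions stated in the theorem, including \eqref{eq:scale2}. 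The induction closes with a constant $C$ that does not depend on $b$, because each previous error enters only quadratically, or multiplied by a vanishing Hessian-approximation factor.

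The main obstacle will be $\vR_2$. The Hessian-substitution error is linear, not quadratic, in $\breve\vbeta^{(j)}-\vbeta_0$, so it threatens to reintroduce the exponential $C^b$ growth discussed in the remark after Theorem~\ref{thm:online-lasso}. The first approach to try is to bound this term directly by $\|\cdot\|_{\max}\times\|\cdot\|_1$, giving $s^*\log(\cdot)/\sqrt{n_1}$ times a weight. One then checks that after normalization it is $o(\lambda^{(b)})$ with an absolute constant, rather than being absorbed as a multiplicative factor. If the weights do not sum to match \eqref{eq:scale2} directly, the fallback is to bound the summed master Hessians jointly, using a concentration bound over all of $\bigcup_j\calD_1^{(j)}$. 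Finally, a union bound over $b$ of the concentration events gives the stated probability.
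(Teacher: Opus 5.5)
Your architecture matches the paper's proof. It uses induction over $b$ and restricted strong convexity from the master-site Hessians via Lemma~\ref{lem:rsc-info}, globalized by the truncation $\zeta_b\vDelta^{(b)}$ and convexity as in the proof of Theorem~\ref{thm:online-lasso}. The score is split into the full-data score, Taylor remainders and Hessian-substitution errors, and the sums are controlled by the harmonic-sum bounds of Lemma~2 of \cite{luo2023online}. Expanding around $\vbeta_0$ instead of $\breve\vbeta^{(j)}$ only moves the worker-side $g''$-shift into your curvature remainder.

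\textbf{Gap in the quadratic remainders.} Your bound on these terms breaks the claim that they reproduce \eqref{eq:scale2}. The Hölder step $(\vx_i\trans\vDelta)^2\le\|\vx_i\|_\infty^2\|\vDelta\|_1^2$ (which, incidentally, yields $\max_i\|\vx_i\|_\infty^3$, not the square) costs a full factor $s^*$. Since $\|\breve\vbeta^{(j)}-\vbeta_0\|_1^2\asymp s^{*2}\log(p\vee N_K^{(j)})/N_K^{(j)}$, making $\vR_1$ and $\vR_3$ of order $o(\lambda^{(b)})$ would need \eqref{eq:scale2} with $s^*$ replaced by $s^{*2}$ (plus an extra logarithm), which is not assumed. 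The paper instead keeps a single factor $\max_i\|\vx_i\|_\infty\lesssim\sqrt{\log(p\vee N_K^{(b)})}$ (Lemma~\ref{lem:xmax}). It then bounds the quadratic form by the uniform restricted upper bound of Lemma~\ref{lem:square}: $\sum_{i\in\calD_k^{(j)}}\{\vx_i\trans(\breve\vbeta^{(j)}-\vbeta_0)\}^2\le Cn_k^{(j)}\|\breve\vbeta^{(j)}-\vbeta_0\|_2^2$. This is legitimate because $\breve\vbeta^{(j)}-\vbeta_0$ satisfies the cone condition established at stage $j$. The result is $Cs^*\log^{3/2}(p\vee N_K^{(b)})\{\cdots\}/N_K^{(b)}$, which is $o(\lambda^{(b)})$ exactly under \eqref{eq:scale2}. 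Also, the first summand in the braces of \eqref{eq:scale2} comes from the Taylor remainders of the new worker batches expanded at $\breve\vbeta^{(b-1)}$, not from Hessian substitution.

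\textbf{Same issue inside $\vR_2$.} The master Hessian there is evaluated at the data-dependent point $\breve\vbeta^{(j)}$ (or $\breve\vbeta^{(b-1)}$). So no max-norm concentration at rate $\sqrt{\log p/n_1^{(j)}}$ is directly available, and bounding the shift $g''(\vx_i\trans\breve\vbeta^{(j)})-g''(\vx_i\trans\vbeta_0)$ entrywise again loses a factor of order $\sqrt{s^*\log p}$. The paper splits the substitution error into four pieces.
\begin{itemize}
\item The two $g''$-shifts are kept in vector form and bounded quadratically, as above, by $L_g\max_i\|\vx_i\|_\infty\,n^{-1}\sum_i\{\vx_i\trans(\breve\vbeta^{(j)}-\vbeta_0)\}^2$.
\item The worker and master Hessians at $\vbeta_0$ are each compared with $\mathbb E\{g''(\vx\trans\vbeta_0)\vx\vx\trans\}$ in max-norm (Lemma~\ref{lem:hessian}) and multiplied by $\|\breve\vbeta^{(j)}-\vbeta_0\|_1$.
\end{itemize}
With this split, your ``first approach'' is exactly what the paper does, and no fallback is needed. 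The weight on batch $j$ is the batch total $\sum_kn_k^{(j)}$, not the cumulative $N_K^{(j)}$ you wrote, which would bring back growth in $b$. Then $\sum_{j\le b-1}\sum_kn_k^{(j)}/\sqrt{N_K^{(j)}}\le2\sqrt{N_K^{(b-1)}}$ gives a contribution of order $s^*\lambda^{(b)}\max_{j\le b-1}\sqrt{\log(p\vee n_1^{(j)})/n_1^{(j)}}$.

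So the linear term is harmless precisely when $s^*\max_j\sqrt{\log(p\vee n_1^{(j)})/n_1^{(j)}}$ is small, which is a requirement on the master batch sizes. The paper asserts this under \eqref{eq:scale2'}, and it does hold, for example, with equal batch sizes via the $b=1$ condition. It is not literally among the displayed hypotheses, however, so you should state it explicitly rather than claim that \eqref{eq:scale2} is reproduced.
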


\begin{rmk}
    Theorem~\ref{thm:dist-batch} extends Theorem~\ref{thm:online-lasso} to the distributed setting. Compared with the single-site case, the analysis must additionally account for the number of sites $K$ and the approximation error induced by replacing raw data with  summary statistics from local machines. The condition $s^*\log(p\vee n_1^{(1)})(N_K^{(1)}-n_1^{(1)})/(n_1^{(1)}\sqrt{N_K^{(1)}})=o(1)$ is used to derive the error bound for $\breve\vbeta^{(1)}$. The condition in \eqref{eq:scale2} depends explicitly on both the number of sites and the number of batches. If $n_k^{(j)}\equiv n_1^{(1)}$, then \eqref{eq:scale2} simplifies to $\max_{2\le b\le B}s^*\max\{\log(b),\log(K),\log(p),\log(n_1^{(1)})\}\,\log(b-1)/\sqrt{bKn_1^{(1)}}=o(1)$, which is in turn implied by $s^*\log(p\vee n_1^{(1)})/\sqrt{n_1^{(1)}}=o(1)$. When $K=1$, \eqref{eq:scale2} reduces to \eqref{eq:scale1}.
\end{rmk}

\section{Simulation studies}\label{sec:num}

This section evaluates the finite-sample performance of the proposed methods in both non-distributed and distributed streaming data settings.
We focus on two representative models (linear and logistic) and report the mean squared errors (MSE) across Monte Carlo replications.

\subsection{Online estimation}\label{sec:num1}

We first consider the non-distributed streaming data setting.
The data-generating mechanism is as follows.
(i) Linear model $y_i=\vx_i\trans\vbeta+\epsilon_i$.
The covariates $\vx_i$ are generated from a multivariate normal distribution with mean zero and covariance matrix $\vSigma_{ij}=0.5^{|i-j|}$.
We set $\vbeta=(1,1,1,1,1,0,\ldots,0)\trans$, and generate $\epsilon_i$ from either $N(0,1)$ or a $t$-distribution with $5$ degrees of freedom.
(ii) Logistic model $\log\left(\frac{\pi_i}{1-\pi_i}\right)=\vx_i\trans\vbeta$, $\pi_i:=\mathbb E[y_i\mid\vx_i]$.
The covariates $\vx_i$ are generated from a multivariate normal distribution with covariance $\vSigma_{ij}=0.3^{|i-j|}$ or $\vSigma_{ij}=0.5^{|i-j|}$.
We set $\vbeta=(0.5,0.5,0.5,0.5,0.5,0,\ldots,0)\trans$.

We use $n^{(b)}=80$ for all $b=1,\ldots,B$, with $B=50$, $p\in\{250,500\}$, and $200$ replications.
We compare the proposed gradient-enhanced online estimator (\texttt {online}) with the full offline estimator (\texttt{offline}), which has access to all historical data, and the online method in \cite{luo2023online} (\texttt{luo}).
Figures \ref{fig:non_dist_lin} and \ref{fig:non_dist_log} report $\log(\mathrm{MSE})$ under the linear and logistic models, respectively.

The main observations are as follows. 
(i) For all methods, $\log(\mathrm{MSE})$ decreases as the batch index $b$ increases.
(ii) For a fixed $b$, increasing the dimension from $p=250$ to $p=500$ leads to an increase in errors.
(iii) The proposed \texttt{online} method is consistently closer to the offline benchmark, compared with the \texttt{luo} method.

\begin{figure}[htbp]
    \centering
    \includegraphics[width=0.6\linewidth]{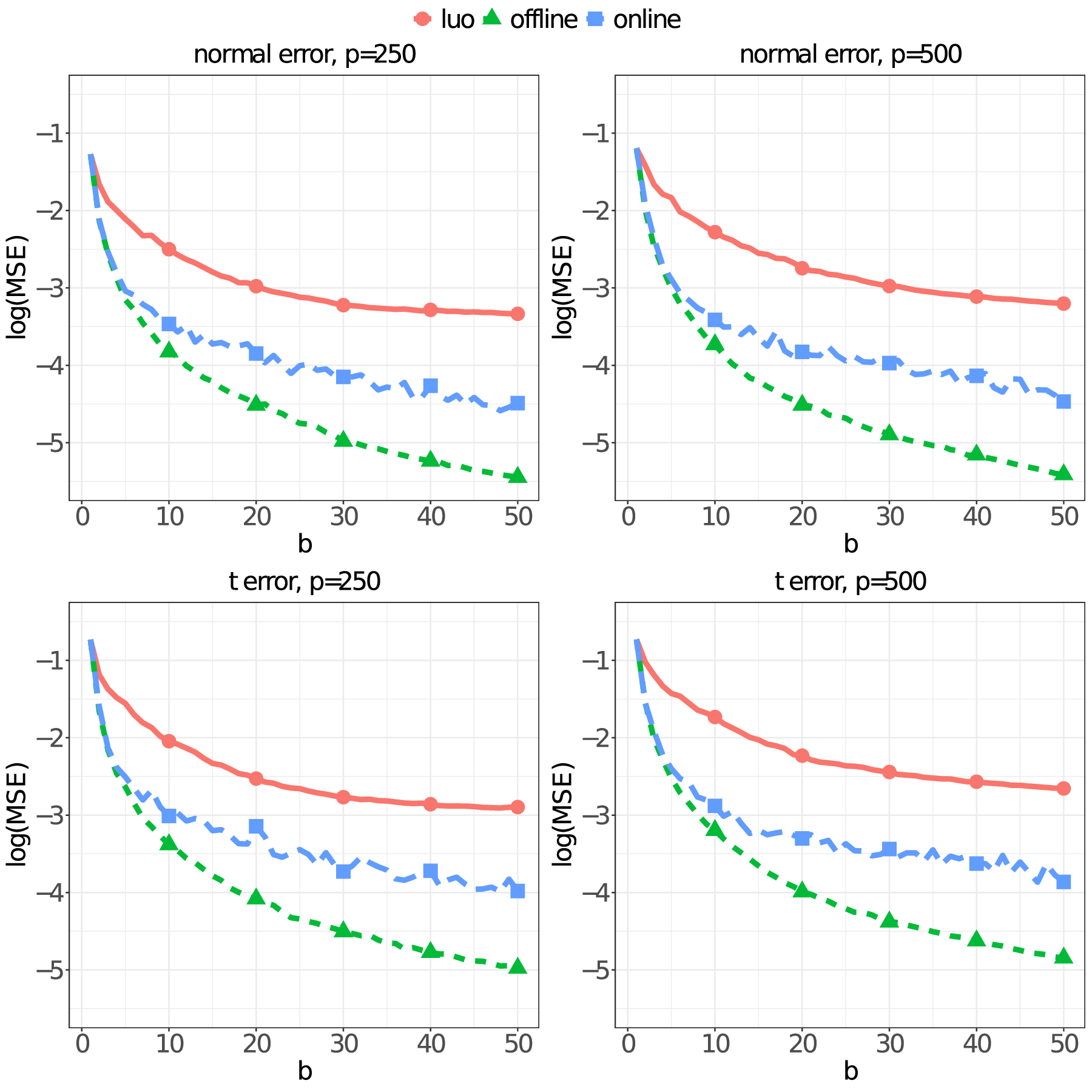}
    \caption{Comparison of $\log(\mathrm{MSE})$ under the linear model.}
    \label{fig:non_dist_lin}
\end{figure}

\begin{figure}[htbp]
    \centering
    \includegraphics[width=0.6\linewidth]{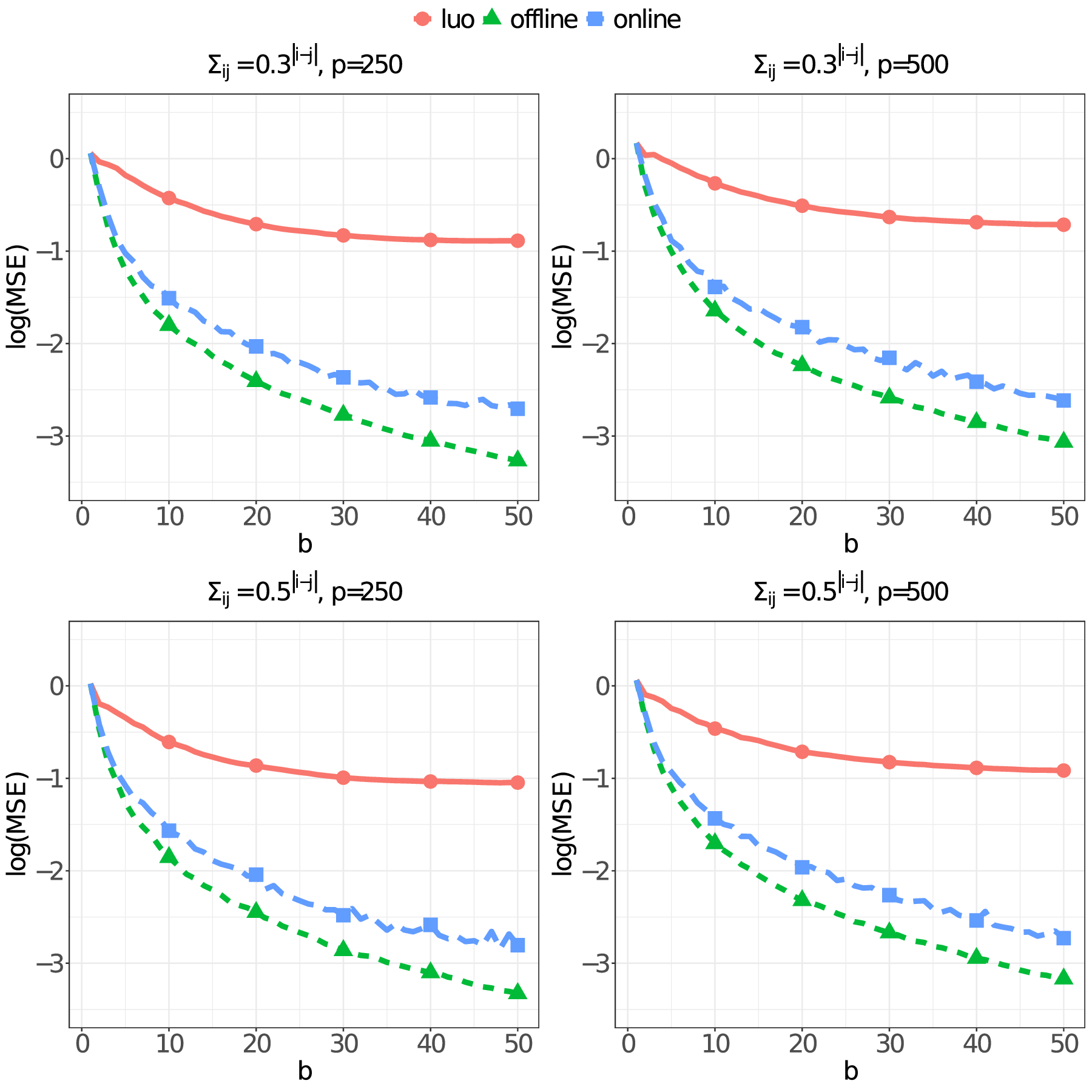}
    \caption{Comparison of $\log(\mathrm{MSE})$ under the logistic model.}
    \label{fig:non_dist_log}
\end{figure}

\subsection{Distributed online estimation}\label{sec:num2}

We next consider the distributed streaming data setting.
For the linear and logistic models, we use similar setups as before. In both models, covariates $\vx_i$ are generated from a multivariate normal distribution with covariance $\vSigma_{ij}=0.5^{|i-j|}$. In the linear model, we use the error distribution $\epsilon_i\sim N(0,1)$. 
We set $B=50$, $n_k^{(b)}=80$, and $p=250$, and the number of sites $K\in\{5,15,25\}$. The results are based on $200$ replications.
We compare the proposed communication-efficient online estimator (\texttt{online}) with the following baselines:
\begin{itemize}
    
    \item[(i)] the centralized offline method (\texttt{cen-off}), which pools all data together and has access to the complete historical sample.

    \item[(ii)] the centralized online method (\texttt{cen-on}), which collects all data for each batch onto one machine and applies the gradient-enhanced online estimator in Section~\ref{sec:online}. 

    \item[(iii)] the local method (\texttt{loc}), which applies the gradient-enhanced online estimator using data from the first machine only.

    \item[(iv)] the one-shot averaging method (\texttt{ave}), which aggregates local estimators from all machines via simple averaging.

\end{itemize}

Figures~\ref{fig:dist_lin} and \ref{fig:dist_log} report $\log(\mathrm{MSE})$ under the linear and logistic models, respectively.
The main findings are as follows.
(i) For a fixed $K$, all methods improve over time, with $\log(\mathrm{MSE})$ decreasing as $b$ increases.
(ii) For a fixed $b$, performance improves as $K$ increases, reflecting the gain from a larger total sample size.
(iii) The \texttt{cen-off} method is the most accurate and serves as a gold-standard benchmark.
The \texttt{cen-on} method outperforms \texttt{online} because the former centralizes all current data.
The proposed  \texttt{online} method substantially improves over \texttt{ave} and \texttt{loc}, showing that the proposed communication-efficient surrogate is considerably more effective than one-shot averaging or single-site learning.

\begin{figure}[H]
    \centering
    \includegraphics[width=\linewidth]{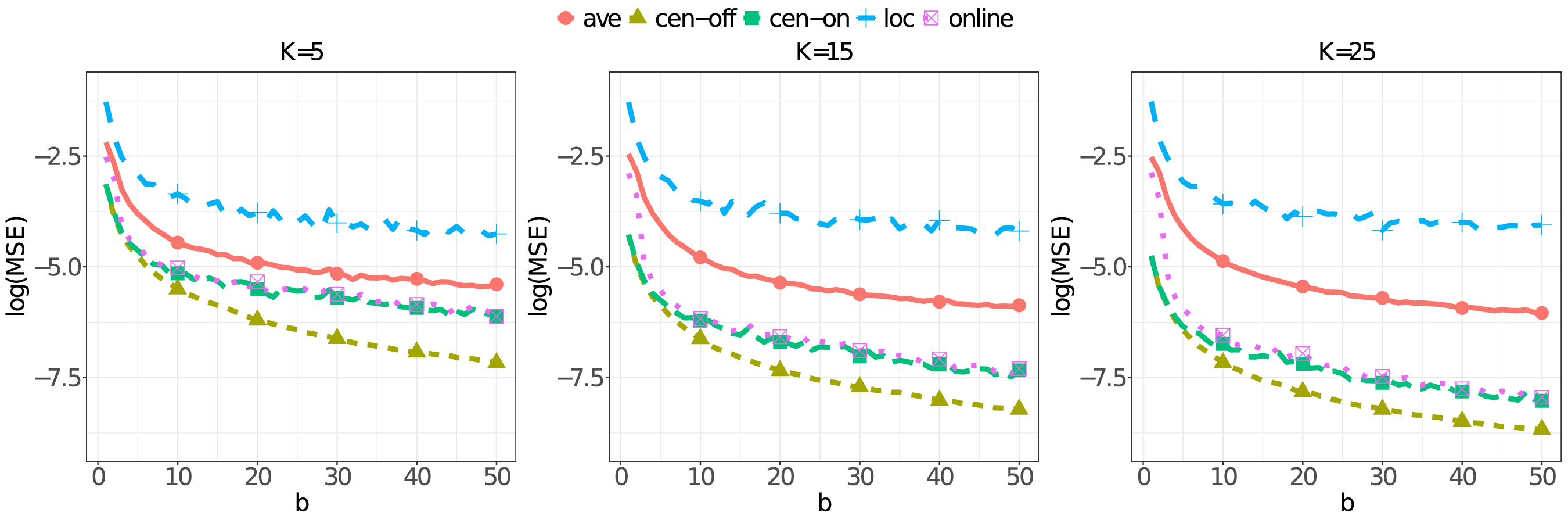}
    \caption{Comparison of $\log(\mbox{MSE})$ in the distributed setting under the linear model.}
    \label{fig:dist_lin}
\end{figure}

\begin{figure}[H]
    \centering
    \includegraphics[width=\linewidth]{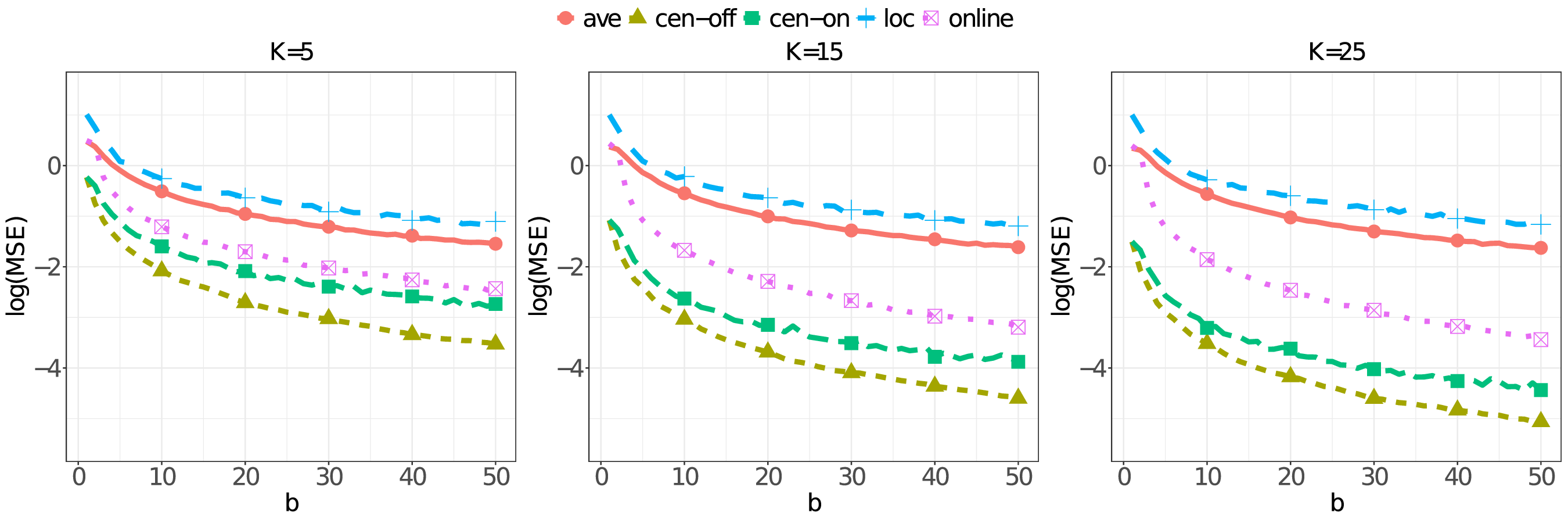}
    \caption{Comparison of $\log(\mbox{MSE})$ in the distributed setting under the logistic model.}
    \label{fig:dist_log}
\end{figure}

\section{Real data application}\label{sec:dataapp}

Online shopping has become an essential component of modern retailing, generating massive streams of user interaction data on e-commerce platforms. 
Accurately modeling and analyzing such large-scale streaming data is of great importance for understanding consumer behavior and improving personalized recommendation systems.

To evaluate the proposed methods, we consider the Taobao ANTA Sneakers User Behavior Dataset (\url{https://www.kaggle.com/w2ds1314/datasets}).
The goal is to predict whether a user will make a purchase based on historical behavior and browsing information.
The data consist of daily user behavior records in May 2025 from seven regions in China ($K=7$), with North China used as the master site.
Because non-purchases are rare, we subsample the data within each region and batch to balance purchase and non-purchase observations at a 1:1 ratio.
The first $B=20$ days are used for training and the remaining 11 days for testing, with 24,754 testing observations.
Figure~\ref{fig:sample} shows the training sample sizes across 7 regions for 20 days.

\begin{figure}[H]
	\centering
	\includegraphics[width=0.8\linewidth]{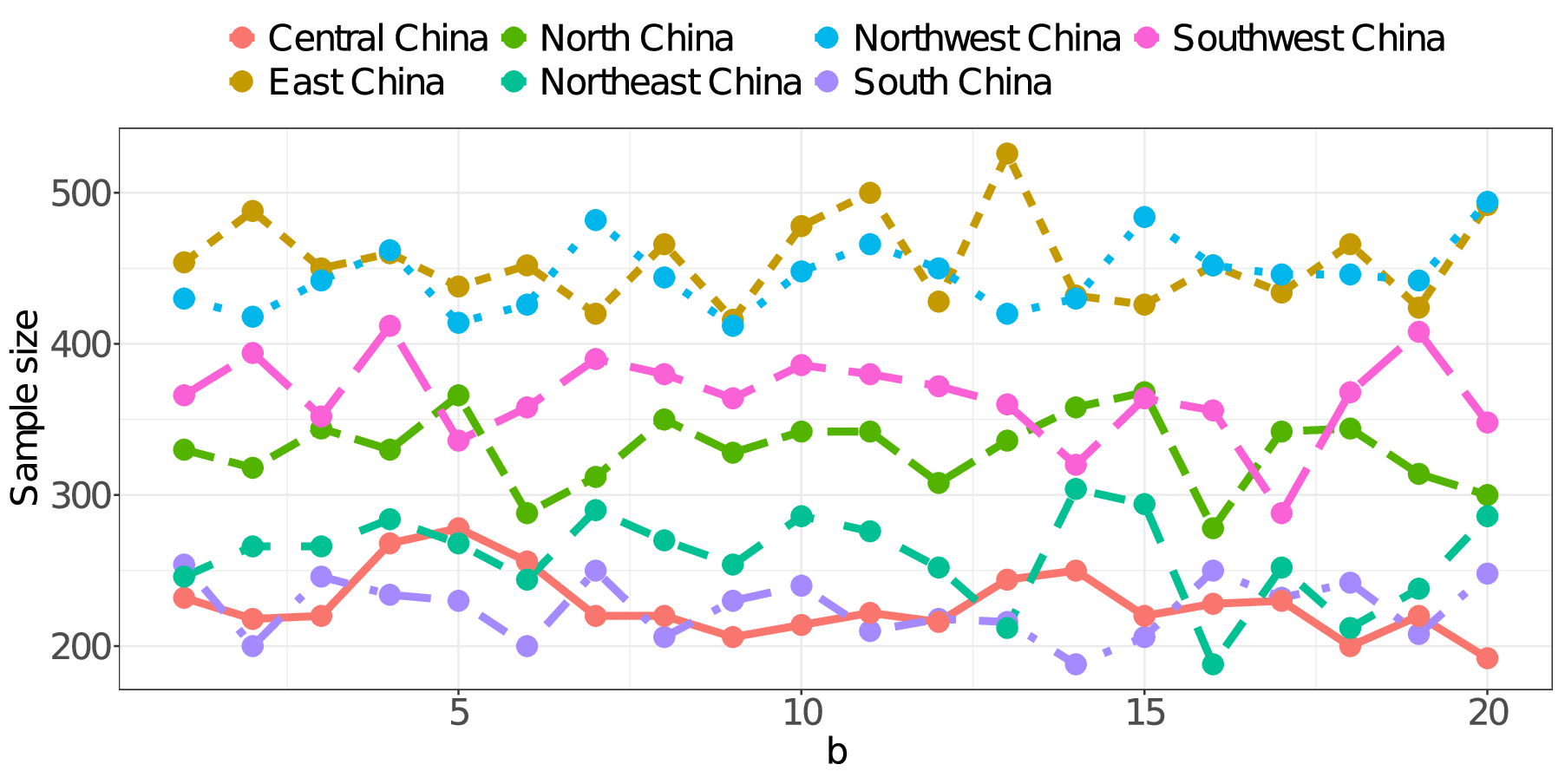}
	\caption{Sample sizes for the streaming training data for different regions.}
	\label{fig:sample}
\end{figure}

The numerical covariates include membership level, number of active days in the past 30 days, historical purchase, price sensitivity, price at browsing, historical sales volume, rating score, number of reviews, number of exposures, number of clicks, number of browsing actions, total browsing duration, number of favorites, number of add-to-cart actions, and number of orders.
All numerical covariates are centered and standardized before model fitting.
After one-hot encoding, the categorical attributes are represented by binary indicators, including Nike, ANTA, and Li-Ning brands; gender; iOS and PC devices; 5G, WiFi, and unknown network types; and activity-page, search, recommendation, external-site, and traffic source channel (such as live-streaming or search).
The total number of covariates is $p=30$.

We compare the five methods in Section~\ref{sec:num2} using the testing log-likelihood; larger values indicate better prediction.
Figure~\ref{fig:loglikelihood} shows that \texttt{cen-off} and \texttt{cen-on} perform best and are nearly indistinguishable.
The proposed distributed \texttt{online} method approaches the centralized benchmarks as more batches arrive and clearly improves over \texttt{loc}.
The one-shot averaging method \texttt{ave} also improves over time, but is less competitive than the proposed surrogate update.

\begin{figure}[H]
	\centering
	\includegraphics[width=0.8\linewidth]{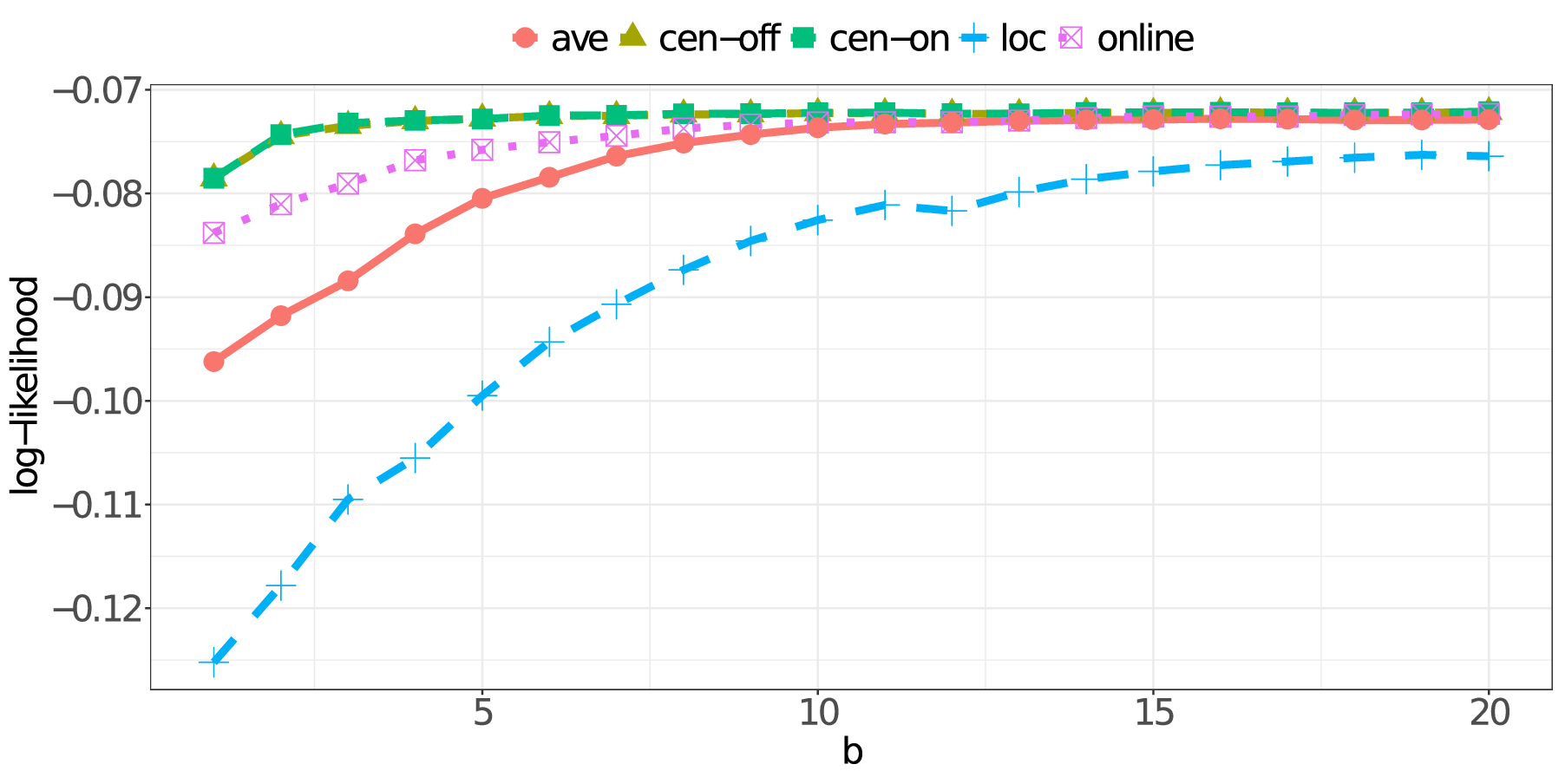}
	\caption{Testing log-likelihood for the Taobao data.}
	\label{fig:loglikelihood}
\end{figure}

We further report parameter MSE in Figure~\ref{fig:MSE}, using full-data estimator \texttt{cen-off} as the truth. The message is similar to that shown by the testing likelihood above.
The local method \texttt{loc} has the largest error, indicating that the master-region data alone are insufficient.
The proposed \texttt{online} estimator is much closer to the centralized estimators than \texttt{loc} and \texttt{ave}.

\begin{figure}[H]
	\centering
	\includegraphics[width=0.8\linewidth]{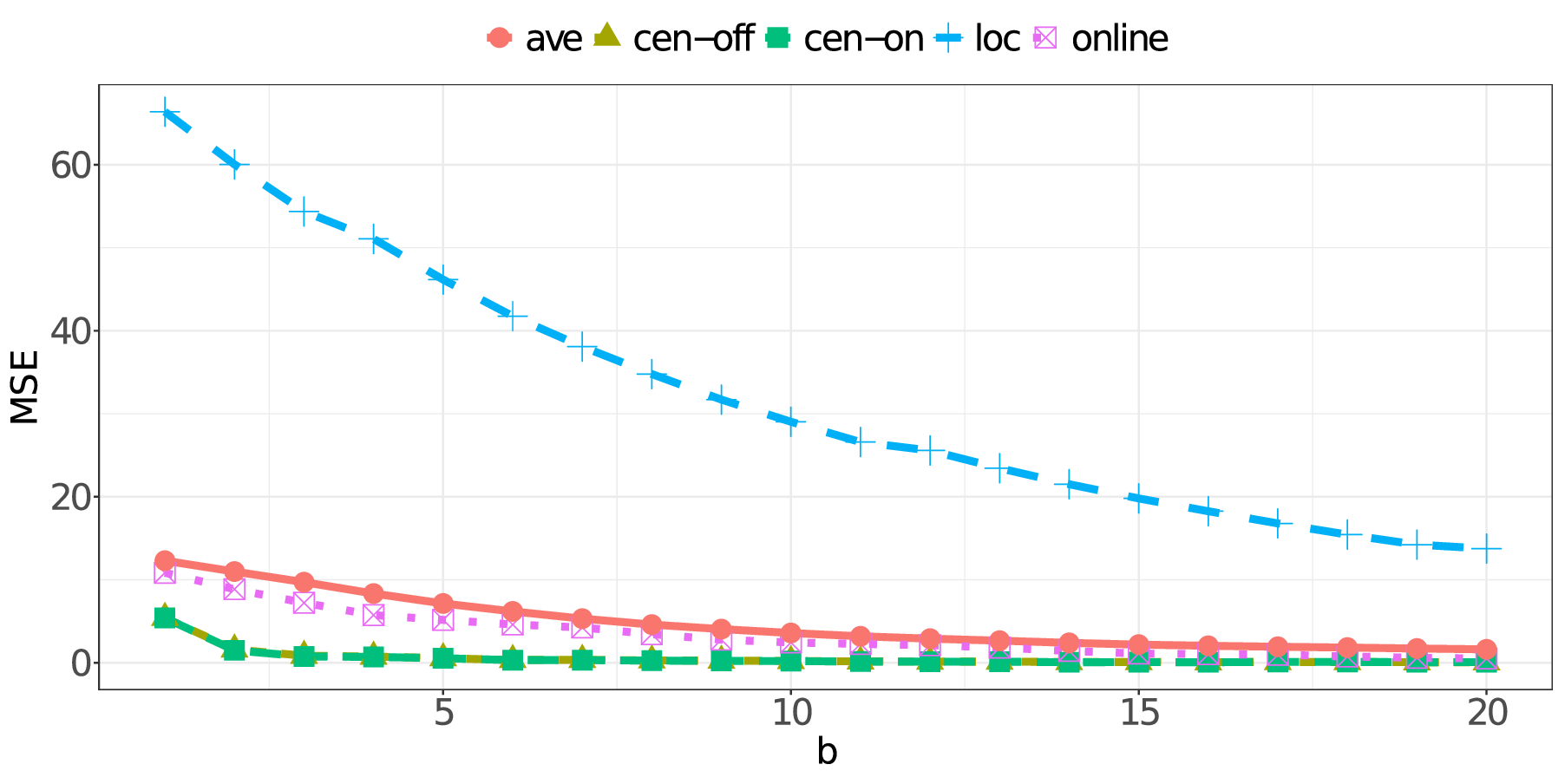}
	\caption{MSE relative to the full-data estimator for the Taobao data.}
	\label{fig:MSE}
\end{figure}

Finally, Table~\ref{tab:estimate} reports point estimates of four  variables selected by \texttt{cen-off} and \texttt{online} with the largest coefficient magnitudes at batch $b=10$ and $b=20$.
All four coefficients are positive: higher membership level, more historical purchases, more orders, and live-streaming source are associated with higher purchase probability.
The coefficient for number of orders is the largest, suggesting that recent ordering behavior is the strongest predictor of purchase conversion.
The estimates from \texttt{online} are close to those from \texttt{cen-off} in both sign and magnitude, indicating that the distributed online estimator recovers similar covariate effects using only streaming summaries.
	
\begin{table}[H]
	\centering
	\small
	\caption{Point estimates of selected important variables in the Taobao data.}
	\vspace{.1in}
	\label{tab:estimate}
	\begin{tabular}{clcccc}
		\hline
		$b$ & Method & Membership level & Historical purchase & Number of orders & Live-streaming \\
		\hline
		10 & \texttt{cen-off} & 0.6565 & 1.0750 & 2.8872 & 1.2155 \\
		    & \texttt{online}  & 0.6970 & 1.1884 & 3.1790 & 1.0262 \\
		20 & \texttt{cen-off} & 0.6690 & 1.1459 & 3.0166 & 1.1187 \\
		    & \texttt{online}  & 0.6292 & 1.0805 & 2.8649 & 0.8219 \\
		\hline
	\end{tabular}
\end{table}

\section{Conclusion and discussions}\label{sec:ext}

In this paper, we have studied online estimation for high-dimensional generalized linear models under both single-site streaming and distributed streaming settings. In the single-site case, we proposed a gradient-enhanced surrogate loss that approximates the cumulative loss via a second-order Taylor expansion, retaining the gradient term that prior renewable estimation approaches discarded. This modification is simple and requires only modest additional storage over the existing approach of \cite{luo2023online}. We established non-asymptotic $\ell_1$- and $\ell_2$-error bounds for the resulting estimator, showing that the proposed estimator achieves the rate $C\sqrt{s^*}\,\lambda^{(b)}$ with a constant $C$ that is uniform in the batch index $b$. This contrasts with the existing theory, in which the analogous factor grows with the number of batches. 

We then extended the framework to a distributed online setting, where $K$ machines each receive new local batches over time and transmit only gradient vectors to a master machine. The distributed surrogate approximates non-master Hessians using the master's own Hessian, so no additional inter-machine Hessian transmission is required. The resulting distributed estimator is communication-efficient: each round requires only gradient vectors from all machines. Theoretical analysis establishes that the same $\ell_1$- and $\ell_2$-convergence rates as the single-site case are achievable. 

Several directions for future work remain. First, the current theory requires the covariate vector to be sub-Gaussian. Extending the analysis to heavier-tailed distributions or dependent covariates, as arising in time-series and panel-data applications, would broaden the scope of the framework. Second, the present method stores a $p \times p$ Hessian matrix on the master, which may be prohibitive when $p$ is very large. Developing sketched or low-rank approximations to the Hessian that preserve the statistical accuracy guarantees is a practically important extension. Third, the current framework assumes that data are homogeneous across sites and batches. Allowing for site-specific heterogeneity in the distribution of covariates or responses, and adapting the surrogate construction accordingly, is an important direction for applications in federated learning. 




\bibliographystyle{natbib}
\bibliography{paper-ref}

 \newpage

\appendix

\setcounter{equation}{0}

\renewcommand{\theequation}{S.\arabic{equation}}

\section*{Supplementary Material}
\section{Technical proofs}

\subsection{Some useful lemmas}

\begin{lem}[Max norm bound for sub-Gaussian vectors]\label{lem:xmax}
    Let $\vx\in\mathbb{R}^p$ be sub-Gaussian with parameter $\sigma_x$ and let $\{\vx_i\}_{i=1}^n$ be i.i.d. copies of $\vx$.
    Then there exists $C>0$ such that
    \begin{equation*}
        \mathbb{P}\Big(\max_{i\le n}\|\vx_i\|_\infty\ge C\sigma_x\sqrt{\log(p\vee n)+t}\Big)\le2\exp\{-C(\log(p\vee n)+t)\}, \quad\forall t>0.
    \end{equation*}
\end{lem}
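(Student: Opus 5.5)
The plan is a coordinatewise sub-Gaussian tail bound followed by a union bound over the $np$ entries $\{x_{ij}: i\le n, j\le p\}$. By the definition of the sub-Gaussian norm of a random vector, each coordinate $x_{ij}=\langle \mathbf e_j,\vx_i\rangle$ satisfies $\|x_{ij}\|_{\psi_2}\le\|\vx\|_{\psi_2}\le\sigma_x$, since $\mathbf e_j$ is a unit vector. The standard equivalence of sub-Gaussian characterizations (Markov's inequality applied to $\exp\{x_{ij}^2/\sigma_x^2\}$, whose expectation is at most $2$) then gives
\[
\mathbb P(|x_{ij}|\ge u)\le 2\exp\{-u^2/\sigma_x^2\},\qquad u>0,
\]
for every $i\le n$ and $j\le p$.

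Next I apply the union bound over all $np$ pairs:
\[
\mathbb P\Big(\max_{i\le n}\|\vx_i\|_\infty\ge u\Big)\le 2np\exp\{-u^2/\sigma_x^2\}\le 2\exp\{2\log(p\vee n)-u^2/\sigma_x^2\}.
\]
Here I use $np\le (p\vee n)^2$. No independence across $i$ is needed; identical distribution suffices.

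Finally I choose $u=C_0\sigma_x\sqrt{\log(p\vee n)+t}$ with $C_0$ large, say $C_0^2\ge 3$. The exponent becomes
\[
2\log(p\vee n)-C_0^2\{\log(p\vee n)+t\}\le -(C_0^2-2)\{\log(p\vee n)+t\}\le -\{\log(p\vee n)+t\}.
\]
This yields the stated bound with constant $C_0$ inside the threshold and constant $1$ in the exponent. Since the lemma allows generic constants $C$ that may differ between occurrences, this suffices. If the same symbol $C$ is required in both places, one takes $C\ge 1$ large in the threshold and notes the exponent constant $C^2-2\ge C$ once $C\ge 2$.

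There is no real obstacle here. The only point needing care is bookkeeping: absorbing the union-bound factor $np$ into the exponent via $\log(np)\le 2\log(p\vee n)$, and keeping track of which constant plays which role.
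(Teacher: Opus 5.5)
Your proposal is correct and follows the paper's proof step for step: a coordinatewise sub-Gaussian tail bound, then a union bound over the $np$ entries, then the choice $u=C\sigma_x\sqrt{\log(p\vee n)+t}$. You are more explicit than the paper in two places, namely absorbing the factor $np\le(p\vee n)^2$ into the exponent and reconciling the two roles of the constant $C$.
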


\noindent\textbf{Proof of Lemma \ref{lem:xmax}.}
Let $\vx=(x_1,\ldots,x_p)\trans$ be sub-Gaussian with parameter $\sigma_x$.
Then, each coordinate $x_\ell$ is sub-Gaussian with the same parameter.
Hence, there exists $C>0$ such that for all $u>0$,
\begin{equation}\label{eq:xmax_tail}
    \mathbb P(|x_{\ell}|\ge u)\le2\exp\{-C\,u^2/\sigma_x^2\}, \quad\forall\ell\in\{1,\ldots,p\}.
\end{equation}
Using \eqref{eq:xmax_tail} and a union bound over $i\in\{1,\ldots,n\}$ and $\ell\in\{1,\ldots,p\}$, we obtain
\begin{equation*}
    \mathbb P\Big(\max_{1\le i\le n}\|\vx_i\|_\infty\ge u\Big)\le 2np\exp\{-C\,u^2/\sigma_x^2\}.
\end{equation*}
Finally, we take $u=C\sigma_x\sqrt{\log(p\vee n)+t}$ to complete the proof.
\hfill $\Box$

\smallskip
\begin{lem}[Restricted strong convexity (RSC)]\label{lem:rsc-info}
    Let $\{\vx_i\}_{i=1}^n$ be i.i.d.
    sub-Gaussian vectors in $\mathbb R^p$ with parameter $\sigma_x$.
    Let $\vSigma:=\mathbb E(\vx\vx\trans)\succeq k_1\vI_p$.
    For a scalar $\tau\asymp 1/\sqrt{\log(p\vee n)}$ and a sufficiently large constant $C_0$, define
    \begin{equation*}
        m_0:=\inf_{|u|\le C_0\|\boldsymbol{\beta}_0\|_2+C_0\tau\sqrt{\log(p\vee n)}}g''(u)>0.
    \end{equation*}
    Then for $n$ satisfying $n\gtrsim s^*\log(p)$, with probability at least $1-\exp\{-Cn\}$,
    \begin{equation*}
        \inf_{\substack{\|\boldsymbol\Delta\|_2=1,\\ 
        \|\boldsymbol\Delta\|_1\le 4\sqrt{s^*},\\
        \|\boldsymbol\Delta'\|_1\le\tau}}
        \vDelta\trans\left(\frac1n\sum_{i=1}^n g''(\vx_i\trans(\vbeta_0+\vDelta'))\vx_i\vx_i\trans\right)\vDelta\ge\kappa_I:=\frac{m_0k_1}{8}.
    \end{equation*}
\end{lem}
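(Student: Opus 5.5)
The plan is to reduce the random-Hessian bound to a restricted eigenvalue condition for the plain Gram matrix $\frac1n\sum_i\vx_i\vx_i\trans$ over the cone $\{\|\vDelta\|_2=1,\|\vDelta\|_1\le4\sqrt{s^*}\}$. The reduction has two ingredients. First, a truncation argument controls the weights $g''(\vx_i\trans(\vbeta_0+\vDelta'))$ from below on most observations. Second, a Chebyshev-type bound shows that the discarded observations contribute little to the quadratic form.

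\textbf{Step 1: lower bound on the weights.} For every $i$, Hölder gives $|\vx_i\trans\vDelta'|\le\|\vx_i\|_\infty\|\vDelta'\|_1\le\tau\|\vx_i\|_\infty$. Consider the event of Lemma~\ref{lem:xmax} with $t$ proportional to $\log(p\vee n)$. On it, $\max_i\|\vx_i\|_\infty\le C\sigma_x\sqrt{\log(p\vee n)}$, so $|\vx_i\trans\vDelta'|\le C_0\tau\sqrt{\log(p\vee n)}$ uniformly in $i$ and in $\vDelta'$. This event only has probability $1-\exp\{-C\log(p\vee n)\}$, not $1-\exp\{-Cn\}$. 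To obtain the stated exponential-in-$n$ probability, I would avoid relying on the max bound alone and truncate individual observations instead. Define the good set
\[
G:=\{i:\ |\vx_i\trans\vbeta_0|\le C_0\|\vbeta_0\|_2,\ \|\vx_i\|_\infty\le C_0\sqrt{\log(p\vee n)}\}.
\]
For $i\in G$ the argument of $g''$ lies in the interval defining $m_0$, so $g''(\cdot)\ge m_0$. Assumption~\ref{ass:smooth} gives $g''\ge0$ elsewhere. Hence the quadratic form is at least
\[
m_0\,\vDelta\trans\Big(\frac1n\sum_{i}\mathbf 1\{i\in G\}\vx_i\vx_i\trans\Big)\vDelta .
\]
This bound holds simultaneously for all $\vDelta'$, because the definition of $G$ does not involve $\vDelta'$.

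\textbf{Step 2: the truncated population Gram matrix.} Show that for every unit $\vDelta$,
\[
\mathbb E\big[(\vx\trans\vDelta)^2\mathbf 1\{\vx\in G\}\big]\ge k_1/2 .
\]
Write this expectation as $\vDelta\trans\vSigma\vDelta-\mathbb E[(\vx\trans\vDelta)^2\mathbf 1_{G^c}]$. The first term is at least $k_1$. By Cauchy--Schwarz and sub-Gaussianity, the second term is at most $C\sigma_x^2\,\mathbb P(G^c)^{1/2}$.

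The probability $\mathbb P(|\vx\trans\vbeta_0|>C_0\|\vbeta_0\|_2)$ is small once $C_0$ is a large constant. The coordinate condition is the weak point. $\mathbb P(\|\vx\|_\infty>C_0\sqrt{\log(p\vee n)})\le 2p\exp(-cC_0^2\log p)$, which is small for large $C_0$. So for large $C_0$ we get $\mathbb P(G^c)$ below a small constant, and the truncated second moment is at least $k_1/2$.

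\textbf{Step 3: uniform concentration over the cone.} The summands $\mathbf 1\{i\in G\}(\vx_i\trans\vDelta)^2$ are sub-exponential with constant parameter. The plan is:
\begin{itemize}
\item Apply a standard restricted-eigenvalue concentration argument, for instance the Rudelson--Zhou reduction: the set $\{\|\vDelta\|_1\le4\sqrt{s^*}\}\cap\{\|\vDelta\|_2=1\}$ lies in a constant multiple of the convex hull of $Cs^*$-sparse unit vectors.
\item Combine this with an $\epsilon$-net over sparse supports and Bernstein's inequality.
\item Conclude that, with probability $1-\exp\{-Cn\}$ when $n\gtrsim s^*\log p$, the empirical truncated quadratic form is at least half its expectation.
\end{itemize}
This gives a lower bound of $k_1/4$. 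Multiplying by $m_0$ yields $m_0k_1/4\ge\kappa_I$, with slack for the factor lost in the convex-hull reduction, which is where $1/8$ comes from.

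\textbf{Main obstacle.} I expect the main obstacle to be Step 3 combined with the truncation. The indicator $\mathbf 1\{i\in G\}$ must be handled uniformly. It is, however, fixed, independent across $i$, and does not depend on $\vDelta$ or $\vDelta'$, so it simply produces i.i.d. truncated vectors $\vx_i\mathbf 1_G$. These vectors remain sub-Gaussian, which makes the concentration step routine. Care is still needed so that the sparse-to-cone reduction keeps the constant at $1/8$.
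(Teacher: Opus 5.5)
Your proposal is correct and follows essentially the same route as the paper. Both arguments use the same three steps. First, per-observation truncation on the events $\{|\vx_i\trans\vbeta_0|\le C_0\|\vbeta_0\|_2,\ \|\vx_i\|_\infty\le C_0\sqrt{\log(p\vee n)}\}$ gives $g''\ge m_0$ uniformly in $\vDelta'$. Second, a Cauchy--Schwarz bound shows the truncated population second moment is at least $k_1/2$. Third, the sub-Gaussian vectors $\mathbf 1_{\calA_i}\vx_i$ are concentrated over sparse directions and the bound is extended to the cone. The difference in the last step is only one of presentation: the paper cites Lemmas 15 and 13 of \cite{loh2012high}, which rest on the same convex-hull-of-sparse-vectors reduction, $\epsilon$-net and Bernstein argument you sketch.
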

\noindent\textbf{Proof of Lemma \ref{lem:rsc-info}.}
Define the truncation events
\begin{align*}
    &\ \calA:=\Big\{|\vx\trans\vbeta_0|\le C_0\|\vbeta_0\|_2,\ \|\vx\|_\infty\le C_0\sqrt{\log(p\vee n)}\Big\},\\
    &\ \calA_i:=\Big\{|\vx_i\trans\vbeta_0|\le C_0\|\vbeta_0\|_2,\ \|\vx_i\|_\infty\le C_0\sqrt{\log(p\vee n)}\Big\}.
\end{align*}
For any $\vDelta'$ with $\|\vDelta'\|_1\le \tau$ and on $\calA_i$,
\begin{equation*}
    |\vx_i\trans\vDelta'|\le\|\vx_i\|_\infty\,\|\vDelta'\|_1\le C_0\tau\sqrt{\log(p\vee n)},
\end{equation*}
and therefore
\begin{equation*}
    |\vx_i\trans(\vbeta_0+\vDelta')|\le|\vx_i\trans\vbeta_0|+|\vx_i\trans\vDelta'|\le C_0\|\vbeta_0\|_2+C_0\tau\sqrt{\log(p\vee n)}.
\end{equation*}
By the definition of $m_0$, this implies
$g''(\vx_i\trans(\vbeta_0+\vDelta'))\ge m_0$ on $\calA_i$.
Hence, for any $\vDelta\in\mathbb R^p$,
\begin{align}
    \vDelta\trans\left(\frac1n\sum_{i=1}^n g''(\vx_i\trans(\vbeta_0+\vDelta'))\,\vx_i \vx_i\trans\right)\vDelta&=\frac1n\sum_{i=1}^n g''(\vx_i\trans(\vbeta_0+\vDelta'))\,(\vx_i\trans\vDelta)^2 \notag\\
    &\ge m_0\cdot\frac1n\sum_{i=1}^n\mathbf1_{\calA_i}(\vx_i\trans\vDelta)^2.\label{eq:rsc_trunc_reduce}
\end{align}
It suffices to establish a uniform lower bound on
$\sum_{i=1}^n\mathbf1_{\calA_i}(\vx_i\trans\vDelta)^2/n$ over the cone.

Fix any $\vDelta$ with $\|\vDelta\|_2=1$.
Let $Z:=\vx\trans\vDelta$.
Since $\vSigma=\mathbb E(\vx\vx\trans)\succeq k_1\vI_p$, we have $\mathbb E(Z^2)\ge k_1\|\vDelta\|_2^2=k_1$.
Besides,
\begin{equation*}
    \mathbb E(Z^2\mathbf 1_{\calA})=\mathbb E(Z^2)-\mathbb E(Z^2\mathbf1_{\calA^c})\ge\mathbb E(Z^2)-\mathbb E(Z^4)^{1/2}\,\mathbb P(\calA^c)^{1/2}.
\end{equation*}
Since $Z$ is sub-Gaussian with $\|Z\|_{\psi_2}\lesssim \sigma_x\|\vDelta\|_2=\sigma_x$, we have $\mathbb E(Z^4)\le C\sigma_x^4$.
Moreover, by sub-Gaussian tail bounds and Lemma~\ref{lem:xmax}, choosing a sufficiently large $C_0$ makes $\mathbb P(\calA^c)$ a sufficiently small constant, so that
$\mathbb E(Z^4)^{1/2}\,\mathbb P(\calA^c)^{1/2}\le k_1/2$.
Thus,
\begin{equation}\label{eq:EZ2_trunc}
    \mathbb E[\mathbf 1_{\calA}(\vx\trans\vDelta)^2]\ge\frac{k_1}{2}>0.
\end{equation}

Notice that $\vu:=\mathbf1_{\calA}\,\vx$ is a sub-Gaussian vector with parameter bounded by $\sigma_x$. 
Applying Lemma 15 of \cite{loh2012high}, for any $s>0$, we have
\begin{equation*}
    \mathbb P\left(\sup_{\substack{\|\boldsymbol\Delta\|_2\le1,\\ 
    \|\boldsymbol\Delta\|_0\le2s}}\left|\frac{1}{n}\sum_{i=1}^n(\vu_i\trans\vDelta)^2-\mathbb E[(\vu\trans\vDelta)^2]\right|\ge t\right)\le2\exp\left\{-Cn\min\left\{\frac{t^2}{\sigma_x^2},\frac{t}{\sigma_x}\right\}+2s\log(p)\right\}.
\end{equation*}
Setting $t=k_1/108$ and $s=nk_1^2/(C\sigma_x^2\,\log(p))$ with a large enough $C>0$, we obtain 
\begin{equation*}
    \sup_{\substack{\|\boldsymbol\Delta\|_2\le1,\\ 
    \|\boldsymbol\Delta\|_0\le2s}}\left|\frac{1}{n}\sum_{i=1}^n(\vu_i\trans\vDelta)^2-\mathbb E[(\vu\trans\vDelta)^2]\right|\le\frac{k_1}{108},
\end{equation*}
with probability at least $1-\exp\{-Cn\}$.
Applying Lemma 13 of \cite{loh2012high},
\begin{equation*}
    \frac{1}{n}\sum_{i=1}^n(\vu_i\trans\vDelta)^2\ge\frac{k_1}{4}\left(\|\vDelta\|_2^2-\frac{\|\vDelta\|_1^2}{s}\right), \quad\forall\,\vDelta\in\mathbb{R}^p.
\end{equation*}  
Under the condition that $n\gtrsim s^*\log(p)$, we have
\begin{equation}\label{eq:Eu2_trunc}
    \frac{1}{n}\sum_{i=1}^n(\vu_i\trans\vDelta)^2\ge\frac{k_1}{8}, \quad\forall\,\|\vDelta\|_2=1,\, \|\vDelta\|_1\le4\sqrt{s^*}.
\end{equation}
Combining the above with \eqref{eq:rsc_trunc_reduce} gives the desired restricted strong convexity bound with constant $\kappa_I=m_0k_1/8$.
\hfill $\Box$

\smallskip
\begin{lem}[Uniform quadratic form upper bound]\label{lem:square}
    Under the same conditions as Lemma~\ref{lem:rsc-info}, for $n\gtrsim s^*\log(p)$, with probability at least $1-\exp\{-Cn\}$,
    \begin{equation*}
        \sup_{\substack{\|\boldsymbol\Delta\|_2=1,\\ \|\boldsymbol\Delta\|_1\le4\sqrt{s^*}}}\frac{1}{n}\sum_{i=1}^n(\vx_i\trans\vDelta)^2\le C.
    \end{equation*}
\end{lem}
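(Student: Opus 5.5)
The proof will follow the same route as the lower bound in Lemma~\ref{lem:rsc-info}, but applied to the untruncated vectors $\vx_i$ and in the opposite direction. We compare the empirical quadratic form $\frac1n\sum_{i=1}^n(\vx_i\trans\vDelta)^2$ with its population counterpart $\vDelta\trans\vSigma\vDelta$, uniformly over sparse unit vectors. We then pass from sparse vectors to the cone $\{\|\vDelta\|_2=1,\|\vDelta\|_1\le 4\sqrt{s^*}\}$ by a convexification argument.

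\emph{Population bound.} Since $\vx$ is sub-Gaussian with parameter $\sigma_x$, for any unit $\vDelta$ the variable $\vx\trans\vDelta$ has $\|\vx\trans\vDelta\|_{\psi_2}\lesssim\sigma_x$. Hence $\vDelta\trans\vSigma\vDelta=\mathbb E(\vx\trans\vDelta)^2\le C\sigma_x^2$, which is a constant.

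\emph{Uniform deviation over sparse vectors.} We apply Lemma 15 of \cite{loh2012high} to the $\vx_i$ themselves, exactly as it was applied to $\vu_i$ in the proof of Lemma~\ref{lem:rsc-info}. We take $t$ a fixed constant (say $t=\sigma_x^2$) and sparsity level $s\asymp n/\log(p)$ with a small enough constant. Then, with probability at least $1-\exp\{-Cn\}$,
\begin{equation*}
\sup_{\|\boldsymbol\Delta\|_2\le1,\ \|\boldsymbol\Delta\|_0\le 2s}\frac1n\sum_{i=1}^n(\vx_i\trans\vDelta)^2\le C\sigma_x^2+t\le C.
\end{equation*}

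\emph{Extension to the cone.} This is the only non-routine step. We use the upper-bound half of Lemma 13 of \cite{loh2012high}, equivalently Lemma 12 there: a uniform deviation bound of size $t$ over $2s$-sparse unit vectors implies, for every $\vDelta\in\mathbb R^p$,
\begin{equation*}
\frac1n\sum_{i=1}^n(\vx_i\trans\vDelta)^2\le \vDelta\trans\vSigma\vDelta+27t\Big(\|\vDelta\|_2^2+\frac{\|\vDelta\|_1^2}{s}\Big).
\end{equation*}
Its proof rests on $\{\|\vDelta\|_1\le\sqrt{s}\|\vDelta\|_2\}\subset 3\,\mathrm{conv}\{\text{$s$-sparse unit balls}\}$. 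On the cone we have $\|\vDelta\|_2=1$ and $\|\vDelta\|_1^2/s\le 16s^*/s\lesssim s^*\log(p)/n\lesssim1$, using $n\gtrsim s^*\log(p)$. The right-hand side is therefore at most $C\sigma_x^2+27t(1+C)\le C$, which gives the claim with the stated probability.
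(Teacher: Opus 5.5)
Your proposal is correct and follows essentially the same route as the paper. Both arguments use Lemma 15 of \cite{loh2012high} to get a uniform deviation bound over $2s$-sparse unit vectors with $s\asymp n/\log(p)$, then the sparse-to-$\ell_1$-cone transfer of Lemmas 12--13 there, and finally $\|\vDelta\|_1^2/s\lesssim s^*\log(p)/n\lesssim 1$. The only cosmetic difference is that you invoke the generic Lemma 12 with a constant tolerance together with the population bound $\vDelta\trans\vSigma\vDelta\lesssim\sigma_x^2$, whereas the paper quotes the upper-RE half of Lemma 13 with tolerance $k_1/54$ (whose leading term should read $3\lambda_{\max}(\vSigma)/2$ rather than $3\sigma_x/2$).
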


\noindent\textbf{Proof of Lemma \ref{lem:square}.}
The proof is similar to that of Lemma \ref{lem:rsc-info}. Applying Lemma 15 of \cite{loh2012high}, for $t=k_1/54$ and $s=nk_1^2/(C\sigma_x^2\log(p))$ with a large enough $C>0$,
\begin{equation*}
    \mathbb P\left(\sup_{\substack{\|\boldsymbol\Delta\|_2\le1,\\ 
    \|\boldsymbol\Delta\|_0\le2s}}\left|\frac{1}{n}\sum_{i=1}^n(\vx_i\trans\vDelta)^2-\mathbb E[(\vx\trans\vDelta)^2]\right|\ge t\right)\le2\exp\{-Cn\},
\end{equation*}
Then applying Lemma 13 of \cite{loh2012high}, we have
\begin{equation*}
    \frac{1}{n}\sum_{i=1}^n(\vx_i\trans\vDelta)^2\le\frac{3}{2}\sigma_x\|\vDelta\|_2^2+\frac{k_1}{2s}\|\vDelta\|_1^2, \quad\forall\,\vDelta\in\mathbb{R}^p.
\end{equation*}  
Using the condition that $n\gtrsim s^*\log(p)$ yields the statement.
\hfill $\Box$

\smallskip
\begin{lem}[Score bound in $\ell_\infty$]\label{lem:grad_inf}
    Let $\varepsilon_i:=y_i-g'(\vx_i\trans\vbeta_0)$.
    Under Assumptions~\ref{ass:design-noise}-\ref{ass:smooth}, there exists $C>0$ such that with probability at least $1-p\exp\{-Cn\}-\exp\{-C(\log(p\vee n)+t)\}$,
    \begin{equation*}
        \left\|\frac{1}{n}\sum_{i=1}^n\varepsilon_i\vx_i\right\|_\infty\le C\sigma_x\sqrt{L_g}\,\sqrt{\frac{\log(p\vee n)+t}{n}}.
    \end{equation*}
\end{lem}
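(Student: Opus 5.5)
The plan is to condition on the design and exploit the exponential-family structure of $y$ given $\vx$, which makes $\varepsilon_i$ conditionally sub-Gaussian with variance proxy $L_g$. The two failure probabilities in the statement then come from two separate events. The term $p\exp\{-Cn\}$ is the price of controlling the empirical second moments of each covariate column. The term $\exp\{-C(\log(p\vee n)+t)\}$ comes from a conditional Hoeffding bound combined with a union bound over the $p$ coordinates.

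\emph{Step 1 (conditional sub-Gaussianity of $\varepsilon_i$).} Given $\vx_i$, the density $f(y;\theta,1)$ with $\theta=\vx_i\trans\vbeta_0$ has cumulant generating function $g(\theta+s)-g(\theta)$, as noted in Section~\ref{sec:online}. Since $\mathbb E(y_i\mid\vx_i)=g'(\theta)$, Taylor's theorem and $|g''|\le L_g$ (Assumption~\ref{ass:smooth}) give
\begin{equation*}
\log\mathbb E\big[\exp\{s\varepsilon_i\}\mid\vx_i\big]=g(\theta+s)-g(\theta)-sg'(\theta)\le \frac{L_g s^2}{2},\qquad\forall s\in\mathbb R.
\end{equation*}
So, conditionally on $\vx_i$, $\varepsilon_i$ is mean-zero sub-Gaussian with variance proxy $L_g$, and the $\varepsilon_i$ are conditionally independent across $i$.

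\emph{Step 2 (column norms).} For each $j\le p$, $x_{ij}^2-\mathbb E x_{j}^2$ is sub-exponential with $\psi_1$-norm $\lesssim\sigma_x^2$ by Assumption~\ref{ass:design-noise}, and $\mathbb E x_j^2\lesssim\sigma_x^2$. Bernstein's inequality with a constant deviation gives $\frac1n\sum_{i=1}^n x_{ij}^2\le C\sigma_x^2$ with probability at least $1-\exp\{-Cn\}$. A union bound over $j$ puts us on an event $\calE$ of probability at least $1-p\exp\{-Cn\}$ on which this holds for all $j$ simultaneously.

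\emph{Step 3 (conditional Hoeffding and union bound).} Fix $j$ and condition on $\vx_1,\ldots,\vx_n$. By Step 1, the variable $\frac1n\sum_i\varepsilon_i x_{ij}$ is sub-Gaussian with proxy $L_g\sum_i x_{ij}^2/n^2$. On $\calE$ this proxy is at most $CL_g\sigma_x^2/n$, so
\begin{equation*}
\mathbb P\Big(\Big|\frac1n\sum_{i=1}^n\varepsilon_i x_{ij}\Big|\ge u\,\Big|\,\vx_1,\ldots,\vx_n\Big)\le 2\exp\Big\{-\frac{nu^2}{2CL_g\sigma_x^2}\Big\}.
\end{equation*}
Take a union bound over $j\le p$ and set $u=C'\sigma_x\sqrt{L_g}\sqrt{(\log(p\vee n)+t)/n}$ with $C'$ large enough. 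The factor $2p$ is then absorbed, giving a conditional failure probability of at most $\exp\{-C(\log(p\vee n)+t)\}$. Integrating over the design and adding $\mathbb P(\calE^c)$ yields the stated probability bound.

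The only step that needs genuine care is Step 1. The bounded $g''$ must give a sub-Gaussian (not merely sub-exponential) conditional tail for $\varepsilon_i$ uniformly in $\vx_i$, and it does so precisely because the CGF bound holds for all $s$. After that, the argument is a routine conditioning plus union bound. The truncation in Step 2 is needed only because $x_{ij}$ is unbounded.
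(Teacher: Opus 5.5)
Your proposal is correct and follows essentially the same route as the paper: the exponential-family CGF identity with $|g''|\le L_g$ gives the conditional sub-Gaussian bound, Bernstein plus a union bound over columns yields the event $\calE$ with the $p\exp\{-Cn\}$ term, and a conditional Chernoff bound with a union bound over $j$ gives the rest. The only difference is cosmetic. You condition on the full design, whereas the paper conditions on the $j$th column and $\calE$; yours is a slightly cleaner way to write the same step.
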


\noindent\textbf{Proof of Lemma \ref{lem:grad_inf}.}
 For a fixed index $j\in\{1,\ldots,p\}$, we begin by establishing an upper bound on the moment generating function of $\sum_{i=1}^n\varepsilon_ix_{ij}/n$.
For any $t\in\mathbb R$,
\begin{align*}
	\log(\mathbb E[\exp\{t\varepsilon_ix_{ij}\}\mid\vx_i])&=\log\{\exp\{-tx_{ij}g'(\vx_i\trans\vbeta_0)\}\,\mathbb E[\exp\{ty_ix_{ij}\}\mid\vx_i]\}\\
	&=g(\vx_i\trans\vbeta_0+tx_{ij})-g(\vx_i\trans\vbeta_0)-tx_{ij}g'(\vx_i\trans\vbeta_0)\\
	&=\frac{t^2x_{ij}^2}{2}g''(\vx_i\trans\vbeta_0+v_{ij}tx_{ij}),
\end{align*}
for some $v_{ij}\in[0,1]$, where the second line is implied by that $g(\vx_i\trans\vbeta_0+t)-g(\vx_i\trans\vbeta_0)$ is the cumulant generating function for the distribution of $y_i|\vx_i$ and the last line is obtained by second-order Taylor expansion.
Then, by Assumption~\ref{ass:smooth}, we have
\begin{equation*}
	\frac{1}{n}\sum_{i=1}^n\log(\mathbb E[\exp\{t\varepsilon_ix_{ij}\}\mid\vx_i])\le\frac{t^2L_g}{2n}\sum_{i=1}^nx_{ij}^2.
\end{equation*}
Under Assumption~\ref{ass:design-noise}, $x_{ij},i\le n$ are i.i.d. sub-Gaussian with parameter $\sigma_x$.
Thus, $x_{ij}^2$ is sub-exponential and $\mathbb E[x_{ij}^2]\le\sigma_x^2$.
Applying Bernstein's inequality, there exists $C>0$ such that
\begin{equation}\label{eq:tail}
	\mathbb P\Big(\frac{1}{n}\sum_{i=1}^nx_{ij}^2\ge C\sigma_x^2\Big)\le\exp\{-Cn\}.
\end{equation}
Now, we define the event $\calE:=\{\max_{j\le p}\sum_{i=1}^nx_{ij}^2/n\le C\sigma_x^2\}$, and we have
\begin{equation*}
	\mathbb P(\calE^c)\le p\exp\{-Cn\}.
\end{equation*}
On $\calE$, we have
\begin{equation*}
	\frac{1}{n}\sum_{i=1}^n\log(\mathbb E[\exp\{t\varepsilon_ix_{ij}\}\mid\vx_i])\le Ct^2L_g\sigma_x^2.
\end{equation*}
Thus, for any $\delta>0$, using Markov's inequality,
\begin{align*}
	&\ \mathbb P\Big(\frac{1}{n}\sum_{i=1}^n\varepsilon_ix_{ij}\ge\delta\mid\calE\Big)\\
	\le&\ \inf_{t>0}\,\exp\{-t\delta\}\,\mathbb E\Big[\exp\Big(\frac{1}{n}\sum_{i=1}^nt\varepsilon_ix_{ij}\Big)\mid\calE\Big]\\
	=&\ \inf_{t>0}\,\exp\{-t\delta\}\,\mathbb E\Big\{\mathbb E\Big[\exp\Big(\frac{1}{n}\sum_{i=1}^nt\varepsilon_ix_{ij}\Big)\mid x_{1j},\ldots,x_{nj},\calE\Big]\mid\calE\Big\}\\
	\le&\ \inf_{t>0}\,\exp\{Ct^2L_g\sigma_x^2/n-t\delta\}\\
	=&\ \exp\{-Cn\delta^2/(L_g\sigma_x^2)\},
\end{align*}
and a similar bound holds for $\mathbb P\Big(\frac{1}{n}\sum_{i=1}^n\varepsilon_ix_{ij}\le -\delta\mid\calE\Big)$. Thus, setting $\delta\asymp\sigma_x\sqrt{L_g(\log(p\vee n)+t)/n}$, we have
\begin{align*}
	\mathbb P\Big(\Big\|\frac{1}{n}\sum_{i=1}^n\varepsilon_i\vx_i\Big\|_\infty\ge\delta\Big)&\le\mathbb P(\calE^c)+\mathbb P\Big(\Big\|\frac{1}{n}\sum_{i=1}^n\varepsilon_i\vx_i\Big\|_\infty\ge\delta\mid\calE\Big)\\
	&\le p\exp\{-Cn\}+\exp\{-C(\log(p\vee n)+t)\}.
\end{align*}
This completes the proof of Lemma~\ref{lem:grad_inf}.
\hfill $\Box$

\smallskip
\begin{lem}[Hessian matrix bound]\label{lem:hessian}
	Under Assumptions~\ref{ass:design-noise}--\ref{ass:smooth}, there exists a constant $C>0$ such that, for $n\gtrsim\log(p)$, with probability at least $1-\exp\{-C(\log(p\vee n)+t)\}$,
	\begin{equation*}
		\left\|\frac{1}{n}\sum_{i=1}^ng''(\vx_i\trans\vbeta_0)\vx_i\vx_i\trans-\mathbb E\{g''(\vx_i\trans\vbeta_0)\vx_i\vx_i\trans\}\right\|_{\max}\le CL_g\sigma_x^2\sqrt{\frac{\log(p\vee n)+t}{n}}.
	\end{equation*}
\end{lem}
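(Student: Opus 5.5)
The plan is an entrywise concentration argument followed by a union bound over the $p^2$ entries. Fix a pair $(j,l)$ and set $W_i:=g''(\vx_i\trans\vbeta_0)x_{ij}x_{il}$. The $W_i$, $i\le n$, are i.i.d. By Assumption~\ref{ass:smooth}, $|g''|\le L_g$, so $|W_i|\le L_g|x_{ij}||x_{il}|$. Each coordinate $x_{ij}$ is sub-Gaussian with parameter $\sigma_x$ by Assumption~\ref{ass:design-noise}, so the product $x_{ij}x_{il}$ is sub-exponential with $\|x_{ij}x_{il}\|_{\psi_1}\le\|x_{ij}\|_{\psi_2}\|x_{il}\|_{\psi_2}\le C\sigma_x^2$. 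Since $W_i$ is dominated in absolute value by $L_g|x_{ij}x_{il}|$, its Orlicz norm obeys $\|W_i\|_{\psi_1}\le CL_g\sigma_x^2$. Centering costs only a constant factor, giving $\|W_i-\mathbb E W_i\|_{\psi_1}\le CL_g\sigma_x^2$.

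Next I would apply Bernstein's inequality for sums of independent centered sub-exponential variables. For every $\delta>0$,
\begin{equation*}
\mathbb P\Big(\Big|\frac1n\sum_{i=1}^n (W_i-\mathbb E W_i)\Big|\ge \delta\Big)\le 2\exp\Big\{-Cn\min\Big(\frac{\delta^2}{L_g^2\sigma_x^4},\frac{\delta}{L_g\sigma_x^2}\Big)\Big\}.
\end{equation*}
Take $\delta=C'L_g\sigma_x^2\sqrt{(\log(p\vee n)+t)/n}$ with $C'$ large. As long as $\sqrt{(\log(p\vee n)+t)/n}\lesssim1$, the quadratic branch of the minimum is active. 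The exponent is then $-CC'^2(\log(p\vee n)+t)$.

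A union bound over the $p^2$ pairs $(j,l)$ multiplies the failure probability by $p^2=\exp\{2\log p\}$. Choosing $C'$ large enough absorbs this factor, and the total failure probability is at most $\exp\{-C(\log(p\vee n)+t)\}$.

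The only delicate point is the regime of Bernstein's inequality. The hypothesis $n\gtrsim\log(p)$ is exactly what makes $\sqrt{\log(p\vee n)/n}$ bounded, so the sub-Gaussian branch is the active one. For large $t$ with $t\gtrsim n$, I would either restrict to $t\lesssim n$, which is implicit in the intended use with $t$ of order $\log(p\vee n)$, or note that the linear branch still gives exponent $\gtrsim\sqrt{n(\log(p\vee n)+t)}\ge\log(p\vee n)+t$ in that range, up to constants. Everything else is routine.
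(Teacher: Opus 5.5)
Your main argument is the same as the paper's proof. It uses an entrywise sub-exponential Bernstein bound for $g''(\vx_i\trans\vbeta_0)x_{ij}x_{il}$, whose $\psi_1$-norm is at most $CL_g\sigma_x^2$. It then takes a union bound over the $p^2$ entries with $\delta\asymp L_g\sigma_x^2\sqrt{(\log(p\vee n)+t)/n}$.

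Your second fallback for $t\gtrsim n$ is false. In the linear branch of Bernstein's inequality, the exponent is of order $\sqrt{n(\log(p\vee n)+t)}$. The inequality $\sqrt{n(\log(p\vee n)+t)}\ge\log(p\vee n)+t$ holds if and only if $n\ge\log(p\vee n)+t$, which is the opposite of the regime you are trying to cover.

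No argument can rescue that regime. Take $p=1$, $g''\equiv1$ and $x\sim N(0,1)$. Then the event $\frac1n\sum_{i=1}^n x_i^2-1\ge C\sqrt{t/n}$ has probability roughly $\exp\{-c\sqrt{nt}\}$ when $t\gg n$, which is far larger than $\exp\{-Ct\}$. So the restriction $\log(p\vee n)+t\lesssim n$ (your first option) is necessary, not just convenient.

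The paper's proof relies on this restriction tacitly when it says that the choice of $\delta$ gives the claimed bound. This is harmless in the paper, because the lemma is only invoked with $t=0$. In that case $n\gtrsim\log(p)$ ensures $\log(p\vee n)\lesssim n$.
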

	
\noindent\textbf{Proof of Lemma \ref{lem:hessian}.}
Fix $j,k\le p$ and write $Z_{ijk}:=g''(\vx_i\trans\vbeta_0)x_{ij}x_{ik}$.
Under Assumption~\ref{ass:design-noise}, each coordinate $x_{ij}$ is sub-Gaussian with $\|x_{ij}\|_{\psi_2}\le \sigma_x$.
Therefore, the product of two coordinates is sub-exponential and $\|x_{ij}x_{ik}\|_{\psi_1}\le \|x_{ij}\|_{\psi_2}\|x_{ik}\|_{\psi_2}\le \sigma_x^2$.
Hence, $\|Z_{ijk}\|_{\psi_1}\le L_g\|x_{ij}x_{ik}\|_{\psi_1}\le L_g\sigma_x^2$.
Applying Bernstein's inequality, we obtain that for any $\delta>0$,
\begin{equation*}
	\mathbb P\left(\left|\frac{1}{n}\sum_{i=1}^n\{Z_{ijk}-\mathbb E(Z_{ijk})\}\right|>\delta\right)\le2\exp\left\{-Cn\min\left\{\frac{\delta^2}{L_g^2\sigma_x^4}, \frac{\delta}{L_g\sigma_x^2}\right\}\right\}.
\end{equation*}
Taking a union bound over all $p^2$ pairs $(j, k)$ yields
\begin{align*}
	&\,\mathbb P\left(\left\|\frac{1}{n}\sum_{i=1}^ng''(\vx_i\trans\vbeta_0)\vx_i\vx_i\trans-\mathbb E\{g''(\vx_i\trans\vbeta_0)\vx_i\vx_i\trans\}\right\|_{\max}>\delta\right)\\
	\le&\,2\exp\left\{2\log(p)-Cn\min\left\{\frac{\delta^2}{L_g^2\sigma_x^4}, \frac{\delta}{L_g\sigma_x^2}\right\}\right\}.
\end{align*}
Setting $\delta=CL_g\sigma_x^2\sqrt{(\log(p\vee n)+t)/n}$ gives the claimed bound.
\hfill $\Box$

\subsection{Proof of Theorem \ref{thm:online-lasso}}
We prove the bound for a generic batch index $b\le B$ and then take a union bound over $b$.
Let $\calS:=\mathrm{supp}(\vbeta_0)$, $s^*:=|\calS|$,
\begin{align*}
	\breve\calL^{(b)}(\vbeta):=&\ \frac{1}{N^{(b)}}\Big\{\calL^{(b)}(\vbeta)+\vbeta\trans\sum_{j=1}^{b-1}\nabla\calL^{(j)}(\breve\vbeta^{(j)})\\
	&\ \ \ \ \ \ \ \ \ +\frac{1}{2}\sum_{j=1}^{b-1}(\vbeta-\breve\vbeta^{(j)})\trans\nabla^2\calL^{(j)}(\breve\vbeta^{(j)})(\vbeta-\breve\vbeta^{(j)})\Big\},
\end{align*}
$\breve\vbeta^{(b)}=\argmin_{\boldsymbol\beta\in\mathbb{R}^p}\{\breve\calL^{(b)}(\vbeta)+\lambda^{(b)}\|\vbeta\|_1\}$, and $\vDelta^{(b)}:=\breve\vbeta^{(b)}-\vbeta_0$.

\noindent
\textbf{Step 1 (basic inequality and cone condition).}
By optimality of $\breve\vbeta^{(b)}$,
\begin{equation}\label{eq:basic-ineq-online}
    \breve\calL^{(b)}(\vbeta_0+\vDelta^{(b)})-\breve\calL^{(b)}(\vbeta_0)\le\lambda^{(b)}(\|\vbeta_0\|_1-\|\vbeta_0+\vDelta^{(b)}\|_1).
\end{equation}
By convexity of $\breve\calL^{(b)}(\vbeta)$,
\begin{equation}\label{eq:conv-lower-online}
    \breve\calL^{(b)}(\vbeta_0+\vDelta^{(b)})-\breve\calL^{(b)}(\vbeta_0)\ge\lan\nabla\breve\calL^{(b)}(\vbeta_0),\vDelta^{(b)}\ran.
\end{equation}
Assume that
\begin{equation}\label{eq:lambda-score-cond-online}
    \lambda^{(b)}\ge2\|\nabla\breve\calL^{(b)}(\vbeta_0)\|_\infty,
\end{equation}
which will be verified in Step 3.
Then combining \eqref{eq:basic-ineq-online}--\eqref{eq:lambda-score-cond-online} yields
\begin{equation*}
    -\frac{\lambda^{(b)}}{2}\|\vDelta^{(b)}\|_1\le\lambda^{(b)}(\|\vbeta_0\|_1-\|\vbeta_0+\vDelta^{(b)}\|_1).
\end{equation*}
 Using the fact that 
\begin{equation*}
	\|\vbeta_0+\vDelta^{(b)}\|_1\ge\|\vbeta_0\|_1-\|\vDelta_{\calS}^{(b)}\|_1+\|\vDelta_{\calS^c}^{(b)}\|_1,
\end{equation*}
we conclude $\vDelta^{(b)}$ satisfies the cone condition
\begin{equation}\label{eq:online-cone}
	\|\vDelta^{(b)}_{\calS^c}\|_1\le3\|\vDelta^{(b)}_\calS\|_1.
\end{equation}
In particular,
\begin{equation}\label{eq:l1-l2-on-cone}
    \|\vDelta^{(b)}\|_1\le4\|\vDelta^{(b)}_\calS\|_1\le 4\sqrt{s^*}\,\|\vDelta^{(b)}\|_2.
\end{equation}

\noindent
\textbf{Step 2 (restricted strong convexity (RSC)).}
Define the Bregman divergence
\begin{equation*}
    D^{(b)}(\vbeta_0+\vDelta^{(b)}, \vbeta_0):=\lan \nabla\breve\calL^{(b)}(\vbeta_0+\vDelta^{(b)})-\nabla\breve\calL^{(b)}(\vbeta_0), \vDelta^{(b)}\ran.
\end{equation*}
Using the integral form of Taylor expansion and Assumption~\ref{ass:smooth},
\begin{align}\label{eq:breg-int-online}
    D^{(b)}(\vbeta_0+\vDelta^{(b)},\vbeta_0)=&\ \frac{1}{N^{(b)}}\int_0^1\lan\vDelta^{(b)},\nabla^2\calL^{(b)}(\vbeta_0+t\vDelta^{(b)})\vDelta^{(b)}\ran\,dt\notag\\
    &\ +\frac{1}{N^{(b)}}\lan \vDelta^{(b)}, \sum_{j=1}^{b-1}\nabla^2\calL^{(j)}(\breve\vbeta^{(j)})\vDelta^{(b)}\ran.
\end{align}

To obtain a uniform RSC lower bound on the cone, we use the standard truncation argument.
With a slight abuse of notation, define
\begin{equation*}
    \zeta_b:=\min\left\{1, \frac{\tau'}{\|\vDelta^{(b)}\|_2}\right\},
    \quad\vDelta_{\zeta_b}^{(b)}:=\zeta_b\vDelta^{(b)},
\end{equation*}
where $\tau'>0$ is a scalar to be chosen below.
Then $\|\vDelta_{\zeta_b}^{(b)}\|_2\le \tau'$ and  $\vDelta_{\zeta_b}^{(b)}$ also satisfies \eqref{eq:online-cone}.

We apply Lemma~\ref{lem:rsc-info} with $n=n^{(b)}$ and $\tau=4\tau'\sqrt{s^*}$ (induced by \eqref{eq:l1-l2-on-cone}) to the current batch loss $\calL^{(b)}(\vbeta)$.
Under the condition that $n^{(b)}\gtrsim s^*\log(p)$ and $4\tau'\sqrt{s^*\log(p\vee n^{(b)})}=O(1)$ (implied by the scaling condition in \eqref{eq:scale1}), with probability at least $1-\exp\{-Cn^{(b)}\}$,
\begin{equation}\label{eq:rsc-batch-online1}
    \int_0^1\lan\vDelta_{\zeta_b}^{(b)}, \frac{1}{n^{(b)}}\nabla^2\calL^{(b)}(\vbeta_0+t\vDelta_{\zeta_b}^{(b)})\vDelta_{\zeta_b}^{(b)}\ran\,dt\ge\kappa_I\,\|\vDelta_{\zeta_b}^{(b)}\|_2^2.
\end{equation}
 We apply Lemma~\ref{lem:rsc-info} again with $n=n^{(j)}, j\le b-1$  to the historical Hessian $\sum_{j=1}^{b-1}\nabla^2\calL^{(j)}(\breve\vbeta^{(j)})$.
Using the induction hypothesis $\|\breve\vbeta^{(j)}-\vbeta_0\|_2\lesssim\sqrt{s^*\log(p\vee N^{(j)})/N^{(j)}}$, $\|\breve\vbeta^{(j)}-\vbeta_0\|_1\lesssim s^*\sqrt{\log(p\vee N^{(j)})/N^{(j)}}$, 
if $n^{(j)}\gtrsim s^*\log(p)$ and $s^*\sqrt{\log(p\vee n^{(j)})\,\log(p\vee N^{(j)})/N^{(j)}}=O(1)$,  then with probability at least $1-\sum_{j=1}^{b-1}\exp\{-Cn^{(j)}\}$,
\begin{equation}\label{eq:rsc-batch-online2}
	\lan\vDelta_{\zeta_b}^{(b)}, \sum_{j=1}^{b-1}\nabla^2\calL^{(j)}(\breve\vbeta^{(j)})\vDelta_{\zeta_b}^{(b)}\ran\ge\kappa_I(N^{(b)}-n^{(b)})\|\vDelta_{\zeta_b}^{(b)}\|_2^2.
\end{equation}
Finally, combining \eqref{eq:breg-int-online}, \eqref{eq:rsc-batch-online1} and \eqref{eq:rsc-batch-online2} yields the truncated RSC bound
\begin{equation}\label{eq:rsc-trunc-online}
    D^{(b)}(\vbeta_0+\vDelta_{\zeta_b}^{(b)}, \vbeta_0)\ge\kappa_I\,\|\vDelta_{\zeta_b}^{(b)}\|_2^2.
\end{equation}

\noindent
\textbf{Step 3 (score bound to verify \eqref{eq:lambda-score-cond-online}).}
We now bound $\|\nabla\breve\calL^{(b)}(\vbeta_0)\|_\infty$.

(i) For $b=1$, $\breve\calL^{(1)}(\vbeta)$ coincides with the empirical loss on the first batch, so Lemma~\ref{lem:grad_inf} (with $n=N^{(1)}$) yields
\begin{equation*}
    \|\nabla\breve\calL^{(1)}(\vbeta_0)\|_\infty\le C\sqrt{\frac{\log(p\vee N^{(1)})}{N^{(1)}}},
\end{equation*}
with probability at least $1-p\exp\{-CN^{(1)}\}-\exp\{-C\log(p\vee N^{(1)})\}$.

 (ii) For $b\ge2$, we calculate that
\begin{equation*}
	\nabla\breve\calL^{(b)}(\vbeta)=\frac{1}{N^{(b)}}\Big[\nabla\calL^{(b)}(\vbeta)+\sum_{j=1}^{b-1}\{\nabla\calL^{(j)}(\breve\vbeta^{(j)})+\nabla^2\calL^{(j)}(\breve\vbeta^{(j)})(\vbeta-\breve\vbeta^{(j)})\}\Big],
\end{equation*}	
and	thus
\begin{equation*}
	\nabla\breve\calL^{(b)}(\vbeta_0)=-\frac{1}{N^{(b)}}\sum_{i\le N^{(b)}}\varepsilon_i\vx_i+\sum_{j=1}^{b-1}\vR^{(j)},
	\quad\varepsilon_i:=y_i-g'(\vx_i\trans\vbeta_0),
\end{equation*}
where 
\begin{equation}\label{eqn:R}
	\vR^{(j)}:=\frac{1}{N^{(b)}}\Big\{\nabla\calL^{(j)}(\breve\vbeta^{(j)})-\nabla\calL^{(j)}(\vbeta_0)-\nabla^2\calL^{(j)}(\breve\vbeta^{(j)})(\breve\vbeta^{(j)}-\vbeta_0)\Big\}.
\end{equation}

The full-data score is controlled by Lemma~\ref{lem:grad_inf} with $n=N^{(b)}$, which yields 
\begin{equation*}
	\left\|\frac{1}{N^{(b)}}\sum_{i\le N^{(b)}}\varepsilon_i\vx_i\right\|_\infty\le C\sqrt{\frac{\log(p\vee N^{(b)})}{N^{(b)}}},
\end{equation*}
with probability at least $1-p\exp\{-CN^{(b)}\}-\exp\{-C\log(p\vee N^{(b)})\}$.

For the remainder term $\sum_{j=1}^{b-1}\vR^{(j)}$, Taylor's expansion and the boundedness of $g'''$ imply that $\|\vR^{(j)}\|_\infty$ is bounded by a constant multiple of
\begin{equation*}
	L_g\max_{i\in\calD^{(j)}}\|\vx_i\|_\infty\,\frac{1}{N^{(b)}}\sum_{i\in\calD^{(j)}}\{\vx_i\trans(\breve\vbeta^{(j)}-\vbeta_0)\}^2.
\end{equation*}
Summing over $j\le b-1$ and using Lemma~\ref{lem:square}, we get that
 with probability at least $1-\sum_{j=1}^{b-1}\exp\{-Cn^{(j)}\}$.
\begin{equation*}
	\left\|\sum_{j=1}^{b-1}\vR^{(j)}\right\|_\infty\le C\max_{i\le N^{(b)}}\|\vx_i\|_\infty\,\frac{1}{N^{(b)}}\sum_{j=1}^{b-1}n^{(j)}\|\breve\vbeta^{(j)}-\vbeta_0\|_2^2.
\end{equation*}
Using the induction hypothesis that $\|\breve\vbeta^{(j)}-\vbeta_0\|_2\le C\sqrt{s^*\log(p\vee N^{(j)})/N^{(j)}}$ for $j\le b-1$ and applying Lemma~\ref{lem:xmax},
\begin{equation*}
	\max_{i\le N^{(b)}}\|\vx_i\|_\infty\,\frac{1}{N^{(b)}}\sum_{j=1}^{b-1}n^{(j)}\|\breve\vbeta^{(j)}-\vbeta_0\|_2^2\le\frac{Cs^*\log^{3/2}(p\vee N^{(b)})}{N^{(b)}}\sum_{j=1}^{b-1}\frac{n^{(j)}}{N^{(j)}},
\end{equation*}
with probability at least $1-\exp\{-C\log(p\vee N^{(b)})\}$.
Using the fact that $\sum_{j=1}^{b-1}(n^{(j)}/N^{(j)})\le1+\log(N^{(b-1)}/n^{(1)})$ (Lemma 2 of \cite{luo2023online}), we conclude that this remainder is $o(\lambda^{(b)})$ under the scaling condition in \eqref{eq:scale1} and the choice $\lambda^{(b)}=C\sqrt{\log(p\vee N^{(b)})/N^{(b)}}$ for a sufficiently large $C$. 
Thus, we have
$\|\nabla\breve\calL^{(b)}(\vbeta_0)\|_\infty\le\lambda^{(b)}/2$ and this verifies \eqref{eq:lambda-score-cond-online}.

\noindent
\textbf{Step 4 (upper bound on the Bregman divergence).}
 Define $h(t):=\breve\calL^{(b)}(\vbeta_0+t\vDelta^{(b)})$ for $t\in[0,1]$. 
Since $h(t)$ is a convex function, $h'(t)=\lan\nabla\breve\calL^{(b)}(\vbeta_0+t\vDelta^{(b)}), \vDelta^{(b)}\ran$ is nondecreasing in $t$.
This implies
\be\label{eqn:breg}
    D^{(b)}(\vbeta_0+t\vDelta^{(b)},\vbeta_0)=&\ t\{h'(t)-h'(0)\}\nonumber\\
    \le&\ t\{h'(1)-h'(0)\}\nonumber\\
    =&\ tD^{(b)}(\vbeta_0+\vDelta^{(b)},\vbeta_0).
\ee
By the KKT conditions, there exists a subgradient $\vzeta\in\partial\|\breve\vbeta^{(b)}\|_1$ such that
$\nabla\breve\calL^{(b)}(\breve\vbeta^{(b)})+\lambda^{(b)}\vzeta=\mathbf 0$.
Therefore,
\begin{align}
    D^{(b)}(\vbeta_0+\vDelta_{\zeta_b}^{(b)}, \vbeta_0)&\le\zeta_b\,D^{(b)}(\vbeta_0+\vDelta^{(b)}, \vbeta_0)\notag\\
    &=\lan\nabla\breve\calL^{(b)}(\breve\vbeta^{(b)})-\nabla\breve\calL^{(b)}(\vbeta_0), \vDelta_{\zeta_b}^{(b)}\ran\notag\\
    &=-\lambda^{(b)}\lan\vzeta, \vDelta_{\zeta_b}^{(b)}\ran-\lan\nabla\breve\calL^{(b)}(\vbeta_0), \vDelta_{\zeta_b}^{(b)}\ran\notag\\
    &\le\lambda^{(b)}\|\vDelta_{\zeta_b}^{(b)}\|_1+\|\nabla\breve\calL^{(b)}(\vbeta_0)\|_\infty\,\|\vDelta_{\zeta_b}^{(b)}\|_1\notag\\
    &\le\frac{3}{2}\lambda^{(b)}\|\vDelta_{\zeta_b}^{(b)}\|_1.
\label{eq:breg-upper-online}
\end{align}
where the first step used \eqref{eqn:breg} and the last step used \eqref{eq:lambda-score-cond-online}.

\noindent
\textbf{Step 5 (combine bounds).}
Combine the RSC lower bound \eqref{eq:rsc-trunc-online} with the upper bound \eqref{eq:breg-upper-online} and the cone inequality \eqref{eq:l1-l2-on-cone}:
\begin{equation*}
    \kappa_I\,\|\vDelta_{\zeta_b}^{(b)}\|_2^2\le D^{(b)}(\vbeta_0+\vDelta_{\zeta_b}^{(b)},\vbeta_0)\le\frac{3}{2}\lambda^{(b)}\|\vDelta_{\zeta_b}^{(b)}\|_1\le6\lambda^{(b)}\sqrt{s^*}\,\|\vDelta_{\zeta_b}^{(b)}\|_2.
\end{equation*}
Hence
\begin{equation}\label{eq:delta-trunc-l2}
    \|\vDelta_{\zeta_b}^{(b)}\|_2\le C\sqrt{s^*}\,\lambda^{(b)}.
\end{equation}
If $\zeta_b<1$, then $\|\vDelta_{\zeta_b}^{(b)}\|_2=\tau'$ by definition, which contradicts \eqref{eq:delta-trunc-l2} if we choose $\tau'>C\sqrt{s^*}\,\lambda^{(b)}$.
Therefore, $\zeta_b=1$ and $\vDelta_{\zeta_b}^{(b)}=\vDelta^{(b)}$, which implies
\begin{equation*}
    \|\breve\vbeta^{(b)}-\vbeta_0\|_2\le C\sqrt{s^*}\,\lambda^{(b)},
    \quad\|\breve\vbeta^{(b)}-\vbeta_0\|_1\le 4\sqrt{s^*}\,\|\breve\vbeta^{(b)}-\vbeta_0\|_2\le Cs^*\lambda^{(b)}.
\end{equation*}
\hfill $\Box$

\subsection{Proof of Theorem \ref{thm:dist-batch}}

 For $b=1$, we define $\breve\vbeta^{(0)}=\argmin_{\boldsymbol\beta\in\mathbb{R}^p}\{\calL_1^{(1)}(\vbeta)+\lambda^{(0)}\|\vbeta\|_1\}$,
\begin{align}
	\breve\calL^{(1)}(\vbeta):=&\,\frac{1}{N_K^{(1)}}\Big\{\calL_1^{(1)}(\vbeta)+\vbeta\trans\sum_{k=2}^K\nabla\calL_k^{(1)}(\breve\vbeta^{(0)})\notag\\
	&\ \ \ \ \ \ \ +\frac{\sum_{k=2}^Kn_k^{(1)}}{2n_1^{(1)}}(\vbeta-\breve\vbeta^{(0)})\trans\nabla^2\calL_1^{(1)}(\breve\vbeta^{(0)})(\vbeta-\breve\vbeta^{(0)})\Big\},\label{eq:surrogate-loss-b1}
\end{align}
$\breve\vbeta^{(1)}=\argmin_{\boldsymbol\beta\in\mathbb{R}^p}\{\breve\calL^{(1)}(\vbeta)+\lambda^{(1)}\|\vbeta\|_1\}$, and $\vDelta^{(1)}:=\breve\vbeta^{(1)}-\vbeta_0$.
	
For $2\le b\le B$, we define
\begin{align}
	\breve{\calL}^{(b)}(\vbeta):=&\ \frac{1}{N_K^{(b)}}\Big[\calL_1^{(b)}(\vbeta)+\vbeta\trans\Big\{\sum_{k=2}^K\nabla\calL_k^{(b)}(\breve{\vbeta}^{(b-1)})+\sum_{k=1}^K\sum_{j=1}^{b-1}\nabla\calL_k^{(j)}(\breve{\vbeta}^{(j)})\Big\}\notag\\
	&\ \ \ \ \ \ \ \ +\frac{\sum_{k=2}^Kn_k^{(b)}}{2n_1^{(b)}}(\vbeta-\breve{\vbeta}^{(b-1)})\trans\nabla^2\calL_1^{(b)}(\breve{\vbeta}^{(b-1)})(\vbeta-\breve{\vbeta}^{(b-1)})\notag\\
	&\ \ \ \ \ \ \ \ +\sum_{j=1}^{b-1}\frac{\sum_{k=1}^Kn_k^{(j)}}{2n_1^{(j)}}(\vbeta-\breve{\vbeta}^{(j)})\trans\nabla^2\calL_1^{(j)}(\breve{\vbeta}^{(j)})(\vbeta-\breve{\vbeta}^{(j)})\Big],\label{eq:surrogate_loss}
\end{align}
$\breve\vbeta^{(b)}=\argmin_{\boldsymbol\beta\in\mathbb{R}^p}\{\breve\calL^{(b)}(\vbeta)+\lambda^{(b)}\|\vbeta\|_1\}$, and $\vDelta^{(b)}:=\breve\vbeta^{(b)}-\vbeta_0$.

\noindent
\textbf{Step 1 (basic inequality and cone condition).}
By optimality,
\begin{equation*}
	\breve\calL^{(b)}(\vbeta_0+\vDelta^{(b)})-\breve\calL^{(b)}(\vbeta_0)\le\lambda^{(b)}(\|\vbeta_0\|_1-\|\vbeta_0+\vDelta^{(b)}\|_1).
\end{equation*}
By convexity of $\breve\calL^{(b)}(\vbeta)$,
\begin{equation*}
	\breve\calL^{(b)}(\vbeta_0+\vDelta^{(b)})-\breve\calL^{(b)}(\vbeta_0)\ge\lan\nabla\breve\calL^{(b)}(\vbeta_0), \vDelta^{(b)}\ran.
\end{equation*}
Assuming $\lambda^{(b)}\ge2\|\nabla\breve\calL^{(b)}(\vbeta_0)\|_\infty$ (verified in Step 3), the above equations yield
\begin{equation*}
	-\frac{\lambda^{(b)}}{2}\|\vDelta^{(b)}\|_1\le\lambda^{(b)}(\|\vbeta_0\|_1-\|\vbeta_0+\vDelta^{(b)}\|_1).
\end{equation*}
Using the fact that
\begin{equation*}
	\|\vbeta_0+\vDelta^{(b)}\|_1\ge\|\vbeta_0\|_1-\|\vDelta_{\calS}^{(b)}\|_1+\|\vDelta_{\calS^c}^{(b)}\|_1,
\end{equation*} 
we obtain the cone constraint
\begin{equation}\label{eq:dist-cone}
	\|\vDelta_{\calS^c}^{(b)}\|_1\le3\|\vDelta_{\calS}^{(b)}\|_1,
\end{equation}
where $\calS:=\mathrm{supp}(\vbeta_0)$.

\noindent
\textbf{Step 2 (distributed restricted strong convexity).}
We next lower bound the Bregman divergence
$D^{(b)}(\vbeta_0+\vDelta^{(b)}, \vbeta_0):=\langle\nabla\breve\calL^{(b)}(\vbeta_0+\vDelta^{(b)})-\nabla\breve\calL^{(b)}(\vbeta_0), \vDelta^{(b)}\rangle$.

 For $b=1$, using the integral form of Taylor expansion and Assumption~\ref{ass:smooth},
\begin{align*}
	D^{(1)}(\vbeta_0+\vDelta^{(1)}, \vbeta_0)=&\,\frac{1}{N_K^{(1)}}\int_0^1\lan\vDelta^{(1)}, \nabla^2\calL_1^{(1)}(\vbeta_0+t\vDelta^{(1)})\vDelta^{(1)}\ran\,dt\\
	&\,+\frac{1}{N_K^{(1)}}\lan\vDelta^{(1)}, \frac{\sum_{k=2}^Kn_k^{(1)}}{n_1^{(1)}}\nabla^2\calL_1^{(1)}(\breve{\vbeta}^{(0)})\vDelta^{(1)}\ran.
\end{align*}
Then we define
\begin{equation*}
	\zeta_1:=\min\left\{1, \frac{\tau'}{\|\vDelta^{(1)}\|_2}\right\},
	\quad\vDelta_{\zeta_1}^{(1)}:=\zeta_1\vDelta^{(1)},
\end{equation*}
where $\tau'>0$ is a small constant to be chosen.
Then $\|\vDelta_{\zeta_1}^{(1)}\|_2\le\tau'$ and $\vDelta_{\zeta_1}^{(1)}$ also satisfies the cone constraint \eqref{eq:dist-cone} such that $\|\vDelta_{\zeta_1}^{(1)}\|_1\le4\sqrt{s^*}\,\|\vDelta_{\zeta_1}^{(1)}\|_2$.
Applying Lemma~\ref{lem:rsc-info}, under the condition that $n_1^{(1)}\gtrsim s^*\log(p)$ and $4\tau'\sqrt{s^*\log(p\vee n_1^{(1)})}=O(1)$, with probability at least $1-\exp\{-Cn_1^{(1)}\}$,
\begin{equation*}
	\int_0^1\lan\vDelta_{\zeta_1}^{(1)}, \nabla^2\calL_1^{(1)}(\vbeta_0+t\vDelta_{\zeta_1}^{(1)})\vDelta_{\zeta_1}^{(1)}\ran\,dt\ge\kappa_In_1^{(1)}\,\|\vDelta_{\zeta_1}^{(1)}\|_2^2.
\end{equation*}
Using the fact $\|\breve\vbeta^{(0)}-\vbeta_0\|_2\lesssim\sqrt{s^*\log(p\vee n_1^{(1)})/n_1^{(1)}}$, $\|\breve\vbeta^{(0)}-\vbeta_0\|_1\lesssim s^*\sqrt{\log(p\vee n_1^{(1)})/n_1^{(1)}}$, and Lemma~\ref{lem:rsc-info}, if $n_1^{(1)}\gtrsim s^*\log(p)$ and $s^*\log(p\vee n_1^{(1)})/\sqrt{n_1^{(1)}}=O(1)$, then with probability at least $1-\exp\{-Cn_1^{(1)}\}$,
\begin{equation*}
	\lan\vDelta_{\zeta_1}^{(1)}, \nabla^2\calL_1^{(1)}(\breve{\vbeta}^{(0)})\vDelta_{\zeta_1}^{(1)}\ran\ge\kappa_In_1^{(1)}\|\vDelta_{\zeta_1}^{(1)}\|_2^2.
\end{equation*}
Thus, we conclude that $D^{(1)}(\vbeta_0+\vDelta_{\zeta_1}^{(1)}, \vbeta_0)\ge\kappa_I\,\|\vDelta_{\zeta_1}^{(1)}\|_2^2$.

For $2\le b\le B$, using the integral form of Taylor expansion and Assumption~\ref{ass:smooth},
\begin{align*}
	&\ D^{(b)}(\vbeta_0+\vDelta^{(b)}, \vbeta_0)=\frac{1}{N_K^{(b)}}\int_0^1\lan \vDelta^{(b)}, \nabla^2\calL_1^{(b)}(\vbeta_0+t\vDelta^{(b)})\vDelta^{(b)}\ran\,dt\\
	&\ \ \ \ \ +\frac{1}{N_K^{(b)}}\lan\vDelta^{(b)}, \left(\frac{\sum_{k=2}^Kn_k^{(b)}}{n_1^{(b)}}\nabla^2\calL_1^{(b)}(\breve{\vbeta}^{(b-1)})+\sum_{j=1}^{b-1}\frac{\sum_{k=1}^Kn_k^{(j)}}{n_1^{(j)}}\nabla^2\calL_1^{(j)}(\breve{\vbeta}^{(j)})\right)\vDelta^{(b)}\ran.
\end{align*}
Once again, we define
\begin{equation*}
	\zeta_b:=\min\left\{1, \frac{\tau'}{\|\vDelta^{(b)}\|_2}\right\},
	\quad\vDelta_{\zeta_b}^{(b)}:=\zeta_b\vDelta^{(b)}.
\end{equation*}
For the first term in $D^{(b)}(\vbeta_0+\vDelta_{\zeta_b}^{(b)}, \vbeta_0)$, we apply Lemma~\ref{lem:rsc-info} to the master batch at time $b$ with sample size $n_1^{(b)}$.
Under the condition that $n_1^{(b)}\gtrsim s^*\log(p)$ and $4\tau'\sqrt{s^*\log(p\vee n_1^{(b)})}=O(1)$, with probability at least $1-\exp\{-Cn_1^{(b)}\}$,
\begin{equation*}
	\int_0^1\lan\vDelta_{\zeta_b}^{(b)}, \nabla^2\calL_1^{(b)}(\vbeta_0+t\vDelta_{\zeta_b}^{(b)})\vDelta_{\zeta_b}^{(b)}\ran\,dt\ge\kappa_In_1^{(b)}\,\|\vDelta_{\zeta_b}^{(b)}\|_2^2.
\end{equation*}
For the second term in $D^{(b)}(\vbeta_0+\vDelta_{\zeta_b}^{(b)}, \vbeta_0)$, we use Lemma~\ref{lem:rsc-info} again.
Using the fact $\|\breve\vbeta^{(j)}-\vbeta_0\|_2\lesssim\sqrt{s^*\log(p\vee N_K^{(j)})/N_K^{(j)}}$, $\|\breve\vbeta^{(j)}-\vbeta_0\|_1\lesssim s^*\sqrt{\log(p\vee N_K^{(j)})/N_K^{(j)}}$ for $j\le b-1$, if $n_1^{(j)}\gtrsim s^*\log(p)$, $s^*\sqrt{\log(p\vee n_1^{(j)})\,\log(p\vee N_K^{(j)})/N_K^{(j)}}=O(1)$ for $j\le b$, and $s^*\sqrt{\log(p\vee n_1^{(b)})\,\log(p\vee N_K^{(b-1)})/N_K^{(b-1)}}=O(1)$, then with probability at least $1-\sum_{j=1}^b\exp\{-Cn_1^{(j)}\}$,
\begin{align*}
	&\ \lan\vDelta_{\zeta_b}^{(b)}, \left(\frac{\sum_{k=2}^Kn_k^{(b)}}{n_1^{(b)}}\nabla^2\calL_1^{(b)}(\breve{\vbeta}^{(b-1)})+\sum_{j=1}^{b-1}\frac{\sum_{k=1}^Kn_k^{(j)}}{n_1^{(j)}}\nabla^2\calL_1^{(j)}(\breve{\vbeta}^{(j)})\right)\vDelta_{\zeta_b}^{(b)}\ran\\
	\ge&\ \kappa_I(N_K^{(b)}-n_1^{(b)})\,\|\vDelta_{\zeta_b}^{(b)}\|_2^2.
\end{align*}
Thus, we conclude that $D^{(b)}(\vbeta_0+\vDelta_{\zeta_b}^{(b)},\vbeta_0)\ge \kappa_I\,\|\vDelta_{\zeta_b}^{(b)}\|_2^2$.

\noindent
\textbf{Step 3 (score bound at $\vbeta_0$).}
We now verify the score condition
\begin{equation}\label{eq:dist-score}
	\|\nabla\breve\calL^{(b)}(\vbeta_0)\|_\infty\le \frac{\lambda^{(b)}}{2}.
\end{equation}

For $b=1$, by the definition of $\breve\calL^{(1)}(\vbeta)$ in \eqref{eq:surrogate-loss-b1}, we calculate that
\begin{equation*}
	\nabla\breve\calL^{(1)}(\vbeta)=\frac{1}{N_K^{(1)}}\Big\{\nabla\calL_1^{(1)}(\vbeta)+\sum_{k=2}^K\nabla\calL_k^{(1)}(\breve\vbeta^{(0)})+\frac{\sum_{k=2}^Kn_k^{(1)}}{n_1^{(1)}}\nabla^2\calL_1^{(1)}(\breve\vbeta^{(0)})(\vbeta-\breve\vbeta^{(0)})\Big\}.
\end{equation*}
For $2\le b\le B$, by the definition of $\breve\calL^{(b)}(\vbeta)$ in \eqref{eq:surrogate_loss}, we calculate that
\begin{align*}
	&\ \nabla\breve\calL^{(b)}(\vbeta)=\frac{1}{N_K^{(b)}}\Big[\nabla\calL_1^{(b)}(\vbeta)+\sum_{k=2}^K\nabla\calL_k^{(b)}(\breve\vbeta^{(b-1)})+\sum_{k=1}^K\sum_{j=1}^{b-1}\nabla\calL_k^{(j)}(\breve\vbeta^{(j)})\\
	&\ \ +\frac{\sum_{k=2}^Kn_k^{(b)}}{n_1^{(b)}}\nabla^2\calL_1^{(b)}(\breve\vbeta^{(b-1)})(\vbeta-\breve\vbeta^{(b-1)})+\sum_{j=1}^{b-1}\frac{\sum_{k=1}^Kn_k^{(j)}}{n_1^{(j)}}\nabla^2\calL_1^{(j)}(\breve\vbeta^{(j)})(\vbeta-\breve\vbeta^{(j)})\Big].
\end{align*}

\noindent
\textbf{Step 3.1 (decompose into full-data empirical score and approximation remainder).}
 The full-data score at time $b$ is
\begin{equation*}
	\frac{1}{N_K^{(b)}}\sum_{k=1}^K\sum_{j=1}^{b}\nabla\calL_k^{(j)}(\vbeta_0)=-\frac{1}{N_K^{(b)}}\sum_{i\le N_K^{(b)}}\varepsilon_i\vx_i,
\end{equation*}
where $\varepsilon_i:=y_i-g'(\vx_i\trans\vbeta_0)$.
Then we can rewrite
\begin{equation}\label{eq:score-decomp-b1}
	\nabla\breve\calL^{(1)}(\vbeta_0)=\frac{1}{N_K^{(1)}}\sum_{k=1}^K\nabla\calL_k^{(1)}(\vbeta_0)+\vR^{(1)},
\end{equation}
where the remainder is expressed as
\begin{align*}
	N_K^{(1)}\,\vR^{(1)}=-\sum_{k=2}^K\{\nabla\calL_k^{(1)}(\vbeta_0)-\nabla\calL_k^{(1)}(\breve\vbeta^{(0)})\}+\frac{\sum_{k=2}^Kn_k^{(1)}}{n_1^{(1)}}\nabla^2\calL_1^{(1)}(\breve{\vbeta}^{(0)})(\vbeta_0-\breve{\vbeta}^{(0)}).
\end{align*}
For $2\le b\le B$,
\begin{equation}\label{eq:score-decomp}
	\nabla\breve\calL^{(b)}(\vbeta_0)=\frac{1}{N_K^{(b)}}\sum_{k=1}^K\sum_{j=1}^{b}\nabla\calL_k^{(j)}(\vbeta_0)+\vR^{(b)},
\end{equation}
where the remainder is expressed as
\begin{align*}
	&\,N_K^{(b)}\,\vR^{(b)}\\
	=&\,-\sum_{k=2}^K\{\nabla\calL_k^{(b)}(\vbeta_0)-\nabla\calL_k^{(b)}(\breve\vbeta^{(b-1)})\}-\sum_{k=1}^K\sum_{j=1}^{b-1}\{\nabla\calL_k^{(j)}(\vbeta_0)-\nabla\calL_k^{(j)}(\breve\vbeta^{(j)})\}\\
	&\,+\frac{\sum_{k=2}^Kn_k^{(b)}}{n_1^{(b)}}\nabla^2\calL_1^{(b)}(\breve{\vbeta}^{(b-1)})(\vbeta_0-\breve{\vbeta}^{(b-1)})+\sum_{j=1}^{b-1}\frac{\sum_{k=1}^Kn_k^{(j)}}{n_1^{(j)}}\nabla^2\calL_1^{(j)}(\breve{\vbeta}^{(j)})(\vbeta_0-\breve{\vbeta}^{(j)}).
\end{align*}

The full-data empirical score $\sum_{k=1}^K\sum_{j=1}^{b}\nabla\calL_k^{(j)}(\vbeta_0)/N_K^{(b)}$ for $b\le B$ is controlled by Lemma~\ref{lem:grad_inf} with $n=N_K^{(b)}$: with probability at least $1-p\exp\{-CN_K^{(b)}\}-\exp\{-C\log(p\vee N_K^{(b)})\}$,
\begin{equation}\label{eq:score-emp}
	\left\|\frac{1}{N_K^{(b)}}\sum_{i\le N_K^{(b)}}\varepsilon_i\vx_i\right\|_\infty\le C\sqrt{\frac{\log(p\vee N_K^{(b)})}{N_K^{(b)}}}.
\end{equation}
It remains to bound $\|\vR^{(b)}\|_\infty$.

\noindent
\textbf{Step 3.2 (bound on the renewable approximation remainder).}
 (i) For batch $b=1$ and site $2\le k\le K$, a second-order Taylor expansion of the local score gives
\begin{equation}\label{eq:taylor-score1-b1}
	\nabla\calL_k^{(1)}(\vbeta_0)=\nabla\calL_k^{(1)}(\breve\vbeta^{(0)})+\nabla^2\calL_k^{(1)}(\breve\vbeta^{(0)})(\vbeta_0-\breve\vbeta^{(0)})+\vT_k^{(1)},
\end{equation}
where the remainder term satisfies (by bounded $g'''$)
\begin{equation}\label{eq:taylor-rem-bound1-b1}
	\|\vT_k^{(1)}\|_\infty\le CL_g\max_{i\in\mathcal D_k^{(1)}}\|\vx_i\|_\infty\sum_{i\in\mathcal D_k^{(1)}}\{\vx_i\trans(\breve\vbeta^{(0)}-\vbeta_0)\}^2.
\end{equation}
Summing \eqref{eq:taylor-score1-b1} over $2\le k\le K$, we have
\begin{equation}\label{eq:Rb-bound-final-b1}
	N_K^{(1)}\,\vR^{(1)}=\left\{\sum_{k=2}^K\nabla^2\calL_k^{(1)}(\breve\vbeta^{(0)})-\frac{\sum_{k=2}^Kn_k^{(1)}}{n_1^{(1)}}\nabla^2\calL_1^{(1)}(\breve{\vbeta}^{(0)})\right\}(\breve{\vbeta}^{(0)}-\vbeta_0)-\sum_{k=2}^K\vT_k^{(1)}.
\end{equation}

For the term $\sum_{k=2}^K\vT_k^{(1)}$ in \eqref{eq:Rb-bound-final-b1}, applying Lemma~\ref{lem:xmax} and Lemma~\ref{lem:square} and the fact that $\|\breve\vbeta^{(0)}-\vbeta_0\|_2\lesssim\sqrt{s^*\log(p\vee n_1^{(1)})/n_1^{(1)}}$, $\|\breve\vbeta^{(0)}-\vbeta_0\|_1\lesssim s^*\sqrt{\log(p\vee n_1^{(1)})/n_1^{(1)}}$, we obtain that 
with probability at least $1-\exp\{-C\sum_{k=2}^K n_k^{(1)}\}$.
Plugging the above inequalities into \eqref{eq:taylor-rem-bound1-b1} yields
\begin{equation*}
	\frac{1}{N_K^{(1)}}\left\|\sum_{k=2}^K\vT_k^{(1)}\right\|_\infty\le\frac{Cs^*(N_K^{(1)}-n_1^{(1)})\log(p\vee n_1^{(1)})\sqrt{\log(p\vee N_K^{(1)})}}{n_1^{(1)}N_K^{(1)}}.
\end{equation*}
This remainder is $o(\lambda^{(1)})$ under the scaling condition that $s^*(N_K^{(1)}-n_1^{(1)})\log(p\vee n_1^{(1)})/(n_1^{(1)}\sqrt{N_K^{(1)}})=o(1)$ and the choice $\lambda^{(1)}=C\sqrt{\log(p\vee N_K^{(1)})/N_K^{(1)}}$ for a sufficiently large $C$.

For the term $\{\sum_{k=2}^K\nabla^2\calL_k^{(1)}(\breve\vbeta^{(0)})-\sum_{k=2}^K(n_k^{(1)}/n_1^{(1)})\nabla^2\calL_1^{(1)}(\breve{\vbeta}^{(0)})\}(\breve{\vbeta}^{(0)}-\vbeta_0)$ in \eqref{eq:Rb-bound-final-b1}, we observe that:
\begin{equation*}
	\left\{\frac{1}{\sum_{k=2}^Kn_k^{(1)}}\sum_{k=2}^K\nabla^2\calL_k^{(1)}(\breve\vbeta^{(0)})-\frac{1}{n_1^{(1)}}\nabla^2\calL_1^{(1)}(\breve{\vbeta}^{(0)})\right\}(\breve{\vbeta}^{(0)}-\vbeta_0)\le I_1^{(1)}+I_2^{(1)}+I_3^{(1)}+I_4^{(1)},
\end{equation*}
where $I_\ell^{(1)}, \ell=1,\dots,4$ will be defined and bounded below by applying Lemmas~\ref{lem:xmax}, \ref{lem:square} and \ref{lem:hessian}.
For $I_1^{(1)}$, with probability at least $1-\exp\{-C\log(p\vee\sum_{k=2}^Kn_k^{(1)})\}-\exp\{-C\sum_{k=2}^Kn_k^{(1)}\}$,
\begin{align*}
	I_1^{(1)}:=&\,\left\|\frac{1}{\sum_{k=2}^Kn_k^{(1)}}\sum_{k=2}^K\sum_{i\in\calD_k^{(1)}}\{g''(\vx_i\trans\breve\vbeta^{(0)})-g''(\vx_i\trans\vbeta_0)\}\vx_i\vx_i\trans(\breve{\vbeta}^{(0)}-\vbeta_0)\right\|_\infty\\
	\le&\,CL_g\max_{i\in\cup_{k=2}^K\calD_k^{(1)}}\|\vx_i\|_\infty\,\frac{1}{\sum_{k=2}^Kn_k^{(1)}}\sum_{k=2}^K\sum_{i\in\calD_k^{(1)}}\{\vx_i\trans(\breve\vbeta^{(0)}-\vbeta_0)\}^2\\
	\le&\,\frac{Cs^*\log(p\vee n_1^{(1)})\sqrt{\log(p\vee \sum_{k=2}^Kn_k^{(1)})}}{n_1^{(1)}}.
\end{align*}
For $I_2^{(1)}$, by Lemma \ref{lem:hessian}, with probability at least $1-\exp\{-C\log(p\vee\sum_{k=2}^Kn_k^{(1)})\}$,
\begin{align*}
	I_2^{(1)}:=&\,\left\|\left[\frac{1}{\sum_{k=2}^Kn_k^{(1)}}\sum_{k=2}^K\sum_{i\in\calD_k^{(1)}}g''(\vx_i\trans\vbeta_0)\vx_i\vx_i\trans-\mathbb E\{g''(\vx_i\trans\vbeta_0)\vx_i\vx_i\trans\}\right](\breve\vbeta^{(0)}-\vbeta_0)\right\|_\infty\\
	\le&\,\left\|\frac{1}{\sum_{k=2}^Kn_k^{(1)}}\sum_{k=2}^K\sum_{i\in\calD_k^{(1)}}g''(\vx_i\trans\vbeta_0)\vx_i\vx_i\trans-\mathbb E\{g''(\vx_i\trans\vbeta_0)\vx_i\vx_i\trans\}\right\|_{\max}\|\breve\vbeta^{(0)}-\vbeta_0\|_1\\
	\le&\,Cs^*\sqrt{\frac{\log(p\vee n_1^{(1)})\,\log(p\vee\sum_{k=2}^Kn_k^{(1)})}{n_1^{(1)}\sum_{k=2}^Kn_k^{(1)}}}.
\end{align*}
For $I_3^{(1)}$, with probability at least $1-\exp\{-C\log(p\vee n_1^{(1)})\}$,
\begin{align*}
	I_3^{(1)}:=&\,\left\|\left[\frac{1}{n_1^{(1)}}\sum_{i\in\calD_1^{(1)}}g''(\vx_i\trans\vbeta_0)\vx_i\vx_i\trans-\mathbb E\{g''(\vx_i\trans\vbeta_0)\vx_i\vx_i\trans\}\right](\breve\vbeta^{(0)}-\vbeta_0)\right\|_\infty\\
	\le&\,\left\|\frac{1}{n_1^{(1)}}\sum_{i\in\calD_1^{(1)}}g''(\vx_i\trans\vbeta_0)\vx_i\vx_i\trans-\mathbb E\{g''(\vx_i\trans\vbeta_0)\vx_i\vx_i\trans\}\right\|_{\max}\|\breve\vbeta^{(0)}-\vbeta_0\|_1\\
	\le&\,\frac{Cs^*\log(p\vee n_1^{(1)})}{n_1^{(1)}}.
\end{align*}
For $I_4^{(1)}$, with probability at least $1-\exp\{-C\log(p\vee n_1^{(1)})\}-\exp\{-Cn_1^{(1)}\}$,
\begin{align*}
	I_4^{(1)}:=&\,\left\|\frac{1}{n_1^{(1)}}\sum_{i\in\calD_1^{(1)}}\{g''(\vx_i\trans\vbeta_0)-g''(\vx_i\trans\breve\vbeta^{(0)})\}\vx_i\vx_i\trans(\breve\vbeta^{(0)}-\vbeta_0)\right\|_\infty\\
	\le&\,CL_g\max_{i\in\calD_1^{(1)}}\|\vx_i\|_\infty\,\frac{1}{n_1^{(1)}}\sum_{i\in\calD_1^{(1)}}\{\vx_i\trans(\breve\vbeta^{(0)}-\vbeta_0)\}^2\\
	\le&\,\frac{Cs^*\log^{3/2}(p\vee n_1^{(1)})}{n_1^{(1)}}.
\end{align*}
Thus, we have
\begin{align*}
	&\,\frac{1}{N_K^{(1)}}\left\|\left\{\sum_{k=2}^K\nabla^2\calL_k^{(1)}(\breve\vbeta^{(0)})-\frac{\sum_{k=2}^Kn_k^{(1)}}{n_1^{(1)}}\nabla^2\calL_1^{(1)}(\breve{\vbeta}^{(0)})\right\}(\breve{\vbeta}^{(0)}-\vbeta_0)\right\|_\infty\\
	\le&\,\frac{Cs^*(N_K^{(1)}-n_1^{(1)})\log(p\vee n_1^{(1)})\sqrt{\log(p\vee N_K^{(1)})}}{n_1^{(1)}N_K^{(1)}},
\end{align*}
which is also $o(\lambda^{(1)})$.
Thus, we conclude that $\|\vR^{(1)}\|_\infty=o(\lambda^{(1)})$.

(ii) For batch $2\le b\le B$, site $k\le K$ and batch $j\le b-1$, a second-order Taylor expansion of the local score gives
\begin{equation}\label{eq:taylor-score1}
	\nabla\calL_k^{(j)}(\vbeta_0)=\nabla\calL_k^{(j)}(\breve\vbeta^{(j)})+\nabla^2\calL_k^{(j)}(\breve\vbeta^{(j)})(\vbeta_0-\breve\vbeta^{(j)})+\vT_{k}^{(j)},
\end{equation}
where the remainder term satisfies (by bounded $g'''$)
\begin{equation}\label{eq:taylor-rem-bound1}
	\|\vT_k^{(j)}\|_\infty\le CL_g\,\max_{i\in\mathcal D_k^{(j)}}\|\vx_i\|_\infty\sum_{i\in\mathcal D_k^{(j)}}\{\vx_i\trans(\breve\vbeta^{(j)}-\vbeta_0)\}^2.
\end{equation}
Similarly, for site $2\le k\le K$ and batch $b$,
\begin{equation}\label{eq:taylor-score2}
	\nabla\calL_k^{(b)}(\vbeta_0)=\nabla\calL_k^{(b)}(\breve\vbeta^{(b-1)})+\nabla^2\calL_k^{(b)}(\breve\vbeta^{(b-1)})(\vbeta_0-\breve\vbeta^{(b-1)})+\vT_{k}^{(b)},
\end{equation}
where the remainder term satisfies
\begin{equation}\label{eq:taylor-rem-bound2}
	\|\vT_k^{(b)}\|_\infty\le CL_g\max_{i\in\mathcal D_k^{(b)}}\|\vx_i\|_\infty\sum_{i\in\mathcal D_k^{(b)}}\{\vx_i\trans(\breve\vbeta^{(b-1)}-\vbeta_0)\}^2.
\end{equation}
Summing \eqref{eq:taylor-score1} over $k\le K,j\le b-1$ and \eqref{eq:taylor-score2} over $k\in\{2,\ldots,K\}$, we have
\begin{align}
	N_K^{(b)}\,\vR^{(b)}=&\ \left\{\sum_{k=2}^K\nabla^2\calL_k^{(b)}(\breve\vbeta^{(b-1)})-\frac{\sum_{k=2}^Kn_k^{(b)}}{n_1^{(b)}}\nabla^2\calL_1^{(b)}(\breve{\vbeta}^{(b-1)})\right\}(\breve{\vbeta}^{(b-1)}-\vbeta_0)\notag\\
	&{}+\sum_{j=1}^{b-1}\left\{\sum_{k=1}^K\nabla^2\calL_k^{(j)}(\breve\vbeta^{(j)})-\frac{\sum_{k=1}^Kn_k^{(j)}}{n_1^{(j)}}\nabla^2\calL_1^{(j)}(\breve{\vbeta}^{(j)})\right\}(\breve{\vbeta}^{(j)}-\vbeta_0)\notag\\
	&{}-\sum_{k=2}^K\vT_k^{(b)}-\sum_{k=1}^K\sum_{j=1}^{b-1}\vT_k^{(j)}\label{eq:Rb-bound-final}.
\end{align}

 Applying Lemma~\ref{lem:xmax}, we have $\max_{i\le N_K^{(b)}}\|\vx_i\|_\infty\le C\sigma_x\sqrt{\log(p\vee N_K^{(b)})}$ with probability at least $1-\exp\{-C\log(p\vee N_K^{(b)})\}$.
Using Lemma~\ref{lem:square} and the induction hypothesis that $\|\breve\vbeta^{(j)}-\vbeta_0\|_2\lesssim \sqrt{s^*\log(p\vee N_K^{(j)})/N_K^{(j)}}$, $\|\breve\vbeta^{(j)}-\vbeta_0\|_1\lesssim s^*\sqrt{\log(p\vee N_K^{(j)})/N_K^{(j)}}$ for $j\le b-1$, we obtain uniformly for $k\le K$, $j\le b$ and $l\le b-1$,
\begin{equation*}
	\sum_{i\in\calD_k^{(j)}}\{\vx_i\trans(\breve\vbeta^{(l)}-\vbeta_0)\}^2\le Cn_k^{(j)}\,\|\breve\vbeta^{(l)}-\vbeta_0\|_2^2\le\frac{Cs^*n_k^{(j)}\log(p\vee N_K^{(l)})}{N_K^{(l)}},
\end{equation*}
with probability at least $1-\sum_{j=1}^{b-1}\exp\{-C\sum_{k=1}^Kn_k^{(j)}\}-\exp\{-C\sum_{k=2}^Kn_k^{(b)}\}$,
\begin{align*}
	&\,\frac{1}{N_K^{(b)}}\left\|\sum_{k=2}^K\vT_k^{(b)}+\sum_{k=1}^K\sum_{j=1}^{b-1}\vT_k^{(j)}\right\|_\infty\\
	\le&\,C L_g\max_{i\le N_K^{(b)}}\|\vx_i\|_\infty\left\{\frac{s^*(\sum_{k=2}^Kn_k^{(b)})\log(p\vee N_K^{(b-1)})}{N_K^{(b-1)}N_K^{(b)}}+\frac{s^*}{N_K^{(b)}}\sum_{k=1}^K\sum_{j=1}^{b-1}\frac{n_k^{(j)}\log(p\vee N_K^{(j)})}{N_K^{(j)}}\right\}\\
	\le&\,\frac{CL_gs^*\log^{3/2}(p\vee N_K^{(b)})}{N_K^{(b)}}\left\{\frac{N_K^{(b)}-N_K^{(b-1)}-n_1^{(b)}}{N_K^{(b-1)}}+1+\log\Big(\frac{N_K^{(b-1)}}{N_K^{(1)}}\Big)\right\},
\end{align*}
where the last inequality is implied by Lemma 2 of \cite{luo2023online}:
\begin{equation*}
	\sum_{k=1}^K\sum_{j=1}^{b-1}\frac{n_k^{(j)}}{N_K^{(j)}}=\sum_{j=1}^{b-1}\frac{\sum_{k=1}^Kn_k^{(j)}}{N_K^{(j)}}\le1+\log\Big(\frac{N_K^{(b-1)}}{N_K^{(1)}}\Big).
\end{equation*}
This remainder is $o(\lambda^{(b)})$ under the scaling condition in \eqref{eq:scale2} and the choice $\lambda^{(b)}=C\sqrt{\log(p\vee N_K^{(b)})/N_K^{(b)}}$ for a sufficiently large $C$.

To control the first line of \eqref{eq:Rb-bound-final}, we will apply Lemmas~\ref{lem:xmax}, \ref{lem:square} and \ref{lem:hessian} below.
Notice that
\begin{align*}
	&\,\left\|\left\{\frac{1}{\sum_{k=2}^Kn_k^{(b)}}\sum_{k=2}^K\nabla^2\calL_k^{(b)}(\breve\vbeta^{(b-1)})-\frac{1}{n_1^{(b)}}\nabla^2\calL_1^{(b)}(\breve\vbeta^{(b-1)})\right\}(\breve\vbeta^{(b-1)}-\vbeta_0)\right\|_\infty\\
	\le&\,I_1^{(b)}+I_2^{(b)}+I_3^{(b)}+I_4^{(b)},
\end{align*}
where $I_\ell^{(b)}, \ell=1,\ldots,4$ will be defined and bounded below.
For $I_1^{(b)}$, with probability at least $1-\exp\{-C\log(p\vee\sum_{k=2}^Kn_k^{(b)})\}-\exp\{-C\sum_{k=2}^Kn_k^{(b)}\}$,
\begin{align*}
	I_1^{(b)}:=&\,\left\|\frac{1}{\sum_{k=2}^Kn_k^{(b)}}\sum_{k=2}^K\sum_{i\in\calD_k^{(b)}}\{g''(\vx_i\trans\breve\vbeta^{(b-1)})-g''(\vx_i\trans\vbeta_0)\}\vx_i\vx_i\trans(\breve\vbeta^{(b-1)}-\vbeta_0)\right\|_\infty\\
	\le&\,CL_g\max_{i\in\cup_{k=2}^K\calD_k^{(b)}}\|\vx_i\|_\infty\,\frac{1}{\sum_{k=2}^Kn_k^{(b)}}\sum_{k=2}^K\sum_{i\in\calD_k^{(b)}}\{\vx_i\trans(\breve\vbeta^{(b-1)}-\vbeta_0)\}^2\\
	\le&\,\frac{Cs^*\log(p\vee N_K^{(b-1)})\sqrt{\log(p\vee \sum_{k=2}^Kn_k^{(b)})}}{N_K^{(b-1)}}.
\end{align*}
For $I_2^{(b)}$, with probability at least $1-\exp\{-C\log(p\vee\sum_{k=2}^Kn_k^{(b)})\}$,
\begin{align*}
	I_2^{(b)}:=&\,\left\|\left[\frac{1}{\sum_{k=2}^Kn_k^{(b)}}\sum_{k=2}^K\sum_{i\in\calD_k^{(b)}}g''(\vx_i\trans\vbeta_0)\vx_i\vx_i\trans-\mathbb E\{g''(\vx_i\trans\vbeta_0)\vx_i\vx_i\trans\}\right](\breve\vbeta^{(b-1)}-\vbeta_0)\right\|_\infty\\
	\le&\,\left\|\frac{1}{\sum_{k=2}^Kn_k^{(b)}}\sum_{k=2}^K\sum_{i\in\calD_k^{(b)}}g''(\vx_i\trans\vbeta_0)\vx_i\vx_i\trans-\mathbb E\{g''(\vx_i\trans\vbeta_0)\vx_i\vx_i\trans\}\right\|_{\max}\|(\breve\vbeta^{(b-1)}-\vbeta_0)\|_1\\
	\le&\,Cs^*\sqrt{\frac{\log(p\vee N_K^{(b-1)})\,\log(p\vee\sum_{k=2}^Kn_k^{(b)})}{N_K^{(b-1)}\sum_{k=2}^Kn_k^{(b)}}}.
\end{align*}
For $I_3^{(b)}$, with probability at least $1-\exp\{-C\log(p\vee n_1^{(b)})\}$,
\begin{align*}
	I_3^{(b)}:=&\,\left\|\left[\frac{1}{n_1^{(b)}}\sum_{i\in\calD_1^{(b)}}g''(\vx_i\trans\vbeta_0)\vx_i\vx_i\trans-\mathbb E\{g''(\vx_i\trans\vbeta_0)\vx_i\vx_i\trans\}\right](\breve\vbeta^{(b-1)}-\vbeta_0)\right\|_\infty\\
	\le&\,\left\|\frac{1}{n_1^{(b)}}\sum_{i\in\calD_1^{(b)}}g''(\vx_i\trans\vbeta_0)\vx_i\vx_i\trans-\mathbb E\{g''(\vx_i\trans\vbeta_0)\vx_i\vx_i\trans\}\right\|_{\max}\|(\breve\vbeta^{(b-1)}-\vbeta_0)\|_1\\
	\le&\, Cs^*\sqrt{\frac{\log(p\vee n_1^{(b)})\,\log(p\vee N_K^{(b-1)})}{n_1^{(b)}N_K^{(b-1)}}}.
\end{align*}
For $I_4^{(b)}$, with probability at least $1-\exp\{-C\log(p\vee n_1^{(b)})\}-\exp\{-Cn_1^{(b)}\}$,
\begin{align*}
	I_4^{(b)}:=&\,\left\|\frac{1}{n_1^{(b)}}\sum_{i\in\calD_1^{(b)}}\{g''(\vx_i\trans\vbeta_0)-g''(\vx_i\trans\breve\vbeta^{(b-1)})\}\vx_i\vx_i\trans(\breve\vbeta^{(b-1)}-\vbeta_0)\right\|_\infty\\
	\le&\,CL_g\max_{i\in\calD_1^{(b)}}\|\vx_i\|_\infty\,\frac{1}{n_1^{(b)}}\sum_{i\in\calD_1^{(b)}}\{\vx_i\trans(\breve\vbeta^{(b-1)}-\vbeta_0)\}^2\\
	\le&\,\frac{Cs^*\log(p\vee N_K^{(b-1)})\sqrt{\log(p\vee n_1^{(b)})}}{N_K^{(b-1)}}.
\end{align*}
Thus, we have
\begin{align*}
	&\,\frac{1}{N_K^{(b)}}\left\|\left\{\sum_{k=2}^K\nabla^2\calL_k^{(b)}(\breve\vbeta^{(b-1)})-\frac{\sum_{k=2}^Kn_k^{(b)}}{n_1^{(b)}}\nabla^2\calL_1^{(b)}(\breve{\vbeta}^{(b-1)})\right\}(\breve{\vbeta}^{(b-1)}-\vbeta_0)\right\|_\infty\\
	\le&\,\frac{C\sum_{k=2}^Kn_k^{(b)}}{N_K^{(b)}}\cdot\frac{s^*\log(p\vee N_K^{(b-1)})\sqrt{\log(p\vee\sum_{k=1}^Kn_k^{(b)})}}{N_K^{(b-1)}}\\
	&\,+\frac{C\sum_{k=2}^Kn_k^{(b)}}{N_K^{(b)}}\cdot s^*\sqrt{\frac{\log(p\vee N_K^{(b-1)})}{N_K^{(b-1)}}}\left(\sqrt{\frac{\log(p\vee n_1^{(b)})}{n_1^{(b)}}}+\sqrt{\frac{\log(p\vee\sum_{k=2}^Kn_k^{(b)})}{\sum_{k=2}^Kn_k^{(b)}}}\right).
\end{align*}
Thus, the first line of \eqref{eq:Rb-bound-final} is $o(\lambda^{(b)})$ under the scaling condition \eqref{eq:scale2}.

 To control the second line of \eqref{eq:Rb-bound-final}, we similarly have
\begin{align*}
	&\,\left\|\left\{\frac{1}{\sum_{k=1}^Kn_k^{(j)}}\sum_{k=1}^K\nabla^2\calL_k^{(j)}(\breve\vbeta^{(j)})-\frac{1}{n_1^{(j)}}\nabla^2\calL_1^{(j)}(\breve\vbeta^{(j)})\right\}(\breve\vbeta^{(j)}-\vbeta_0)\right\|_\infty\\
	\le&\,II_1^{(j)}+II_2^{(j)}+II_3^{(j)}+II_4^{(j)}.
\end{align*}
For $II_1^{(j)}$, with probability at least $1-\exp\{-C\log(p\vee N_K^{(b-1)})\}-\sum_{j=1}^{b-1}\exp\{-C\log(p\vee\sum_{k=1}^Kn_k^{(j)})\}-\sum_{j=1}^{b-1}\exp\{-C\sum_{k=1}^Kn_k^{(j)})\}$,
\begin{align*}
	II_1^{(j)}:=&\,\left\|\frac{1}{\sum_{k=1}^Kn_k^{(j)}}\sum_{k=1}^K\sum_{i\in\calD_k^{(j)}}\{g''(\vx_i\trans\breve\vbeta^{(j)})-g''(\vx_i\trans\vbeta_0)\}\vx_i\vx_i\trans(\breve\vbeta^{(j)}-\vbeta_0)\right\|_\infty\\
	\le&\,CL_g\max_{i\le N_K^{(b-1)}}\|\vx_i\|_\infty\,\frac{1}{\sum_{k=1}^Kn_k^{(j)}}\sum_{k=1}^K\sum_{i\in\calD_k^{(j)}}\{\vx_i\trans(\breve\vbeta^{(j)}-\vbeta_0)\}^2\\
	\le&\,\frac{Cs^*\log(p\vee N_K^{(j)})\sqrt{\log(p\vee N_K^{(b-1)})}}{N_K^{(j)}},
\end{align*}
uniformly in $j\le b-1$.
For $II_2^{(j)}$, with probability at least $1-\sum_{j=1}^{b-1}\exp\{-C\log(p\vee\sum_{k=1}^Kn_k^{(j)})\}$,
\begin{align*}
	II_2^{(j)}:=&\,\left\|\left[\frac{1}{\sum_{k=1}^Kn_k^{(j)}}\sum_{k=1}^K\sum_{i\in\calD_k^{(j)}}g''(\vx_i\trans\vbeta_0)\vx_i\vx_i\trans-\mathbb E\{g''(\vx_i\trans\vbeta_0)\vx_i\vx_i\trans\}\right](\breve\vbeta^{(j)}-\vbeta_0)\right\|_\infty\\
	\le&\,\left\|\frac{1}{\sum_{k=1}^Kn_k^{(j)}}\sum_{k=1}^K\sum_{i\in\calD_k^{(j)}}g''(\vx_i\trans\vbeta_0)\vx_i\vx_i\trans-\mathbb E\{g''(\vx_i\trans\vbeta_0)\vx_i\vx_i\trans\}\right\|_{\max}\|\breve\vbeta^{(j)}-\vbeta_0\|_1\\
	\le&\,Cs^*\sqrt{\frac{\log(p\vee N_K^{(j)})\,\log(p\vee\sum_{k=1}^Kn_k^{(j)})}{N_K^{(j)}\sum_{k=1}^Kn_k^{(j)}}},
\end{align*}
uniformly in $j\le b-1$.
For $II_3^{(j)}$, with probability at least $1-\sum_{j=1}^{b-1}\exp\{-C\log(p\vee n_1^{(j)})\}$,
\begin{align*}
	II_3^{(j)}:=&\,\left\|\left\{\frac{1}{n_1^{(j)}}\sum_{i\in\calD_1^{(j)}}g''(\vx_i\trans\vbeta_0)\vx_i\vx_i\trans-\mathbb E\{g''(\vx_i\trans\vbeta_0)\vx_i\vx_i\trans\}\right\}(\breve\vbeta^{(j)}-\vbeta_0)\right\|_\infty\\
	\le&\,\left\|\frac{1}{n_1^{(j)}}\sum_{i\in\calD_1^{(j)}}g''(\vx_i\trans\vbeta_0)\vx_i\vx_i\trans-\mathbb E\{g''(\vx_i\trans\vbeta_0)\vx_i\vx_i\trans\}\right\|_{\max}\|\breve\vbeta^{(j)}-\vbeta_0\|_1\\
	\le&\,Cs^*\sqrt{\frac{\log(p\vee n_1^{(j)})\,\log(p\vee N_K^{(j)})}{n_1^{(j)}N_K^{(j)}}},
\end{align*}
uniformly in $j\le b-1$.
For $II_4^{(j)}$, with probability at least $1-\sum_{j=1}^{b-1}\exp\{-C\log(p\vee n_1^{(j)})\}-\sum_{j=1}^{b-1}\exp\{-Cn_1^{(j)}\}$,
\begin{align*}
	II_4^{(j)}:=&\,\left\|\frac{1}{n_1^{(j)}}\sum_{i\in\calD_1^{(j)}}\{g''(\vx_i\trans\breve\vbeta^{(j)})-g''(\vx_i\trans\vbeta_0)\}\vx_i\vx_i\trans(\breve\vbeta^{(j)}-\vbeta_0)\right\|_\infty\\
	\le&\,CL_g\max_{i\le n_1^{(j)}}\|\vx_i\|_\infty\frac{1}{n_1^{(j)}}\,\sum_{i\in\calD_1^{(j)}}\{\vx_i\trans(\breve\vbeta^{(j)}-\vbeta_0)\}^2\\
	\le&\,\frac{Cs^*\log(p\vee N_K^{(j)})\sqrt{\log(p\vee n_1^{(j)})}}{N_K^{(j)}}.
\end{align*}
Thus, we conclude that
\begin{align*}
	&\,\frac{1}{N_K^{(b)}}\left\|\sum_{j=1}^{b-1}\left\{\sum_{k=1}^K\nabla^2\calL_k^{(j)}(\breve\vbeta^{(j)})-\frac{\sum_{k=1}^Kn_k^{(j)}}{n_1^{(j)}}\nabla^2\calL_1^{(j)}(\breve\vbeta^{(j)})\right\}(\breve\vbeta^{(j)}-\vbeta_0)\right\|_\infty\\
	\le&\,\frac{Cs^*}{N_K^{(b)}}\sum_{j=1}^{b-1}\sum_{k=1}^Kn_k^{(j)}\left\{\sqrt{\frac{\log(p\vee n_1^{(j)})\,\log(p\vee N_K^{(j)})}{n_1^{(j)}N_K^{(j)}}}+\frac{\log(p\vee N_K^{(j)})\sqrt{\log(p\vee N_K^{(b-1)})}}{N_K^{(j)}}\right\}\\
	\le&\,\frac{Cs^*\sqrt{\log(p\vee N_K^{(b)})}}{N_K^{(b)}}\cdot\max_{1\le j\le b-1}\sqrt{\frac{\log(p\vee n_1^{(j)})}{n_1^{(j)}}}\cdot\sum_{j=1}^{b-1}\sum_{k=1}^K\frac{n_k^{(j)}}{\sqrt{N_K^{(j)}}}\\
	&\,+\frac{Cs^*\log^{3/2}(p\vee N_K^{(b)})}{N_K^{(b)}}\sum_{j=1}^{b-1}\sum_{k=1}^K\frac{n_k^{(j)}}{N_K^{(j)}}\\
	\le&\,Cs^*\sqrt{\frac{\log(p\vee N_K^{(b)})}{N_K^{(b)}}}\max_{1\le j\le b-1}\sqrt{\frac{\log(p\vee n_1^{(j)})}{n_1^{(j)}}}+\frac{Cs^*\log^{3/2}(p\vee N_K^{(b)})(1+\log(N_K^{(b-1)}/N_K^{(1)}))}{N_K^{(b)}},
\end{align*}
where the last inequality is implied by the fact that $\sum_{j=1}^{b-1}\sum_{k=1}^Kn_k^{(j)}/N_K^{(j)}\le1+\log(N_K^{(b-1)}/N_K^{(1)})$ and $\sum_{j=1}^{b-1}\sum_{k=1}^Kn_k^{(j)}/\sqrt{N_K^{(j)}}\le2\sqrt{N_K^{(b-1)}}$ in Lemma 2 of \cite{luo2023online}.
Thus, the second line of \eqref{eq:Rb-bound-final} is $o(\lambda^{(b)})$ under the scaling condition \eqref{eq:scale2}. 
Therefore, $\|\vR^{(b)}\|_\infty$ is bounded by $o(\lambda^{(b)})$.

For a sufficiently large constant $C$, combining \eqref{eq:score-decomp-b1}--\eqref{eq:score-emp} with the bound on $\|\vR^{(b)}\|_\infty$ implies \eqref{eq:dist-score}.

\noindent
\textbf{Step 4 (conclude the lasso rates).}
Using similar arguments as in the proof of Theorem~\ref{thm:online-lasso} (Step 4), we obtain
\begin{equation*}
	\kappa_I\|\vDelta_{\zeta_b}^{(b)}\|_2^2\le D^{(b)}(\vbeta_0+\vDelta_{\zeta_b}^{(b)},\vbeta_0)\le\frac{3}{2}\lambda^{(b)}\,\|\vDelta_{\zeta_b}^{(b)}\|_1\le\frac{3}{2}\lambda^{(b)}\cdot4\sqrt{s^*}\,\|\vDelta_{\zeta_b}^{(b)}\|_2.
\end{equation*}
Thus $\|\vDelta_{\zeta_b}^{(b)}\|_2\le C\sqrt{s^*}\,\lambda^{(b)}$.
The $\ell_1$ bound follows $\|\vDelta_{\zeta_b}^{(b)}\|_1\le 4\sqrt{s^*}\,\|\vDelta_{\zeta_b}^{(b)}\|_2\le Cs^*\lambda^{(b)}$.
Using the contradiction argument as in the proof of Theorem~\ref{thm:online-lasso} (Step 5), the error bounds also hold for $\|\vDelta^{(b)}\|$.
\hfill $\Box$

\end{document}

%% file: self-define-Heng.tex
\newcommand{\va}{{\bf a}}

\newcommand{\vu}{{\bf u}}

\newcommand{\vx}{{\bf x}}

\newcommand{\vA}{{\bf A}}

\newcommand{\vI}{{\bf I}}

\newcommand{\vR}{{\bf R}}
\newcommand{\vT}{{\bf T}}

\newcommand{\vX}{{\bf X}}

\newcommand{\vDelta}{\mbox{\boldmath $\Delta$}}

\newcommand{\vSigma}{\mbox{\boldmath $\Sigma$}}

\newcommand{\vbeta}{\mbox{\boldmath $\beta$}}

\newcommand{\vzeta}{\mbox{\boldmath $\zeta$}}

\newcommand{\vxi}{\mbox{\boldmath $\xi$}}
\newcommand{\ep}{\epsilon}

\newcommand{\bay}{\begin{array}}
\newcommand{\eay}{\end{array}}

\newcommand{\bqa}{\begin{eqnarray*}}
\newcommand{\eqa}{\end{eqnarray*}}
\newcommand{\bqan}{\begin{eqnarray}}
\newcommand{\eqan}{\end{eqnarray}}
\newcommand{\bqt}{\begin{quote}}
\newcommand{\eqt}{\end{quote}}
\newcommand{\bt}{\begin{tabbing}}
\newcommand{\et}{\end{tabbing}}
\newcommand{\bit}{\begin{itemize}}
\newcommand{\eit}{\end{itemize}}
\newcommand{\ben}{\begin{enumerate}}
\newcommand{\een}{\end{enumerate}}
\newcommand{\beq}{\begin{equation}}
\newcommand{\eeq}{\end{equation}}
\newcommand{\bdefi}{\begin{definition}}
\newcommand{\edefi}{\end{definition}}
\newcommand{\bpro}{\begin{proposition}}
\newcommand{\epro}{\end{proposition}}
\newcommand{\bco}{\begin{corollary}}
\newcommand{\eco}{\end{corollary}}
\newcommand{\bdes}{\begin{description}}
\newcommand{\edes}{\end{description}}

%% file: self-define-HL.tex
\def\log{\hbox{log}}

\def\boxit#1{\vbox{\hrule\hbox{\vrule\kern6pt
          \vbox{\kern6pt#1\kern6pt}\kern6pt\vrule}\hrule}}

\def\bse{\begin{eqnarray*}}
\def\ese{\end{eqnarray*}}
\def\be{\begin{eqnarray}}
\def\ee{\end{eqnarray}}
\def\bq{\begin{equation}}
\def\eq{\end{equation}}

\def\trans{^{\top}}

\newtheorem{proposition}{Proposition}

\newcommand{\blem}{\begin{lemma}}
\newcommand{\elem}{\end{lemma}}
\newcommand{\bthe}{\begin{theorem}}
\newcommand{\ethe}{\end{theorem}}

\newtheorem{definition}{Definition}[section]
\newtheorem{lemma}[definition]{Lemma}
\newtheorem{theorem}[definition]{Theorem}

\def\delete#1{\iffalse #1 \fi}

\def\bse{\begin{eqnarray*}}
\def\ese{\end{eqnarray*}}
\def\bee{\begin{enumerate}}
\def\eee{\end{enumerate}}
\def\bqe{\begin{eqnarray}}
\def\eqe{\end{eqnarray}}
\def\bed{\begin{description}}
\def\eed{\end{description}}
\def\bei{\begin{itemize}}
\def\eei{\end{itemize}}

\def\pmb#1{\setbox0=\hbox{#1}%
    \kern-.025em\copy0\kern-\wd0
    \kern.05em\copy0\kern-\wd0
    \kern-.025em\raise.0433em\box0 }
\def\pmbh#1#2{\setbox0=\hbox{#1}%
    \setbox1=\hbox{#2}%
    \kern-.025em\copy0\kern-\wd0
    \kern.05em\copy1\kern-\wd0
    \kern-.025em\raise.0433em\box0 }

\def\frac#1#2{{#1\over#2}}

\def\boxit#1{\vbox{\hrule\hbox{\vrule\kern6pt
   \vbox{\kern6pt#1\kern6pt}\kern6pt\vrule}\hrule}}

\def\listing#1{\vskip 4mm\begin{verbatim}\input#1 \vskip 4mm}
\def\thick#1{\hbox{\rlap{$#1$}\kern0.25pt\rlap{$#1$}\kern0.25pt$#1$}}

\def\pmbh{{\pmb h}}

\def\calA{{\cal A}}

\def\calD{{\cal D}}
\def\calE{{\cal E}}

\def\calL{{\cal L}}

\def\calS{{\cal S}}

\renewcommand\today{\ifcase\month\or
   Jan\or Feb\or Mar\or Apr\or May\or
   Jun\or Jul\or Aug\or Sep\or Oct\or Nov\or
   Dec\fi
   \space\number\day, \number\year}
